\newif\ifarxivpreprint
\title{Turbocharging Gaussian Process Inference with Approximate Sketch-and-Project}
\author{%
  Pratik Rathore \\
  \stanford \\
  \texttt{pratikr@stanford.edu} \\
  \And
  Zachary Frangella \\
  \stanford \\
  \texttt{zfran@stanford.edu} \\
  \And
  Sachin Garg \\
  \umich \\
  \texttt{sachg@umich.edu} \\
  \And
  Shaghayegh Fazliani \\
  \stanford \\
  \texttt{fazliani@stanford.edu} \\
  \And
  \michal{} \derezinski{} \\
  \umich \\
  \texttt{derezin@umich.edu} \\
  \And
  Madeleine Udell \\
  \stanford \\
  \texttt{udell@stanford.edu} \\
}
\begin{document}

\maketitle

\begin{abstract}
Gaussian processes (GPs) play an essential role in biostatistics, scientific machine learning, and Bayesian optimization for their ability to provide probabilistic predictions and model uncertainty.
However, GP inference struggles to scale to large datasets (which are common in modern applications), since it requires the solution of a linear system whose size scales quadratically with the number of samples in the dataset.
We propose an approximate, distributed, accelerated sketch-and-project algorithm (\adasap{}) for solving these linear systems, which improves scalability.
We use the theory of determinantal point processes to show that the posterior mean induced by sketch-and-project rapidly converges to the true posterior mean. 
In particular, this yields the first efficient, condition number-free algorithm for estimating the posterior mean along the top spectral basis functions, showing that our approach is principled for GP inference.
\adasap{} outperforms state-of-the-art solvers based on conjugate gradient and coordinate descent across several benchmark datasets and a large-scale Bayesian optimization task.
Moreover, \adasap{} scales to a dataset with $> 3 \cdot 10^8$ samples, a feat which has not been accomplished in the literature.

\end{abstract}

\section{Introduction}
Gaussian processes (GPs) are a mainstay of modern machine learning and scientific computing, due to their ability to provide probabilistic predictions and handle uncertainty quantification.
Indeed, GPs arise in applications spanning Bayesian optimization \citep{hernandezlobato2017parallel}, genetics \citep{mcdowell2018clustering}, health care analytics \citep{cheng2020sparse}, materials science \citep{frazier2016bayesian}, and partial differential equations \citep{chen2025sparse}. 

GP inference for a dataset with $n$ samples requires the solution of a dense $n \times n$ linear system, which is challenging to solve at large scale.
Direct methods like Cholesky decomposition require $\bigO(n^3)$ computation, limiting them to $n \sim 10^4$.
Consequently, two main approaches have arisen for large-scale inference: 
(i) exact inference based on iterative methods for solving linear systems \citep{wang2019exact,lin2023sampling,lin2024stochastic}
and (ii) approximate inference based on inducing points and variational methods \citep{titsias2009variational,hensman2013gaussian}.
The state-of-the-art approaches for large-scale inference are preconditioned conjugate gradient (\pcg{}) \citep{wang2019exact} and stochastic dual descent (SDD) \citep{lin2024stochastic}. 
\pcg{} offers strong convergence guarantees and good performance on ill-conditioned problems, but is slow when $n \sim 10^{6}$.
In contrast, SDD scales to $n \gg 10^6$, but lacks strong theoretical guarantees, can slow down in the face of ill-conditioning, and introduces a stepsize parameter which can be challenging to tune.
Inducing points and variational methods \citep{titsias2009variational,hensman2013gaussian} scale to large datasets, but are often outperformed by exact inference \citep{wang2019exact,lin2023sampling,lin2024stochastic}.
Altogether, the current state of algorithms for large-scale GP inference is unsatisfactory, as practitioners have to trade-off between quality of inference, robustness to ill-conditioning, ease of setting hyperparameters, and scalability.

To address this gap in the literature, we introduce the \textbf{A}pproximate \textbf{D}istributed \textbf{A}ccelerated \textbf{S}ketch-\textbf{a}nd-\textbf{P}roject (\adasap{}) algorithm.
\adasap{} is rooted in the sketch-and-project framework of \cite{gower2015randomized}, and obtains the robustness to ill-conditioning of \pcg{} and the scalability of \sdd{}, while 
having reliable, default hyperparameters that work well in practice.
For example, \adasap{} dramatically outperforms tuned \sdd{} and \pcg{} on a dataset with $n > 10^6$ samples (\cref{fig:intro_houseelec}). 

Our contributions are as follows:
\begin{enumerate}
    \item We develop \adasap{} for large-scale GP inference. 
    \adasap{} uses approximate preconditioning and acceleration to address ill-conditioning, and uses distributed computing to improve the speed of bottleneck operations.
    \adasap{} also comes with effective default hyperparameters that work out of the box. 
    \item Using the theory of determinantal point processes, we show that \sapfull-style methods converge faster than stochastic first-order methods like \sdd{} in the presence of ill-conditioning. In particular, we give a first-of-its-kind condition number-free time complexity bound for estimating the GP posterior mean to moderate accuracy, explaining the excellent test error performance of \adasap{}.
    \item We empirically verify that \adasap{}, with its default hyperparameters, outperforms state-of-the-art competitors on benchmark large-scale GP inference tasks, and is capable of scaling to a dataset with $n > 3 \cdot 10^8$ samples.
    \item We show \adasap{} yields the best performance on a large-scale Bayesian optimization task from \cite{lin2023sampling}.
\end{enumerate}



\begin{figure}
    \centering
    \includegraphics[width=0.9\linewidth]{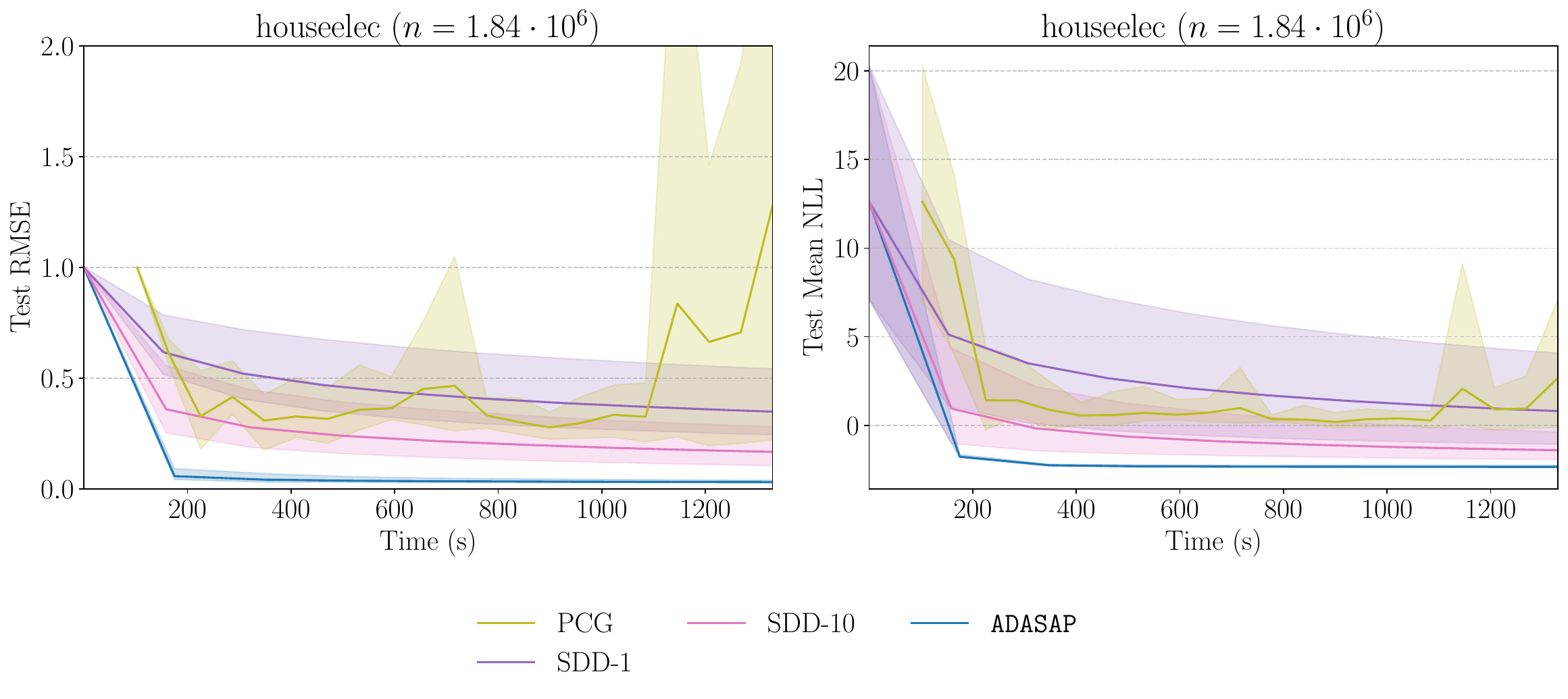}
    \caption{\adasap{} attains lower root mean square error (RMSE) and mean negative log likelihood (NLL) than start-of-the-art methods \sdd{} \citep{lin2024stochastic} and PCG on the houseelec dataset.
    \sdd-1 and \sdd-10 correspond to two particular stepsize selections for the \sdd{} method.
    The solid lines indicate the mean performance of each method, while the shaded regions indicate the range between worst and best performance of each method over five random splits of the data.
    }
    \label{fig:intro_houseelec}
\end{figure}
\section{GP Regression and Inference}
\label{sec:gp_fundamentals}
Let $\Xc$ be a set. 
A random function $f \colon \Xc \rightarrow \R$ is called a \textit{Gaussian process} if for any finite subset $\{x_1, x_2, \ldots, x_n\} \subset \Xc$ of points (where $n$ is any positive integer), the random vector $(f(x_1), f(x_2), \ldots, f(x_n))$ follows a multivariate Gaussian distribution \citep{rasmussen2005gaussian}.
A Gaussian process is typically denoted as $f \sim \GPc(m, k)$, where $m \colon \Xc \rightarrow \mathbb{R}$ is the mean function and $k \colon \Xc \times \Xc \rightarrow \mathbb{R}$ is the covariance function (or kernel). 
Throughout this paper, the kernel function $k$ follows broadcasting conventions, where operations between individual points and sets of points produce vectors or matrices of kernel evaluations as appropriate.

Suppose we have a Gaussian process prior $f \sim \GPc(m, k)$.
Given observations $(X, y) \in \R^{n \times d} \times \R^n$ and likelihood variance $\lambda > 0$, we would like to perform inference using the GP posterior, i.e., (i) sampling from the posterior and (ii) computing the posterior mean. 

For conciseness, define $K \coloneqq k(X, X) \in \R^{n \times n}$.
Then a sample from the GP posterior is a random function $f_n \sim \GPc(m_n, k_n)$ \citep{rasmussen2005gaussian}, where
\begin{align}
\label{eq:post_eqns}
    & m_n(\cdot) = m(\cdot) + k(\cdot, X)(K + \lambda I)^{-1} y, \\
    & k_n(\cdot, \cdot) = k(\cdot,\cdot) - k(\cdot, X)(K + \lambda I)^{-1}k(X, \cdot).
\end{align}

The standard approach for drawing posterior samples at locations $X_*$ is to perform a linear transformation of standard Gaussian random variables $\zeta \sim \Nc(0, I)$ \citep{wilson2020efficiently,wilson2021pathwise}:
\begin{equation}
    \label{eq:standard_posterior_sample}
    f_n(\Xinf) = m_n(\Xinf) + k_n(\Xinf, \Xinf)^{1/2} \zeta.  
\end{equation}
Unfortunately, the approach in \eqref{eq:standard_posterior_sample} does not scale when $|\Xinf|$ is large, since the matrix square root $k_n(\Xinf, \Xinf)^{1/2}$ requires $O(|\Xinf|^3)$ computation.

\subsection{Pathwise conditioning}
\label{subsec:path_cond}
To address the scaling challenges in \eqref{eq:standard_posterior_sample}, \cite{wilson2020efficiently, wilson2021pathwise} develop pathwise conditioning.
Pathwise conditioning rewrites \eqref{eq:standard_posterior_sample} as
\begin{equation}
   \label{eq:pathwise_conditoning} 
   f_n(\Xinf) = f(\Xinf) + k_n(\Xinf, X)(K+\lambda I)^{-1}(y - f(X_n) - \zeta), 
   \quad \zeta
   \sim \Nc(0,\lambda I), \quad f \sim \GPc(0,k). 
\end{equation}
Equation \eqref{eq:pathwise_conditoning} enables scalable sampling from the posterior as $f$ can be approximated using random features \citep{rahimi2007random, rudi2017generalization} for $k$, which costs $\bigO(q|\Xinf|)$, where $q$ is the number of random features. 
This yields a significant reduction in cost from the standard posterior sampling scheme in \eqref{eq:standard_posterior_sample}: by leveraging pathwise conditioning, posterior sampling can be performed by generating $f$ via random features \citep{wilson2020efficiently,wilson2021pathwise} and solving $(K+\lambda I)^{-1}(y - f(X_n) - \zeta)$ using an iterative method like \pcg{} or \sdd{}.
Our experiments (\cref{sec:experiments}) use pathwise conditioning to sample from the posterior.


\subsection{Drawbacks of state-of-the-art exact inference methods}
The current state-of-the-art for large-scale posterior sampling is \sdd{} \citep{lin2024stochastic}, a stochastic first-order algorithm combining coordinate descent \citep{richtarik2014iteration,qu2016coordinate} with momentum and geometric averaging.
Stochastic methods like \sdd{} can better handle single (or lower) precision than \pcg{} \citep{wu2024large, rathore2025have}, which is essential to maximizing GPU performance and reducing storage costs \citep{abdelfattah2021survey}.
While \sdd{} overcomes the scalability challenges associated with \pcg{}, it faces challenges due to the ill-conditioning of $K + \lambda I$, which determines its worst-case convergence rate \citep{nemirovskij1983problem}.
Therefore, reliable large-scale inference requires a method that has cheaper per-iteration costs than \pcg{} and can handle the ill-conditioning of $K + \lambda I$.
However, to the best of our knowledge, no existing method effectively tackles both issues: SGD and \sdd{} offer speed and scalability but suffer from ill-conditioning, while \pcg{} offers robustness to ill-conditioning but struggles to scale beyond $n > 10^6$.
In \cref{sec:sap} we propose \sap{}, which is robust to ill-conditioning; in \cref{sec:adasap}, we develop \adasap{}, an extension of \sap{} that scales to large datasets.

\section{\Sapfull{} for GP Inference}
\label{sec:sap}
\cref{sec:gp_fundamentals} shows the main challenge of large-scale GP inference is the need to solve a large linear system with the ill-conditioned matrix $K+\lambda I$, which cannot be done satisfactorily with \pcg{} or \sdd{}. 
We introduce the \sap{} algorithm, which effectively addresses the conditioning challenges of GP inference.
However, \sap{} alone does not address the scalability challenges of GP inference, motivating the development of \adasap{} in \cref{sec:adasap}.

\subsection{Sketch-and-project}
\begin{algorithm}[t]
\caption{\sap{} for $\Klambd W = Y$}
    \label{alg:sap}
    \begin{algorithmic}
        \Require blocksize $b$, distribution of row indices $\D$ over $[n]^{b}$, number of iterations $T_{\max}$, initialization $\iterw{0}{} \in \R^{n \times \nrhs}$, averaging boolean tail\_average
        \State
        \For{$t = 0, 1, \ldots, T_{\max} - 1$}
        \State Sample row indices $\B$ of size $b$ from $[n]$ according to $\D$
        \State $\iterd{t}{} \gets \Ib^T (\Kbb + \lambda I)^{-1} (\Kbn \iterw{t}{} + \lambda \Ib \iterw{t}{} - Y_\B)$ \Comment{Search direction; costs $\bigO(bn\nrhs + b^3)$ }
        \State $\iterw{t+1}{} \gets \iterw{t}{} - \iterd{t}{}$ \Comment{Update parameters; costs $\bigO(b\nrhs)$}
        \If{tail\_average}
        \State $\bar W_{t + 1} \gets 2 / (t + 1) \sum_{j = (t + 1)/2}^{t} W_j$
        \EndIf
        \EndFor
        \State \Return $\bar W_T$ if tail\_average else $\iterw{T}{}$ \Comment{Approximate solution to $\Klambd W = Y$}
    \end{algorithmic}
\end{algorithm}
We formally present \sap{} in \cref{alg:sap}.
\sap{} resembles randomized block coordinate descent, except that it employs \textit{subspace preconditioning}. 
That is, it preconditions the subspace gradient $\Kbn W_t+\lambda \Ib W_t - Y_\B$ by the inverse of the subspace Hessian $\Kbb + \lambda I$.
Thus, unlike SDD, \sap{} includes second-order information in its search direction. 
In addition, we have also augmented vanilla \sap{} from \cite{gower2015randomized} with tail averaging \citep{jain2018parallelizing, epperly2024randomized},
which helps address the inherent noise in the iterates, and plays a crucial role in our analysis.

\sap{} (assuming the number of right-hand sides $\nrhs = 1$ for simplicity) only costs $\bigO(bn + b^3)$ per-iteration, which is significantly smaller than the $\bigO(n^2)$ iteration cost of \pcg, while being comparable with \sdd, which costs $\bigO(bn)$ per-iteration.
Furthermore, \pcg{} typically requires memory linear in $n$ to store the preconditioner, while \sap{} only incurs and additional $\bigO(b^2)$ storage cost.
Thus, \sap{} strikes a balance between \pcg{} and methods like \sdd. 
It incorporates second-order information, like \pcg{}, but does so at a modest increase in cost relative to \sdd{} and SGD.
By leveraging the theory of determinantal point processes \citep{kulesza2012determinantal}, we show that tail-averaged \sap{} improves dependence upon the condition number in the early phase of the convergence, which establishes it as a potential solution to the conditioning challenges discussed in \cref{sec:gp_fundamentals}, particularly in the presence noisy data and generalization error.

\subsection{Fast convergence along \topl{} subspace for GP inference}
\label{subsec:fst_subspace_conv}
Prior works \citep{dicker2017kernel,lin2020optimal,lin2023sampling} have shown that the components of the solution most relevant for learning lie along the dominant eigenvectors of $K$, 
which correspond to the dominant spectral basis functions of the reproducing kernel Hilbert space (RKHS) $\Hc$ induced by kernel $k$ with respect to $X$.
\cite{lin2023sampling} shows that SGD enjoys an improved early convergence along these directions, but attains a slow sublinear rate asymptotically.
Motivated by these observations, we study the convergence rate of tail-averaged \sap{} along the dominant spectral basis functions, 
showing that it obtains a two-phase convergence rate: an initial condition number-free sublinear rate, 
followed by an asymptotic linear rate where the condition number dependence is mitigated by the blocksize $b$.
The proofs of the results in this section appear in \cref{s:proof-gp}.

Following \cite{lin2023sampling}, we define the spectral basis functions of $\Hc$ as 
\begin{align*}
u^{(i)} = 
\argmax_{u\in\Hc,\|u\|_{\Hc}=1}\Big\{\,\|u(X)\|^2\ \mid \
\langle u,u^{(j)}\rangle_{\Hc} = 0 \ \text{ for }\ j<i\,\Big\}.
\end{align*}
The spectral basis functions $u^{(1)},u^{(2)},...$ are defined in such a way that each of them takes maximal values at the observation points while being orthogonal to the previous ones. 
Hence, to accurately estimate the posterior mean in the data-dense regions, it is most important to minimize the error along the dominant basis functions.

We focus on posterior mean estimation, which corresponds to solving the linear system $K_\lambda W = y$ using tail-averaged \sap{}.
Recall that, given a sequence of weight vectors $\iterw{1}{},...,\iterw{t}{}$ produced by \sapfull{} (which are vectors, not matrices, as $\nrhs=1$), tail averaging obtains a weight estimate by averaging the second half of this sequence. 
We use these weights to define an estimate of the GP posterior mean $m_n$, which is given by $\hat m_t = k(\cdot , X) \bar W_t$.

\cref{theorem:main} is our main theoretical result. 
It provides a convergence guarantee for the \sap{} posterior mean estimate $\hat m_t$ along the \topl{} spectral basis functions, 
where $\ell$ is a parameter that can be chosen freely. 
\cref{theorem:main} shows that while the estimate converges to the true posterior mean over the entire RKHS $\Hc$, its initial convergence rate along the dominant basis functions is even faster.
Our theoretical results rely on the \textit{smoothed condition number}, which we define below.

\begin{definition}[Smoothed condition number]
\label{def:smoothed_condition_number}
Let $A \in \R^{n \times n}$ be a positive-semidefinite matrix with eigenvalues $\lambda_1 \geq \ldots \geq \lambda_n$.
We define the smoothed condition number $\phi(b, p)$ of $A$ as
\[
    \phi(b, p) \coloneqq \frac{1}{b} \sum_{i > b} \frac{\lambda_i}{\lambda_p}.
\]
\end{definition}

The smoothed condition number controls both the SGD-style sublinear convergence rate and the PCG-style linear convergence rate in \cref{theorem:main}. Note that, for notational convenience, we use $2b$ to denote the \sap{} blocksize in the below statement.




\begin{theorem}\label{theorem:main}
    Suppose we have a kernel matrix $K \in \R^{n \times n}$, observations $y \in \R^n$ from a GP prior with likelihood variance $\lambda \geq 0$, and let $\phi(\cdot,\cdot)$ denote the smoothed condition number of $K + \lambda I$. 
    Given any $\ell\in[n]$, let $ \rmproj_\ell$ denote orthogonal projection onto the span of $u^{(1)},...,u^{(\ell)}$. 
    For any $1\leq b\leq n/2$, in $\bigOt(nb^2)$ time we can construct a distribution over  row index subsets $\B$ of size $2b$ such that \sap{} (Algorithm~\ref{alg:sap}) initialized at zero after $t$ iterations satisfies
    \begin{align*}
        \E \left\|  \rmproj_\ell(\hat m_t) -  \rmproj_\ell(m_n) \right\|_{\Hc}^2
        &\leq \min \left\{ \frac {8\phi(b,\ell)}t, \left( 1 - \frac{1}{2\phi(b,n)} \right)^{t/2} \right\} \|y\|_{K_\lambda^{-1}}^2.
    \end{align*}
\end{theorem}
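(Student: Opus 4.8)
The plan is to pass from the RKHS projection error to a weighted quadratic form in the coefficient error, and then to run a sketch-and-project convergence analysis in which the sampling distribution is a carefully constructed determinantal point process. \textbf{Reduction to coefficients.} Writing $w^\star = (K+\lambda I)^{-1}y$ gives $m_n = k(\cdot,X)w^\star$ and $\hat m_t = k(\cdot,X)\bar W_t$. Diagonalizing $K = \sum_i \sigma_i v_i v_i^\top$, one checks that the spectral basis functions are $u^{(i)} = \sigma_i^{-1/2} k(\cdot,X) v_i$, so $\rmproj_\ell$ acts on coefficient vectors as the Euclidean projection $P_\ell = \sum_{i\le\ell} v_i v_i^\top$. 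Since $\|k(\cdot,X)v\|_{\Hc}^2 = v^\top K v$, this yields the identity
\begin{align*}
\|\rmproj_\ell(\hat m_t) - \rmproj_\ell(m_n)\|_{\Hc}^2 = (\bar W_t - w^\star)^\top K_\ell (\bar W_t - w^\star), \qquad K_\ell := \sum_{i\le\ell}\sigma_i v_i v_i^\top .
\end{align*}
Whitening via $\bar\epsilon_t := (K+\lambda I)^{1/2}(\bar W_t - w^\star)$ turns the right-hand side into $\bar\epsilon_t^\top M_\ell \bar\epsilon_t$ with $M_\ell = \sum_{i\le\ell} \tfrac{\sigma_i}{\sigma_i+\lambda} v_i v_i^\top \preceq I$, and the initialization $\bar W_0 = 0$ makes $\|\epsilon_0\|^2 = \|y\|_{K_\lambda^{-1}}^2$. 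The claim thus reduces to bounding $\E[\bar\epsilon_t^\top M_\ell \bar\epsilon_t]$.

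\textbf{The expected projection.} In whitened coordinates the \sap{} update is $\epsilon_{t+1} = (I - \tilde P_t)\epsilon_t$, where $\tilde P_t = A^{1/2}S_t(S_t^\top A S_t)^{-1}S_t^\top A^{1/2}$ ($A := K+\lambda I$, $S_t$ selecting the sampled block) is an orthogonal projection. Everything hinges on the spectrum of $\bar P := \E[\tilde P_t]$. Using the theory of determinantal point processes, I would construct, in $\bigOt(nb^2)$ time, a distribution over blocks of size $2b$ for which $\bar P$ is diagonal in the eigenbasis $\{v_i\}$ with ridge-leverage entries $\bar P_{ii} = \tfrac{\sigma_i+\lambda}{\sigma_i+\lambda+\alpha}$, where $\alpha = \tfrac1b \sum_{i>b}(\sigma_i+\lambda)$ is chosen so that the effective dimension $\sum_i \bar P_{ii} \le 2b$ matches the block size. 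Two consequences are then immediate from $b \le n/2$: the hardest direction obeys $\lambda_{\min}(\bar P) = \bar P_{nn} \ge \tfrac{1}{2\phi(b,n)}$, and for every $i \le \ell$ we have $\tfrac{1}{\bar P_{ii}} = 1 + \tfrac{\alpha}{\sigma_i+\lambda} \le 1 + \phi(b,\ell)$.

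\textbf{Linear rate.} The standard sketch-and-project contraction gives $\E\|\epsilon_t\|^2 \le (1-\lambda_{\min}(\bar P))^t \|\epsilon_0\|^2$; combining this with $M_\ell \preceq I$, the lower bound $\lambda_{\min}(\bar P) \ge \tfrac{1}{2\phi(b,n)}$, and Jensen's inequality over the tail-averaging window (which only involves iterates $j \ge t/2$) produces the $\big(1-\tfrac{1}{2\phi(b,n)}\big)^{t/2}$ branch.

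\textbf{Sublinear rate (the crux).} For the $\tfrac{8\phi(b,\ell)}{t}$ branch I would expand the tail-averaged form as
\begin{align*}
\E[\bar\epsilon_t^\top M_\ell \bar\epsilon_t] = \tfrac{4}{t^2}\sum_{i,j>t/2} \mathrm{tr}\!\big(M_\ell\, \bar Q^{\,|i-j|}\,\Sigma_{\min(i,j)}\big), \qquad \bar Q := I - \bar P,\ \ \Sigma_i := \E[\epsilon_i \epsilon_i^\top],
\end{align*}
using $\E[\epsilon_j \mid \epsilon_i] = \bar Q^{\,j-i}\epsilon_i$ for the cross terms. Because $M_\ell$ is supported on the directions $i \le \ell$, where $\bar P_{ii}$ is bounded away from zero, the geometric sums in $\bar Q$ collapse: per coordinate they contribute at most $\tfrac{1}{\bar P_{ii}} \le 1+\phi(b,\ell)$, and after the $\tfrac{4}{t^2}$ normalization and summation over the $\sim t/2$ window indices this yields the $\tfrac{8\phi(b,\ell)}{t}$ scaling. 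The genuine obstacle is that $\bar Q$ and especially the second-moment iterate $\Sigma_i$, governed by the two-point operator $\mathcal T(\Sigma) = \E[(I-\tilde P_t)\Sigma(I-\tilde P_t)]$, are \emph{not} diagonal, so one must show that the cross-direction couplings, once filtered through $M_\ell$, remain controlled by $\phi(b,\ell)$ rather than by the global condition number. This is exactly where the determinantal structure of $\bar P$ — its diagonality in the eigenbasis and its exact ridge-leverage profile — is indispensable, and where the factor-of-two slack in the block size $2b$ is spent.
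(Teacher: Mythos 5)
Your reduction to RKHS coefficients, your DPP construction of $\bar P$, and your Jensen argument for the linear-rate branch all essentially match the paper's proof (the paper projects with $Q_\ell$ rather than your $M_\ell$, an immaterial difference). The problem is the sublinear branch, which is the heart of the theorem, and which you name as "the genuine obstacle" without closing it. Concretely, your per-coordinate collapse of the geometric sums in $\bar Q$ is sound as far as it goes: since $M_\ell$ and $\bar Q$ are diagonal in the eigenbasis, $\mathrm{tr}\bigl(M_\ell \bar Q^{\,s-r}\Sigma_r\bigr)$ sees only the diagonal of $\Sigma_r$, and summing the geometry gives a factor $2/\bar P_{ii}\le 2\bigl(1+\phi(b,\ell)\bigr)$ per coordinate. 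But then you must bound $\mathrm{tr}(M_\ell\Sigma_r)$, and absent a dedicated recursion for this in-subspace second moment, the only bound available is the full-norm contraction $\mathrm{tr}(M_\ell\Sigma_r)\le\E\|\epsilon_r\|^2\le\|\epsilon_0\|^2$. That route yields at best $\tfrac{C\left(1+\phi(b,\ell)\right)}{t}\|y\|_{K_\lambda^{-1}}^2$, not $\tfrac{8\phi(b,\ell)}{t}\|y\|_{K_\lambda^{-1}}^2$. The "$1+$" is fatal exactly in the regime the theorem is about: with polynomial or faster spectral decay and $b\ge\ell$, $\phi(b,\ell)$ can be arbitrarily small, so the theorem's prefactor vanishes while yours saturates at $\Theta(1/t)$; this regime ($\phi(b,\ell)\ll1$) is what drives the condition-number-free and $o(n^2)$ claims in Remark~\ref{remark:main} and \cref{corollary:main}. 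So to prove the stated bound you must show that the top-$\ell$ component of $\Sigma_r$ itself decays to scale $\phi(b,\ell)\|\epsilon_0\|^2$ — precisely the step you defer.

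The paper closes this with an ingredient your proposal never states: the operator inequality
\begin{align*}
\E\bigl[(I-\Pi)Q_\ell(I-\Pi)\bigr] \preceq (I-\bar\Pi)Q_\ell + (I-Q_\ell)\bar\Pi(I-\bar\Pi),
\end{align*}
proved by a single application of Jensen's inequality, $\E[\Pi(I-Q_\ell)\Pi]\succeq\bar\Pi(I-Q_\ell)\bar\Pi$, together with the commutation of $\bar\Pi$ and $Q_\ell$. This gives the recursion $\E\|\epsilon_{r+1}\|^2_{Q_\ell}\le\bigl(1-\lambda_\ell(\bar\Pi)\bigr)\E\|\epsilon_r\|^2_{Q_\ell}+\lambda_{\ell+1}\bigl(\bar\Pi(I-\bar\Pi)\bigr)\E\|\epsilon_r\|^2_{I-Q_\ell}$: a bias term contracting at the fast in-subspace rate (after tail averaging it contributes $\bigl(1-\lambda_\ell(\bar\Pi)\bigr)^{t/2}\|\epsilon_0\|_{Q_\ell}^2\le\tfrac{2\phi(b,\ell)}{t}\|\epsilon_0\|^2$), plus an orthogonal-complement leakage term whose coefficient is bounded — under the side condition $\lambda_\ell(\bar\Pi)\ge2\lambda_n(\bar\Pi)$, with the complementary case handled separately via the full-norm analysis — by $\lambda_{\ell+1}\bigl(\bar\Pi(I-\bar\Pi)\bigr)/(1-\alpha)^2\le4\phi(b,\ell)\gamma$. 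This coefficient carries the factor $\phi(b,\ell)$ rather than $1+\phi(b,\ell)$, and the leakage decays at the full-norm linear rate, so tail averaging converts it into $\tfrac{8\phi(b,\ell)}{t}\bigl(1-\tfrac{1}{2\phi(b,n)}\bigr)^{t/2}\|\epsilon_0\|^2$. Two smaller corrections to your diagnosis: $\bar Q=I-\bar P$ \emph{is} diagonal in the eigenbasis by your own construction (only $\Sigma_r$ is not, and its off-diagonal part is invisible to traces against diagonal matrices, so non-commutativity of $\Sigma_r$ is not the real obstruction); and the factor-of-two slack in the block size $2b$ is spent inside the DPP expected-projection lemma itself (bounding the marginals via sums over $i>b$), not in controlling cross-direction couplings.
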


This result provides a composite convergence rate for \sap{}: the first rate, $\frac{8\phi(b,\ell)}t$, is a sublinear rate, similar to SGD \citep{lin2023sampling}, while the second rate, $\left( 1 - \frac1{2\phi(b,n)} \right)^{t/2}$, is a linear convergence rate. 
The linear rate is independent of the subspace norm, and thus, in particular, also applies for $\ell = n$, i.e., when comparing posterior means within the entire space. 
However, for $\ell \ll n$, the sublinear rate wins out in the early phase of the convergence, since $\phi(b,\ell) \ll \phi(b,n)$. 

For ill-conditioned kernels with small prior variance $\lambda$, the linear rate $1 - 1/ (2 \phi(b,n))$ is close to 1 (as $b$ increases, the preconditioning effect becomes stronger, which improves the rate). 
On the other hand, the constant in the sublinear rate $\phi(b,\ell)$ is much smaller, since it takes advantage of the \topl{} subspace norm, which makes it effectively independent of the conditioning of $K$. 

If the \sap{} blocksize is proportional to $\ell$ and $K + \lambda I$ exhibits \emph{any} polynomial spectral decay, 
then $\phi(b,\ell) = \bigO(1)$, making the sublinear rate condition number-free. 
Most popular kernels exhibit polynomial spectral decay
\citep{ma2017diving,kanagawa2018gaussian}.
Furthermore, polynomial spectral decay is a standard assumption in the generalization analysis of kernel methods in the fixed and random design settings \citep{caponnetto2007optimal,bach2013sharp,rudi2015less}.  
To the best of our knowledge, our result is the first of its kind, as previous comparable guarantees for stochastic methods \citep[][Proposition 1]{lin2023sampling} still depend on the condition number. 
We note that our condition number-free rate is attained only when seeking moderately accurate estimates of the posterior mean (which is often the case due to the presence of noise in the data). 
When seeking highly accurate estimates, the asymptotic (condition number dependent) linear rate will eventually take over.
We illustrate these claims with the following corollary:
\begin{corollary}\label{corollary:main}
    Suppose that the matrix $K$ exhibits polynomial spectral decay, i.e., $\lambda_i(K) = \Theta(i^{-\beta})$ for some $\beta > 1$. 
    Then for any $\ell \in \{1,\ldots,n\}$ and $\epsilon\in(0,1)$, choosing block size $b=4\ell$ we can find $\hat m$ that with probability at least $0.99$ satisfies $\| \rmproj_\ell(\hat m) -  \rmproj_\ell(m_n)\|^2_{\Hc} \leq \epsilon \cdot \| \rmproj_\ell(m_n)\|^2_{\Hc}$ in
    \begin{align*}
    \bigOt \left( (n^2 + n\ell^2) \min \left\{ \frac{1}{\epsilon}, \left( 1 + \frac{\ell(\lambda_\ell(K)+\lambda)}{n(\lambda_n(K)+\lambda)} \right)\log \left( \frac{1}{\epsilon} \right) \right\} \right) \quad \text{time}.
    \end{align*}
\end{corollary}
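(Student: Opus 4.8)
The plan is to instantiate \cref{theorem:main} with $b = 4\ell$ and convert its expected, absolute error bound into the claimed high-probability, relative error guarantee, and then account for the running time. First I would record the two costs that determine the total time: constructing the sketching distribution takes $\bigOt(nb^2) = \bigOt(n\ell^2)$, and each \sap{} iteration with blocksize $2b = 8\ell$ costs $\bigO(bn + b^3) = \bigO(\ell n + \ell^3)$. Thus after $T$ iterations the total time is $\bigOt(n\ell^2) + T\cdot\bigO(\ell n + \ell^3)$, and since $\ell n + \ell^3 = \ell(n+\ell^2)$ the iteration term is $\bigO(T\ell(n+\ell^2))$; writing $T = (n/\ell)M$ turns this into $\bigO((n^2 + n\ell^2)M)$, with the setup cost dominated. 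So it remains to show that $T = \bigOt((n/\ell)\min\{1/\epsilon, (1 + \tfrac{\ell(\lambda_\ell(K)+\lambda)}{n(\lambda_n(K)+\lambda)})\log(1/\epsilon)\})$ iterations suffice.

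Next I would estimate the two smoothed condition numbers under the decay $\lambda_i(K) = \Theta(i^{-\beta})$, $\beta > 1$. Writing $\sigma_i = \lambda_i(K)$ and splitting the tail into its kernel and noise parts, $\sum_{i>4\ell}(\sigma_i + \lambda) = \Theta(\ell^{1-\beta}) + (n - 4\ell)\lambda$, so that (using $\Theta(\ell^{1-\beta}) = \Theta(\ell\sigma_\ell)$ and $\lambda/(\sigma_n+\lambda)\le 1$) I get $\phi(4\ell,\ell) = \bigO(1 + \tfrac{n\lambda}{\ell(\sigma_\ell+\lambda)})$ and $\phi(4\ell,n) = \bigO(\tfrac{n}{\ell} + \tfrac{\sigma_\ell + \lambda}{\sigma_n+\lambda})$. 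The second estimate immediately yields the linear-rate iteration count $T_{\mathrm{lin}} = \bigO(\phi(4\ell,n)\log(1/\epsilon)) = \bigOt((n/\ell)(1 + \tfrac{\ell(\sigma_\ell+\lambda)}{n(\sigma_n+\lambda)})\log(1/\epsilon))$, as desired.

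The main obstacle is the sublinear branch, since \cref{theorem:main} bounds the error by a multiple of $\|y\|_{K_\lambda^{-1}}^2$, whereas the corollary wants relative error against $\|\rmproj_\ell(m_n)\|_{\Hc}^2$, and in general the ratio of these two quantities is unbounded. I would resolve this using the hypothesis that $y$ is drawn from the GP prior, i.e. $y \sim \Nc(0, K+\lambda I)$, so that $v_i^\top y$ has variance $\sigma_i + \lambda$ along the $i$-th eigenvector. Taking expectations over the data gives $\E\|y\|_{K_\lambda^{-1}}^2 = n$ and $\E\|\rmproj_\ell(m_n)\|_{\Hc}^2 = \sum_{i\le\ell}\tfrac{\sigma_i}{\sigma_i+\lambda}$, so the effective noise-to-signal ratio is $\Theta(n/\ell)$ in the small-$\lambda$ regime $\lambda = \bigO(\ell\sigma_\ell/n)$, where $\phi(4\ell,\ell) = \bigO(1)$ and $\sum_{i\le\ell}\sigma_i/(\sigma_i+\lambda) = \Theta(\ell)$. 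Combining the theorem's sublinear rate with these expectations and Markov's inequality (to reach probability $0.99$, at the cost of absorbed constants) then shows $T_{\mathrm{sub}} = \bigOt(\phi(4\ell,\ell)\cdot\tfrac{n}{\ell}\cdot\tfrac1\epsilon) = \bigOt(\tfrac{n}{\ell\epsilon})$ iterations suffice in this regime.

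Finally I would take the minimum. The structural fact that closes the argument is that the regime where the sublinear bound could fail to reach the claimed time, namely $\lambda \gtrsim \ell\sigma_\ell/n$, is exactly the regime where $1 + \tfrac{\ell(\sigma_\ell+\lambda)}{n(\sigma_n+\lambda)} = \bigO(1)$, so that the linear bound $B = \bigOt((n^2+n\ell^2)\log(1/\epsilon))$ already satisfies $B \le A$ for the sublinear bound $A = \bigOt((n^2+n\ell^2)/\epsilon)$; hence $\min\{A,B\}$ is always achieved by whichever branch is valid. Taking $T = \min\{T_{\mathrm{sub}}, T_{\mathrm{lin}}\}$ and substituting into $\bigOt(n\ell^2) + T\cdot\bigO(\ell n + \ell^3)$ gives the stated bound, with the $\log$ of the (polynomially bounded) norm ratio absorbed into $\bigOt$. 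I expect the delicate part to be making the expectation-to-high-probability and absolute-to-relative conversions simultaneously rigorous, since the numerator $\|\rmproj_\ell(\hat m_t) - \rmproj_\ell(m_n)\|_{\Hc}^2$ and the denominator $\|\rmproj_\ell(m_n)\|_{\Hc}^2$ are correlated through $y$; a clean route is to bound the joint expectation over $y$ and the \sap{} randomness, establish $\|\rmproj_\ell(m_n)\|_{\Hc}^2 = \Omega(\ell)$ with constant probability by concentration, and then apply Markov to the numerator.
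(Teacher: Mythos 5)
Your proposal is correct and takes essentially the same route as the paper's own proof (Corollary~\ref{c1:fast_convg_gp}): both instantiate the main theorem with block size $\Theta(\ell)$, exploit $y \sim \Nc(0, K_\lambda)$ so that $\E\|y\|^2_{K_\lambda^{-1}} = n$ while the top-$\ell$ signal has expectation $\Theta(\ell)$, convert to a $0.99$-probability relative-error guarantee via Gaussian concentration plus Markov, and split into the cases $\lambda \lesssim \ell\lambda_\ell/n$ (where $\phi(b,\ell) = \bigO(1)$ activates the sublinear branch) versus $\lambda \gtrsim \ell\lambda_\ell/n$ (where the linear branch's factor $1 + \frac{\ell(\lambda_\ell+\lambda)}{n(\lambda_n+\lambda)}$ is $\bigO(1)$, so the minimum is still attained). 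Your runtime accounting --- DPP preprocessing $\bigOt(nb^2)$ plus per-iteration cost $\bigO(n\ell + \ell^3)$, with the extra $\log(n/\ell)$ factors absorbed into $\bigOt$ --- likewise matches the paper's.
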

\begin{remark}\label{remark:main}
    The $\tilde \bigO( (n^2 + n \ell^2)/ \epsilon)$ runtime (phase one, attained for moderate accuracy $\epsilon$) is entirely independent of the condition number of $K$ and the likelihood variance $\lambda$. 
    In \cref{sec:proofs}, we show that when $\lambda$ is sufficiently small (the highly ill-conditioned case), then the phase one complexity of \sapfull{} can be further improved by increasing the block size $b$ to attain $\tilde \bigO(nb^2+(n^2+nb^2)(\ell/b)^{\beta-1}/\epsilon)$ runtime. 
    In particular, when $\ell \ll b \ll \sqrt n$, we obtain meaningful convergence in $o(n^2)$ time, i.e., before reading all of the data. 
    Thus, the \topl{} subspace convergence of \sap{} can actually benefit from ill-conditioned kernels as the leading spectral basis functions become more~dominant.
\end{remark}

Although our theoretical results require tail averaging, we do not believe it is needed for good practical performance of sketch-and-project algorithms.
Indeed, we run sketch-and-project with and without tail averaging in \cref{subsec:tail_avg_appdx}, and we find that (i) tail averaging does not improve subspace convergence by a substantial amount and (ii) tail averaging performs worse than not using tail averaging at larger blocksizes.

\section{\adasap{}: Approximate, Distributed, Accelerated \sap{}}
\label{sec:adasap}
We introduce \adasap{} (\cref{alg:adasap}), a scalable extension of \sap{} for GP inference. 
We first introduce the modifications made to \sap{} for scalability (\cref{subsec:adasap_ingredients}), before presenting \adasap{} in full (\cref{subsec:adasap_alg}).
For a detailed discussion of how \adasap{} relates to prior work, please see \cref{sec:related_work}.



\subsection{The key ingredients: approximation, distribution, and acceleration}
\label{subsec:adasap_ingredients}
We begin by discussing the essential elements of the \adasap{} algorithm: (i) approximate subspace preconditioning, (ii) distribution, (iii) acceleration, and how they enhance performance.

\subsubsection{Approximate subspace preconditioning}
\sap{} enjoys significant improvements over PCG, as it only requires $\bigO(bn + b^3)$ computation per-iteration.
Unfortunately, \sap{} is limited in how large a blocksize $b$ it may use, as factoring $\Kbb + \lambda I$ to perform subspace preconditioning costs $\bigO(b^3)$. 
This is problematic, as \cref{theorem:main} shows the convergence rate of \sap{} improves as $b$ increases.

To address this challenge, \adasap{} draws inspiration from previous works \citep{erdogdu2015convergence, frangella2024sketchysgd, rathore2025have}, 
which replace exact linear system solves in iterative algorithms with inexact solves based on low-rank approximations.
\adasap{} replaces $\Kbb$ in \sap{} with a rank-$r$ randomized Nystr{\"o}m approximation $\Knys$ \citep{williams2000using, gittens2016revisiting,tropp2017fixed}.
This strategy is natural, as kernels exhibit approximate low-rank structure \citep{bach2013sharp,rudi2015less, belkin2018approximation}.
Indeed, \cite{rathore2025have} develops an approximate SAP solver for kernel ridge regression with one right-hand side, based on approximating $\Kbb$ by $\Knys$, and observes strong empirical performance.

\adasap{} computes $\Knys$ following the  numerically stable procedure from \cite{tropp2017fixed}, which is presented in \cref{sec:add_alg_deets}.
The key benefit is that $(\Knys + \rho I)^{-1}$ can be applied to vectors in $\bigO(br)$ time (\cref{sec:add_alg_deets}), where $\rho > 0$ is a damping parameter to ensure invertibility.
This reduces the cost compared to the $\bigO(b^3)$
exact SAP update and allows \adasap{} to use larger blocksizes: on the taxi dataset ($n = 3.31 \cdot 10^{8}$), \adasap{} uses blocksize $b = 1.65 \cdot 10^{5}$.
 




\subsubsection{Distributed matrix-matrix products}
\sap{} with approximate subspace preconditioning allows large blocksize $b$, but two bottlenecks remain: computing $\Kbn W_t$ and constructing the sketch $\Kbb \Omega$. 
The former costs $\bigO(bn)$, while the latter costs $\bigO(b^2r)$.
As matrix-matrix multiplication is embarrassingly parallel, distributed multi-GPU acceleration can address these bottlenecks. 
By partitioning $\Kbn$ across $\nworks$ GPUs, \algcoldist{} significantly reduces the time to compute the product $\Kbn \Omega$.
Similarly, \algrowdist{} for $\Kbb \Omega$ reduces the time to compute the sketch.
On our largest dataset, taxi, using 4 GPUs achieves a $3.4\times$ speedup (\cref{fig:parallel_scaling_taxi}), which corresponds to near-perfect parallelism.



\subsubsection{Nesterov acceleration}
Prior work has shown  Nesterov acceleration \citep{nesterov1983method} improves the convergence rate of \sap{} and approximate \sap{} \citep{tu2017breaking, gower2018accelerated, derezinski2024fine, rathore2025have}.
Hence we use Nesterov acceleration in \adasap{} to improve convergence.

\subsection{\adasap{} algorithm}
\label{subsec:adasap_alg}
\begin{algorithm}
\caption{\adasap{} for $\Klambd W = Y$}
    \label{alg:adasap}
    \begin{algorithmic}
        \Require distribution of row indices $\D$ over $[n]^{b}$,
        distribution over \nys{} sketching matrices $\Dnys$ over $\R^{b \times r}$,
        number of iterations $T_{\max}$, 
        initialization $\iterw{0}{}\in \R^{n\times \nrhs}$, 
        workers $\{\W_1, \ldots, \W_{\nworks}\}$, 
        averaging boolean tail\_average,
        acceleration parameters $\mu$, $\nu$
        \State\State{\textbf{\# Initialize acceleration parameters}}
        \State
        $\beta \gets 1 - \sqrt{\mu / \nu}, \quad \gamma \gets 1/\sqrt{\mu \nu}, \quad  \alpha \gets 1/(1 + \gamma \nu)$
        \State $V_0\gets W_0$, $Z_0 \gets W_0$
        \State
        \For{$t = 0, 1, \ldots, T_{\max} - 1$}
        \State{\textbf{\# Step I: Compute matrix-matrix product $\Kbn \iterw{t}{}$}}  \Comment{Costs $\bigO(bn\nrhs/\nworks)$}
        \State Sample row indices $\B$ of size $b$ from $[n]$ according to $\D$
        \State $\Kbn \iterw{t}{} \gets \algcoldist(\iterw{t}{}, \B, \{\W_1, \ldots, \W_{\nworks}\})$
        \Comment{\cref{alg:col_dist_mat_mat}}
        \State 
        \State{\textbf{\# Step II: Compute \nys{} sketch $\Kbb \Omega$}} \Comment{Costs $\bigO(rb^2/\nworks$)}
        \State Sample $\Omega \in \R^{b \times n}$ from $\Dnys$
        \State $\Kbb \Omega \gets \algrowdist(\Omega, \B, \{\W_1, \ldots, \W_{\nworks}\})$
        \Comment{\cref{alg:row_dist_mat_mat}}
        \State 
        
        \State \textbf{\# Step III: Compute Nystr{\"o}m approximation and get stepsize} \Comment{Costs $\bigO(b^2 + br^2)$}
        \State $U, S \gets \algnys(\Kbb\Omega, \Omega, r)$ \Comment{\cref{alg:nystrom}}
        \State $P_\B \gets USU^{T} + (S_{rr}+\lambda) I$ \Comment{Get preconditioner. Never explicitly formed!}
        \State $\eta_\B \gets \algstpsz(P_\B, \Kbb + \lambda I)$ \Comment{\cref{alg:get_stpsz}}
        \State
        
        \State \textbf{\# Step IV: Compute updates using acceleration}
        \Comment{Costs $\bigO(b r \nrhs + n \nrhs)$}
        \State $\iterd{t}{} \gets \Ib^T P_{\B}^{-1} (\Kbn \iterw{t}{} + \lambda \Ib W_t - Y_\B)$ 
        \State $\iterw{t + 1}{}, \iterv{t + 1}{}, \iterz{t + 1}{} \gets \algnestacc(\iterw{t}{}, \iterv{t}{}, \iterz{t}{}, \iterd{t}{}, \eta_\B, \beta, \gamma, \alpha)$
        \Comment{\cref{alg:nest_acc}}
        \State
        \If{tail\_average}
        \State $\bar W_{t + 1} \gets 2 / (t+1) \sum_{j = (t+1)/2}^{t} W_j$
        \EndIf
        \EndFor
        \State \Return $\bar W_T$ if tail\_average else $\iterw{T}{}$ \Comment{Approximate solution to $\Klambd W = Y$}
    \end{algorithmic}
\end{algorithm}
The pseudocode for \adasap{} is presented in \cref{alg:adasap}.
\adasap{} uses tail averaging, approximate subspace preconditioning, distributed matrix-matrix products, and Nesterov acceleration.
Assuming perfect parallelism, 
\adasap{} has per iteration runtime of $\bigO(n\nrhs b/\nworks + b^2r/\nworks)$, a significant improvment over the $\bigO(n\nrhs b + b^3)$ iteration time of \sap{}.
The per-iteration runtime of \adasap{} is comparable to distributed SDD---in other words, \adasap{} effectively preconditions the problem, reducing the total iterations required, without significantly slowing down each iteration.
Moreover, unlike SDD, 
\adasap{} automatically sets the stepsize at each iteration, 
removing the need for expensive tuning.
By default, we set the acceleration parameters $\mu$ and $\nu$ to $\lambda$ and $n / b$, respectively (as done in \cite{rathore2025have}), and find they
yield excellent performance in \cref{sec:experiments}.

\subsubsection{Theory vs. Practice}
Our theory in \cref{sec:sap} does not cover the approximate preconditioning, acceleration, or uniform sampling that is used in \adasap{}.
Despite the theory-practice gap between \sap{} (\cref{alg:sap}) and \adasap{} (\cref{alg:adasap}), we believe the theory developed for \sap{} can be extended to \adasap{}.
This belief is rooted in \cite{derezinski2024solving, derezinski2024fine, rathore2025have}, which show that \sap{} methods still converge when using approximate preconditioning, acceleration, and uniform sampling. 
We leave this extension as a direction for future research.

For simplicity, our experiments run \adasap{} without tail averaging.
Tail averaging is needed to establish \cref{theorem:main}, but we do not expect it to yield significant practical improvements (\cref{subsec:tail_avg_appdx}).
This is in line with the SGD literature, where averaging leads to better theoretical convergence rates, but the last iterate delivers similar performance in practice \citep{shamir2013stochastic, johnson2013accelerating}.

\section{Experiments}
\label{sec:experiments}
We present three sets of experiments showing \adasap{} outperforms state-of-the-art methods for GP inference:
(i) GP inference on large benchmark datasets (\cref{subsec:gp_inf}), 
(ii) GP inference on huge-scale transportation data analysis with $n > 3 \cdot 10^8$ samples (\cref{subsec:gp_inf_taxi}), and (iii) the Bayesian optimization task from \cite{lin2023sampling} (\cref{subsec:bayes_opt}).
We evaluate \adasap{} against the following competitors:
\begin{itemize}
    \item \adasapi{}: A variant of \adasap{} where the subspace preconditioner $P_\B$ is set to the identity matrix.
    \adasapi{} is the same  as accelerated randomized block coordinate descent.
    \item \sdd: The coordinate descent method introduced by \cite{lin2024stochastic}.
    We tune \sdd{} using three different stepsizes, and denote these variants by \sdd-1, \sdd-10, and \sdd-100.
    \item \pcg: A combination of block CG \citep{oleary1980block} with \nys{} preconditioning \citep{frangella2023randomized}.
\end{itemize}

Our experiments are run in single precision on 48 GB NVIDIA RTX A6000 GPUs using Python 3.10, PyTorch 2.6.0 \citep{paszke2019pytorch}, and CUDA 12.5.
We use 2, 3, and 1 GPU(s) per experiment in \cref{subsec:gp_inf,subsec:gp_inf_taxi,subsec:bayes_opt}, respectively.
Code for reproducing our experiments is available at \href{\codeurlpublic}{\codeurlpublicdisplaytext}.

Additional details are in \cref{sec:experiments_appdx}.

\subsection{GP inference on large-scale datasets}
\label{subsec:gp_inf}
We benchmark on six large-scale regression datasets from the UCI repository, OpenML, and sGDML \citep{chimela2017machine}.
The results are reported in \cref{tab:gp_inf}.
\adasap{} achieves the lowest RMSE and mean negative log-likelihood (NLL) on each dataset; the NLL is computed using 64 posterior samples (via pathwise conditioning).
\sdd-10 is competitive with \adasap{} on yolanda and acsincome, but performs much worse on the other datasets.
\sdd{} is also sensitive to the stepsize: \sdd-1 attains a larger RMSE and NLL than \adasap{} on all datasets, while \sdd-100 diverges on all datasets.
\pcg{} obtains the same RMSE and NLL as \adasap{} on yolanda and song, but its performance degrades for larger datasets.
Additionally, \cref{fig:all_main_time} shows \adasap{} achieves the lowest RMSE and NLL throughout the optimization process, demonstrating its efficiency with respect to wall-clock time.

\begin{table}
    \centering
    \small
    \caption{RMSE and mean negative log-likelihood (NLL) obtained by \adasap{} and competitors on the test set. 
    The results are averaged over five 90\%-10\% train-test splits of each dataset.
    We \textbf{bold} a result if it gets to within $0.01$ of the best found RMSE or mean NLL.
    }
    \label{tab:gp_inf}
    \begin{tabular}{llcccccc}
        \toprule
        & Dataset & yolanda & song & benzene & malonaldehyde & acsincome & houseelec \\
        & $n$ & $3.60 \cdot 10^{5}$ & $4.64 \cdot 10^{5}$ & $5.65 \cdot 10^{5}$ & $8.94 \cdot 10^{5}$ & $1.50 \cdot 10^{6}$ & $1.84 \cdot 10^{6}$ \\
        & $d$ & $100$ & $90$ & $66$ & $36$ & $9$ & $9$ \\
        & $k$ & RBF & \mtrn-3/2 & \mtrn-5/2 & \mtrn-5/2 & RBF & \mtrn-3/2 \\
        \midrule
        \multirow{6}{*}{\rotatebox[origin=c]{90}{Test RMSE}} & \adasap{} & \textbf{0.795} & \textbf{0.752} & \textbf{0.012} & \textbf{0.015} & \textbf{0.789} & \textbf{0.027} \\
         & \adasapi{} & 0.808 & 0.782 & 0.168 & 0.231 & \textbf{0.795} & 0.066 \\
         & \sdd-1 & 0.833 & 0.808 & 0.265 & 0.270 & 0.801 & 0.268 \\
         & \sdd-10 & \textbf{0.801} & 0.767 & 0.112 & Diverged & \textbf{0.792} & 0.119 \\
         & \sdd-100 & Diverged & Diverged & Diverged & Diverged & Diverged & Diverged \\
         & \pcg & \textbf{0.795} & \textbf{0.752} & 0.141 & 0.273 & 0.875 & 1.278 \\
        \midrule
        \multirow{6}{*}{\rotatebox[origin=c]{90}{Test Mean NLL}} & \adasap{} & \textbf{1.179} & \textbf{1.121} & \textbf{-2.673} & \textbf{-2.259} & \textbf{1.229} & \textbf{-2.346} \\
         & \adasapi{} & 1.196 & 1.170 & -0.217 & 0.466 & \textbf{1.235} & -2.185 \\
         & \sdd-1 & 1.225 & 1.203 & 0.531 & 0.903 & 1.242 & -0.281 \\
         & \sdd-10 & \textbf{1.187} & 1.149 & -0.762 & Diverged & \textbf{1.232} & -1.804 \\
         & \sdd-100 & Diverged & Diverged & Diverged & Diverged & Diverged & Diverged \\
         & \pcg & \textbf{1.179} & \textbf{1.121} & -0.124 & 0.925 & 1.316 & 2.674 \\
        \bottomrule
    \end{tabular}
\end{table}

\begin{figure}
    \centering
    \includegraphics[width=\ifarxivpreprint1\else0.9\fi\linewidth]{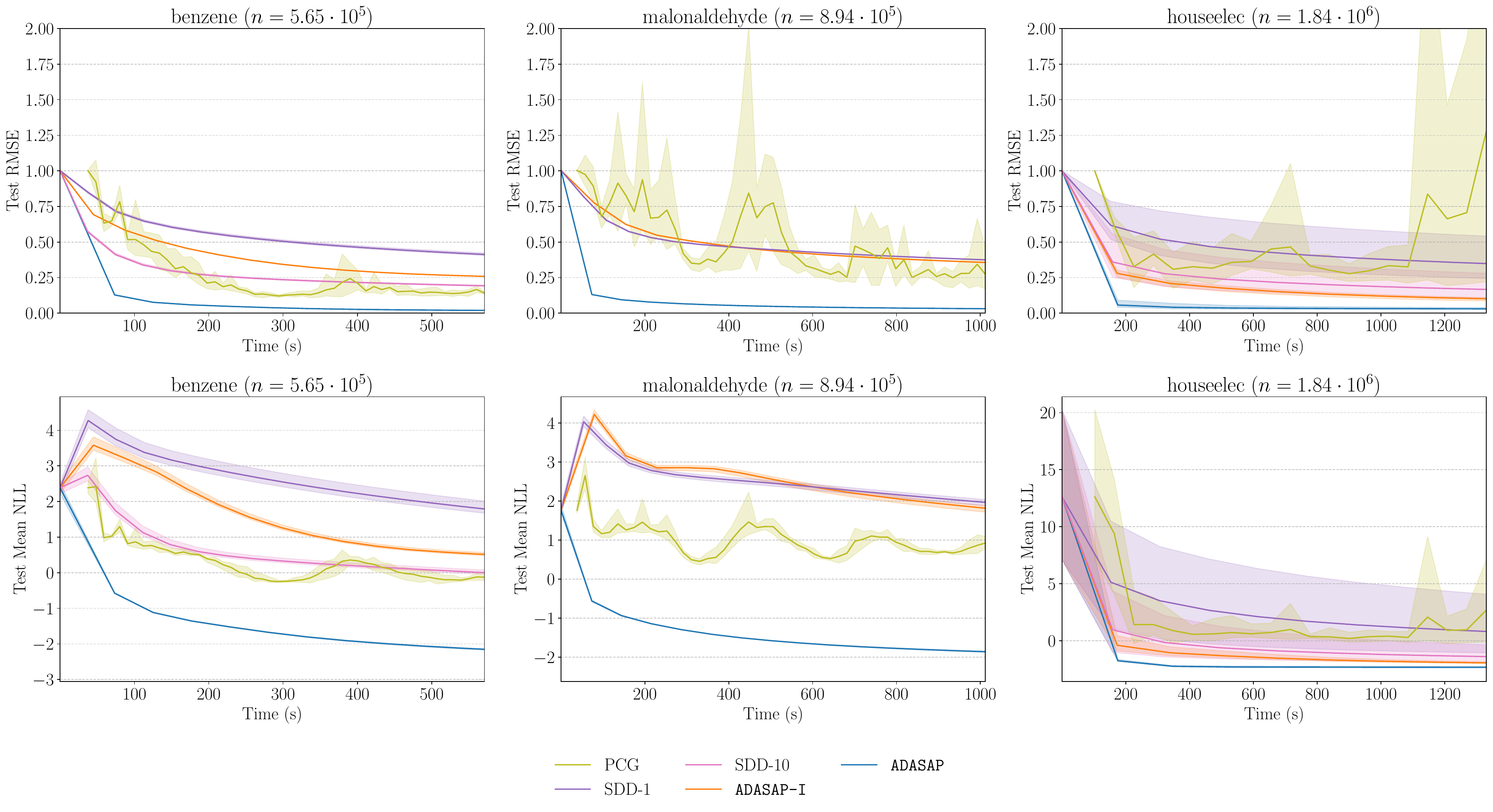}
    \caption{Performance of \adasap{} and competitors on RMSE and mean NLL, as a function of time, for benzene, malonaldehyde, and houseelec.
    The solid curve indicates mean performance over random splits of the data;
    the shaded regions indicate the range between the worst and best performance over random splits of the data.
    \adasap{} outperforms the competition.
    }
    \label{fig:all_main_time}
\end{figure}

\subsection{Showcase: Transportation data analysis with $> 3 \cdot 10^8$ samples}
\label{subsec:gp_inf_taxi}
To demonstrate the power of \adasap{} on huge-scale problems, we perform GP inference on a subset of the taxi dataset (\href{https://github.com/toddwschneider/nyc-taxi-data}{https://github.com/toddwschneider/nyc-taxi-data}) with $n = 3.31 \cdot 10^8$ samples and dimension $d = 9$: the task is to predict taxi ride durations in New York City.
To the best of our knowledge, this is the first time that full GP inference has been scaled to a dataset of this size.
Due to memory constraints, we are unable to compute 64 posterior samples as done in \cref{subsec:gp_inf} for computing NLL, so we only report RMSE.

The results are shown in \cref{fig:taxi_time}.
\adasap{} obtains the lowest RMSE out of all the methods.
Once again, \sdd{} is sensitive to the stepsize: \sdd-1 obtains an RMSE of 0.60, as opposed to \adasap{}, which obtains an RMSE of 0.50, while \sdd-100 diverges.
\sdd-10 obtains an RMSE of 0.52, which is similar to that of \adasap{}.
However, \adasap{} reaches the RMSE attained by \sdd-10 in 45\% less time than \sdd-10, which translates to a difference of 14 hours of runtime!
Furthermore, \pcg{} runs out of memory, as the memory required for storing the sketch in single precision is $3.31 \cdot 10^8 \cdot 100 \cdot 4 \approx 130$ GB, which exceeds the 48 GB of memory in the A6000 GPUs used in our experiments.

\begin{figure}
    \centering
    \includegraphics[width=\ifarxivpreprint0.5\else0.4\fi\linewidth]{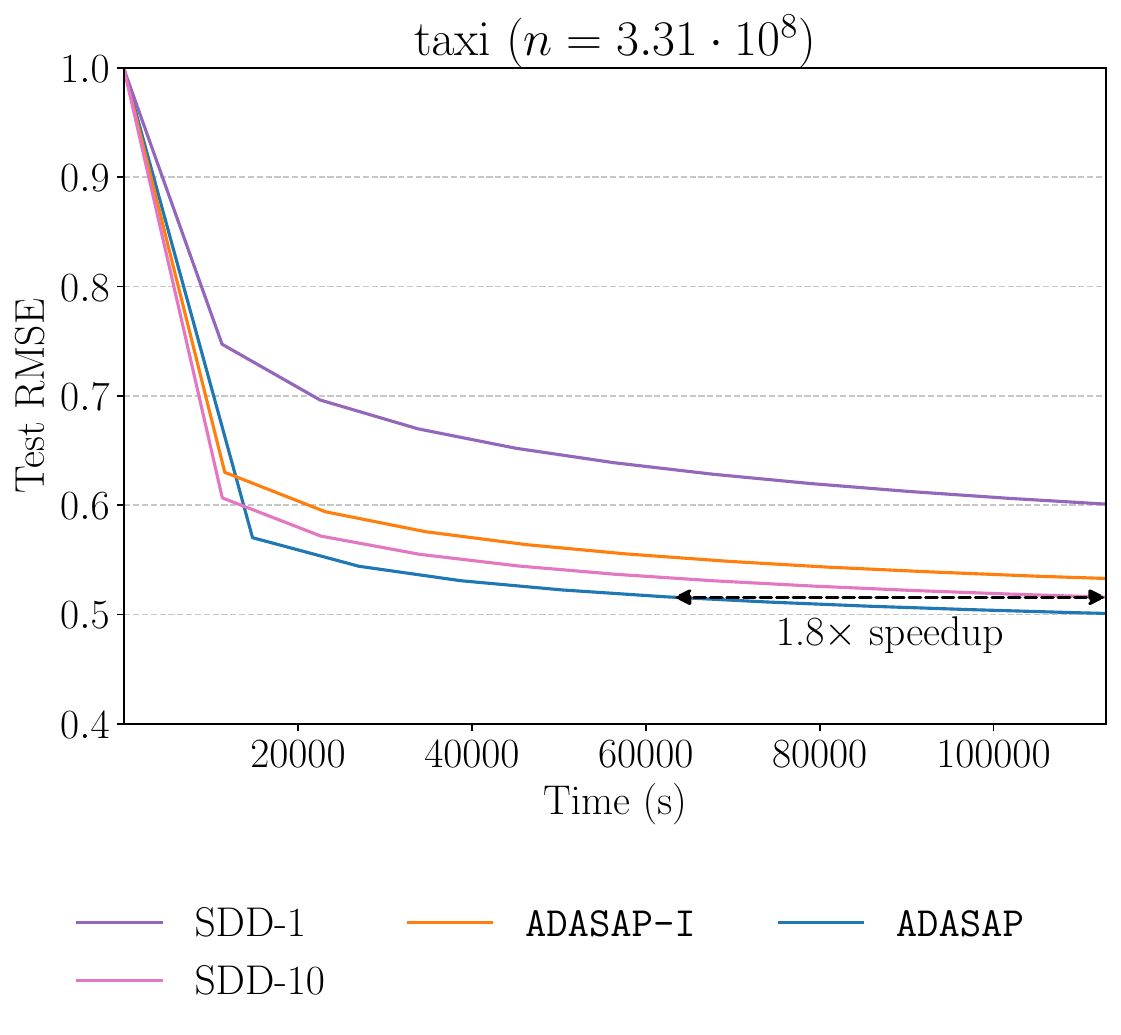}
    \caption{Comparison between \adasap{} and competitors on  transportation data analysis.
    \adasap{} attains the lowest RMSE and it obtains a $1.8\times$ speed up over the second-best method, \sdd-10.
    \sdd-100 diverges and \pcg{} runs out of memory, so they do not appear in the figure.}
    \label{fig:taxi_time}
\end{figure}

\subsection{Large-scale Bayesian optimization}
\label{subsec:bayes_opt}
We run \adasap{} and competitors on a variant of the synthetic large-scale Bayesian optimization tasks from \cite{lin2023sampling}.
These Bayesian optimization tasks consist of finding the maximum of black box functions $f: [0, 1]^8 \rightarrow \R$ sampled from a \mtrn{}-3/2 Gaussian process.
We use two different lengthscales (2.0 and 3.0) and 5 random functions per lengthscale.
To avoid misspecification, we set the kernel of each model to match that of the black box function. 

The results are shown in \cref{tab:bayes_opt}.
\adasap{} makes the biggest improvement over the random search baseline.
As we have seen in the other experiments, \sdd{} is sensitive to the stepsize: \sdd-10 and \sdd-100 make no progress.
\pcg{} also makes less progress than \adasap{} across both lengthscales.

\begin{table}[h!]
    \centering
    \small
    \caption{Percentage improvement over random search for Bayesian optimization, averaged over five seeds. 
    \sdd-10 and \sdd-100 provide no improvement over random search because they are unstable.}
    \label{tab:bayes_opt}
    \begin{tabular}{llcc}
        \toprule
        & & Lengthscale = 2.0 &  Lengthscale = 3.0 \\
        \midrule
        \multirow{6}{*}{\rotatebox[origin=c]{90}{Improv. (\%)}} & \adasap{} & \textbf{10.42} & \textbf{13.86} \\
        & \adasapi{} & 7.04 & 11.27 \\
        & \sdd-1 & 6.50 & 11.17 \\
        & \sdd-10 & 0.00 & 0.00 \\
        & \sdd-100 & 0.00 & 0.00 \\
        & \pcg{} & 0.13 & 5.54 \\
        \bottomrule
    \end{tabular}
\end{table}
\section{Conclusion}
We introduce \adasap{}, an approximate, distributed, accelerated sketch-and-project method for GP inference.
We demonstrate that \adasap{} outperforms state-of-the-art GP inference methods like \pcg{} and \sdd{} on large-scale benchmark datasets, a huge-scale dataset with $> 3 \cdot 10^8$ samples, and large-scale Bayesian optimization.
Moreover, we show that \sap{}-style methods are theoretically principled for GP inference---we prove that \sap{} is the first efficient, condition number-free algorithm for estimating the posterior mean along the top spectral basis functions.
Future work should extend the theoretical results for \sap{} to \adasap{} and investigate \adasap{} in lower precision (e.g., float16).
\begin{ack}
    We would like to thank Jihao Andreas Lin for helpful discussions regarding this work.
    PR, ZF, SF, and MU gratefully acknowledge support from the National Science Foundation (NSF) Award IIS-2233762, the Office of Naval Research (ONR) Awards N000142212825, N000142412306, and N000142312203, the Alfred P. Sloan Foundation, and from IBM Research as a founding member of Stanford Institute for Human-centered Artificial Intelligence (HAI). 
    MD and SG gratefully acknowledge support from NSF Award CCF-2338655.
\end{ack}

\bibliographystyle{plainnat}
\bibliography{references}


\appendix
\section{Related Work}
\label{sec:related_work}
We review the literature on Gaussian process (GP) inference and sketch-and-project, which are the two key areas that serve as foundations for \adasap{}.

\subsection{GP inference}
Naive GP inference based on solving the linear systems in \eqref{eq:post_eqns} exactly and generating posterior samples via \eqref{eq:standard_posterior_sample} has cubic cost in number of training and test points, making it prohibitively expensive for datasets whose size exceeds $n \sim 10^{4}$.
Given its prominence in scientific computing and machine learning, much work has been to done to scale GP inference to the big-data setting.
Prior work in this area can be (roughly) divided into four approaches: (i) \pcg{}-based inference, (ii) SGD-based inference, (iii) low-dimensional inference, and (iv) variational inference.

\subsubsection{\pcg-based inference} 
A natural approach to scaling GP inference is to replace the exact linear system solves by a direct method with approximate linear system solves by an iterative method.
\cite{gardner2018gpytorch} showed that by utilizing batched \pcg{} and GPU acceleration, GP inference could be scaled to datasets of size $n \sim 10^{5}$.
\cite{wang2019exact} further showed that by distributing the kernel matrix across multiple GPUs, that inference could be applied to datasets of size $n \sim 10^{6}$.
Unfortunately, these approaches still have shortcomings. 
While \cite{gardner2018gpytorch,wang2019exact} show that exact GPs can be scaled to datasets with $n \sim 10^{6}$ by leveraging \pcg{} and GPUs, they do not address the cubic complexity of generating posterior samples in \eqref{eq:standard_posterior_sample}.
Moreover, when $n \sim 10^6$, \pcg{} can become prohibitively expensive from both a memory and computational standpoint, as most preconditioners require $\bigO(nr)$ storage \citep{cutajar2016preconditioning, avron2017faster, diaz2023robust} and each \pcg{} iteration costs $\bigO(n^2)$.

\subsubsection{SGD-based inference}
To address the limitations of \pcg-based inference, \cite{lin2023sampling} proposed to replace \pcg{} with SGD.
This reduces iteration cost from $\bigO(n^2)$ to $\bigO(bn)$, where $b$ is the gradient batchsize.
In addition, they address the challenge of the cubic cost of generating posterior samples by adopting the pathwise conditioning strategy \citep{wilson2020efficiently, wilson2021pathwise}.
\cite{lin2023sampling} show SGD outperforms \pcg{} and Stochastic Variational Gaussian Processes (SVGP) \citep{hensman2013gaussian} on various posterior sampling tasks. 
\cite{lin2024stochastic} further improved upon SGD by introducing Stochastic Dual Descent (SDD).
SDD is a coordinate descent algorithm that leverages the dual formulation of the KRR problem to reduce the dependence on the conditioning of $K$ from $\kappa(K(K+\lambda I))$ to $\kappa(K+\lambda I)$.
In addition, SDD incorporates momentum and geometric averaging to further enhance performance.
Compared to SGD, \pcg{}, and SVGP, \cite{lin2024stochastic} shows that SDD yields the best empirical performance.

While \cite{lin2023sampling, lin2024stochastic} address the scaling limitations of \pcg{} and the issue of efficiently sampling from the posterior, there is still room for improvement.
In terms of theory, \cite{lin2023sampling} only establishes  that SGD is guaranteed to converge at a reasonable rate along the top eigenvectors of $K$.
Convergence along the small eigenvectors is extremely slow, as the rate in \cite{lin2023sampling} along the $\ell^{th}$ eigenvector depends upon $1/\sqrt{\lambda_i}$.
For SDD, the convergence rate is unknown, as \cite{lin2024stochastic} provides no convergence analysis.
Aside from unsatisfactory convergence analyses, it is known from optimization theory, that in the worst case, the convergence rate of SGD and SDD is controlled by the condition number \citep{nemirovskij1983problem}.
As ill-conditioning is a defining property of kernel matrices, this means SDD and SGD will converge slowly.
Another price incurred by SDD and SGD for their fast performance is the presence of additional hyperparameters.
In particular, the stepsize for these methods must be set appropriately to obtain satisfactory convergence.
Tuning the stepsize, while possible, can become expensive for large-scale inference on datasets like taxi.
This is in contrast to \pcg{}, which is stepsize-free.

\adasap{} enjoys the scalability benefits of SDD and SGD, but without their limitations.
It only operates on a batch of data at each iteration, so it avoids the $\bigO(n^2)$ iteration cost of \pcg{}.
By incorporating subspace preconditioning, \adasap{} addresses the ill-conditioning of the kernel matrix to achieve fast convergence.
\adasap{} also comes with reliable default hyperparameters, which obviates the need for costly hyperparameter tuning.  

\subsubsection{Low-dimensional inference} 
When the dimension $d$ of the data matrix $X$ is low ($d \leq 3$), 
GP inference can be made far more efficient by exploiting the structure of the kernel matrix and the low-dimensionality of the data.
Common approaches in this vein include rank-structured matrices \citep{ambikasaran2015fast, minden2017fast}, sparse Cholesky factorization \citep{chen2025sparse}, and kernel interpolation \citep{wilson2015kernel, greengard2025ffgp}.
In most cases these methods enable $\bigO(n)$ training time and inference, a massive improvement over standard GP inference methods. 
This linear time complexity has allowed GP training to scale to datasets with $n$ as large as $10^9$ \citep{greengard2025ffgp}.
Unfortunately, the complexity of these algorithms grows exponentially with the dimension, making them impractical when $d > 3$.

\subsubsection{Variational inference}
To address the challenge of large-scale GP inference, many prior works have focused on developing scalable approximate GP methods.
The most popular methods are based on variational inference \citep{titsias2009variational, hensman2013gaussian, hensman2015scalable, hensman2018variational}.
The most well-known approach is SVGP \citep{hensman2013gaussian}, which selects a set of $m$ inducing points by maximizing an evidence lower bound (ELBO).
The use of inducing points leads to only need to work with an $n\times m$ kernel matrix instead of the full $n\times n$ kernel matrix. 
This leads reduces the cost of training to $\bigO(nm^2+m^3)$,  a significant improvement over exact inference methods when $m$ can be taken to be much smaller than $n$.
While SVGP and related algorithms significantly reduce the cost of GPs, they generally exhibit worse performance on inference than methods which make use of the full data \citep{wang2019exact,lin2023sampling,lin2024stochastic}.
Thus, it is advisable to use exact inference whenever possible.

\subsection{Sketch-and-project}
\adasap{} builds off the sketch-and-project framework for solving consistent linear systems. 
Sketch-and-project was first formulated in \cite{gower2015randomized}, who showed that randomized Kaczmarz \citep{strohmer2009randomized}, randomized coordinate descent \citep{leventhal2010randomized}, and randomized Newton \citep{qu2016sdna}, are all special cases of sketch-and-project.
They also established a linear convergence rate for sketch-and-project, which is controlled by the smallest eigenvalue of an expected projection matrix.
However, \cite{gower2015randomized} were unable to provide a fine-grained analysis of sketch-and-project in terms of the spectral properties of the linear system, except for a few special cases.

Follow-up work has combined sketch-and-project with techniques such as Nesterov acceleration \citep{tu2017breaking} and extended sketch-and-project to quasi-Newton and Newton type methods \cite{gower2018accelerated, gower2019rsn, hanzely2020stochastic}.
Despite these extensions, a sharp convergence analysis of sketch-and-project remained elusive, even in the original setting of consistent linear systems. 

Recent work has finally given a sharp analysis of sketch-and-project.
Using powerful tools from random matrix theory, \cite{derezinski2024sharp} provided the first sharp analysis of sketch-and-project when the sketching matrix is sub-Gaussian or sparse.
In particular, they were the first to prove that if the matrix exhibits a favorable spectral decay profile, then sketch-and-project exhibits condition number-free convergence. 
\cite{derezinski2024solving} improve upon these results, showing that by employing a pre-processing step, one can avoid using expensive sketching matrices at each step, and simply sample rows uniformly.
They accomplish this by leveraging tools from determinantal point process theory \citep{mutny2020convergence,rodomanov2020randomized, anari2024optimal}.
\cite{derezinski2024fine} further refines these results by incorporating Nesterov acceleration and demonstrating an improved computational complexity relative to a fine-grained analysis of \pcg---that is, an analysis of \pcg{} that goes beyond the worst-case  $\bigOt(\sqrt{\kappa})$ iteration complexity \citep{trefethen1997numerical}. 

In addition to improvements in the analysis of exact sketch-and-project for linear systems, 
significant strides have been made when approximate subspace preconditioning is employed. 
\cite{derezinski2024solving, derezinski2024fine} both use \pcg{} to approximately solve the linear system defined by the subspace Hessian in each iteration of \sap{}, while \cite{derezinski2025randomized} augment that system with Tikhonov regularization to further improve its conditioning.
\cite{rathore2025have} consider an approximate sketch-and-project method for solving kernel ridge regression with one right-hand side.
They replace $\Kbb + \lambda I$ in \sap{} with $\Knys +\rho I$, where $\Knys$ is a randomized Nystr{\"o}m approximation of $\Kbb$.
They establish linear convergence for the method, with a rate that is comparable to exact \sap{} when the kernel matrix exhibits an appropriate rate of spectral decay.

Among the above-mentioned works, \adasap{} is closest to \cite{rathore2025have}, but with significant algorithmic differences such as tail averaging, handling multiple right-hand sides, and distributing bottleneck operations over multiple devices. 
On the theoretical side, our analysis focuses on the convergence along the top $\ell$-subspace using exact SAP with tail-averaging. In contrast, \cite{rathore2025have} focus on convergence of approximate sketch-and-project (with and without acceleration) \textit{over the entire space}, and as a result, they are only able to show a fast convergence guarantee under certain strong conditions on the spectral decay (effectively limiting the condition number of the problem).
Consequently, the global convergence analysis of \cite{rathore2025have} does not explain the rapid initial progress that SAP-style algorithms make on test error.
We believe that our two-phase analysis consisting of (i) fast condition number-free sublinear convergence, followed by (ii) a slower global convergence rate dependent upon the subspace condition number, better captures this phenomenon. 
In particular, our result is the first step in developing a systematic analysis of the generalization properties of SAP-style algorithms.

To the best of our knowledge, our work is the first to systematically investigate the convergence rate of \sap{} with tail averaging along the top eigenspace.
\cite{epperly2024randomized} combines tail averaging with randomized Kaczmarz for solving inconsistent linear systems, but they do not investigate the convergence rate along the top spectral subspaces.
\cite{steinerberger2021randomized, derezinski2024sharp} look at the convergence rate of the \emph{expected iterates} along a particular eigenvector. 
However, as noted in \cref{subsec:fst_subspace_conv}, this does not lead to a meaningful convergence rate for the iterates produced by the algorithm, and
this also fails to capture the behavior over an entire subspace. 
Thus, our analysis provides the first concrete characterization of the convergence rate of tail-averaged \sap{} along the top eigenvectors.

\section{Proofs of the Main Results}
\label{sec:proofs}
In this section, we give the detailed proofs for the main theoretical results given in Section \ref{subsec:fst_subspace_conv}, namely Theorem \ref{theorem:main} and Corollary \ref{corollary:main}. We start with an informal overview of the analysis in Section \ref{s:proof-overview}, followed by a formal convergence analysis of \sap{} for solving positive-definite linear systems in Section \ref{s:proof-general}, and finally adapting the results to GP posterior mean estimation in Section \ref{s:proof-gp}.

Throughout, we adopt the shorthand $A_\lambda \coloneqq A + \lambda I$, where $A$ is a square matrix, $\lambda \in \R$, and $I$ denotes the identity matrix of the same size as $A$.

\subsection{Overview of the analysis}
\label{s:proof-overview}
Let $w_\star:=K_\lambda^{-1}y\in\R^n$ be the solution of the linear system defined by $K_\lambda$ and $y$, an let us use $w_t\in\R^n$ to denote the iterates produced by \sapfull{} when solving that system.
To convert from the posterior mean error to the Euclidean norm, we first write the spectral basis functions as $u^{(i)}=\frac{1}{\sqrt{\lambda_i}}k(\cdot,X)v_{i}$, where $(\lambda_i,v_i)$ is the $i^{th}$-largest eigenpair of $K$.  Thus, for any $w\in\R^n$ and $m=k(\cdot,X)w\in\Hc$, we have $\| \rmproj_\ell(m-m_n)\|_{\Hc} = \|Q_{\ell}K^{1/2}(w - w_{\star})\|\leq \|Q_{\ell}K_\lambda^{1/2}(w-w_\star)\|$, where $Q_{\ell}=\sum_{i=1}^\ell v_iv_i^T$ is the projection onto the \topl{} subspace of $K$.

Having converted to the Euclidean norm, we express the \sapfull{} update as a recursive formula for its (scaled) residual vector as follows:
\begin{align*}
    \Delta_{t+1} := K_\lambda^{1/2}(& w_{t+1} - w_\star) = (I - \Pi_{\B}) K_\lambda^{1/2}(w_t - w_\star) = (I-\Pi_{\B})\Delta_t, 
\end{align*}
where $\Pi_{\B} = K_\lambda^{1/2} I_{\B}^T (K_{\B,\B}+\lambda I)^{\dagger} I_{\B} K_\lambda^{1/2}$ is a random projection defined by $\B$. To characterize the expected convergence of \sapfull{}, we must therefore control the average-case properties of $\Pi_{\B}$. We achieve this by relying on a row sampling technique known as determinantal point processes (DPPs) \citep{kulesza2012determinantal,derezinski2021determinantal}. A $b$-$\mathrm{DPP}(K_\lambda)$ is defined as a distribution over size $b$ index sets $\B\subseteq[n]$ such that $\Pr[\B]\propto\det(\Kbb + \lambda I)$. DPPs can be sampled from efficiently \citep{calandriello2020sampling,anari2024optimal}: after an initial preprocessing cost of $\tilde O(nb^2)$, we can produce each DPP sample of size $b$ in $\tilde O(b^3)$ time. 

Prior work \citep{derezinski2024solving} has shown that when $\B$ is drawn according to $2b$-$\mathrm{DPP}(K_\lambda)$, then the expectation of $\Pi_{\B}$ has the same eigenbasis as $K$.
Concretely, we can show that $\bar\Pi := \E\,\Pi_{\B} = V\Lambda V^T$ with $\Lambda = \mathrm{diag}(\bar\lambda_1,...,\bar\lambda_n)$, where $\bar\lambda_i\geq \frac1{1+\phi(b,i)}$ and $V$ consists of the eigenvectors of $K$. This leads to a convergence guarantee for the \emph{expected} residual vector:
\begin{align*}
\|Q_{\ell}\E\,\Delta_{t+1}\|
    &= \|Q_\ell(I - \bar\Pi)Q_\ell \E\,\Delta_t\|
    \leq (1 - \bar\lambda_\ell)\|Q_\ell \E\,\Delta_t\|,
\end{align*}
where we used that $\bar\Pi$, $K_\lambda$, and $Q_\ell$ commute with each other. This would suggest that we should obtain a fast linear rate $(1-\frac1{1+\phi(b,\ell)})^t$ for the \sapfull{} iterates in the \topl{} subspace norm. However, it turns out that the actual residual does not attain the same convergence guarantee as its expectation because, unlike $\bar\Pi$, the random projection $\Pi_{\B}$ does not commute with $K_\lambda$ or $Q_\ell$. Indeed, 
\begin{align*}
\E\|Q_{\ell}\Delta_{t+1}\|^2 = \Delta_t^T\E \left[ (I-\Pi_{\B})Q_\ell(I-\Pi_{\B}) \right]\Delta_t,
\end{align*}
and even though the matrix $(I-\Pi_{\B})Q_\ell(I-\Pi_{\B})$ is low-rank, its expectation may not be, since $\Pi_{\B}$ does not commute with $Q_\ell$. This means that we have no hope of showing a linear convergence rate along the \topl{} subspace. Nevertheless, we are still able to obtain the following bound:
\begin{align*}
    \E \left[ (I-\Pi_{\B})Q_\ell(I-\Pi_{\B}) \right] \preceq Q_\ell(I-\bar\Pi) + (I-Q_\ell)\bar\Pi(I-\bar\Pi).
\end{align*}
This results in a bias-variance decomposition: the first term is the bias, which exhibits fast convergence in \topl{} subspace, while the second term is the variance, which accounts for the noise coming from the orthogonal complement subspace. We then use tail averaging to decay this noise, which allows us to benefit from the fast convergence of expected iterates through a sublinear rate $\frac{\phi(b,\ell)}{t}$.

Finally, to attain the asymptotically faster linear rate for the residual vectors, we observe that every \topl{} subspace norm is upper-bounded by the full norm: $\|Q_\ell\Delta_t\|\leq \|\Delta_t\|$. Thus, we can effectively repeat the above analysis with $\ell=n$ (since $Q_n=I$), in which case the variance term becomes zero, and we recover a linear rate of the form $\left( 1-\frac{1}{1 + \phi(b,n)} \right)^t$.

\subsection{Convergence along \topl{} dimensional subspace for positive-definite matrices}\label{s:proof-general}
In this section, we consider the general problem of solving a linear system $\nA \nw = \ny$ for an $n \times n$ positive-definite matrix $\nA$. 
We provide theoretical results providing the fast convergence guarantees for the iterate sequence generated by SAP along the \topl{} dimensional subspace of $A$, with \cref{theorem:fast_convg_tark_2} being the main result of this section. 
We use \cref{theorem:fast_convg_tark_2} in the next section to derive the subspace convergence result for posterior mean in GP inference, proving \cref{theorem:main}.

\paragraph{Preliminaries and notation} We start by introducing the notation and some results from existing literature for SAP and DPPs.
Let $A$ be an $n\times n$ symmetric positive-definite matrix and let $A = V D V^T$ be its eigendecomposition, where $D$ is diagonal with entries $\lambda_1\geq ...\geq\lambda_n$. For other matrices we use $\lambda_i(M)$ to denote the eigenvalues of $M$. For any vector $v\in\R^n$, define $\|v\|_A := \sqrt{v^TAv}$.
Let $V_\ell \in \R^{n \times \ell}$ consist of the \topl{} orthonormal eigenvectors of $A$ and $Q_\ell = V_\ell V_\ell^T$ be the orthogonal projector corresponding to the \topl{} eigenvectors of $A$. 
We rely on subsampling from fixed-size DPPs \citep{kulesza2012determinantal,derezinski2021determinantal} for our theoretical results. 
Here, for convenience, we consider the block size to be $2b$, in contrast to $b$ as considered in \cref{alg:sap}.
For a fixed $b$ and a positive-definite matrix $A$, we identify $2b$-DPP($A$) as a distribution over subsets of $\{1, \ldots , n\}$ of size $2b$, where any subset $\mathcal{B}$ has probability proportional to the corresponding principal submatrix of $A$, i.e., $\det(A_{\mathcal{B},\mathcal{B}})$. 
The subsampling matrix $S \in \R^{2b\times n}$ corresponding to the set $\B$ is defined as a matrix whose rows are the standard basis vectors associated with the indices in $\B$. For notational convenience, we will sometimes say that $S$ is drawn from $2b$-DPP($A$), or $S\sim 2b$-DPP($A$), keeping the index set $\B$ implicit. 

We use the following result from the literature that provides an exact characterization of the eigenvectors of the expected projection matrix that arises in the analysis of sketch-and-project methods, when using subsampling from fixed-size DPPs.

\begin{lemma}[Expected projection under $2b$-DPP($A$), adapted from Lemma 4.1 \citep{derezinski2024solving}]\label{lemma:fix_size_dpp}
    Let $A \in \R^{n\times n}$ be a symmetric positive-definite matrix with eigenvalues $\lambda_1\geq \lambda_2\geq ...\geq\lambda_n$, and let $1 \leq b < n/2$ be fixed. Let $S \sim 2b$-DPP($A$). 
    Then, we have
    \begin{align*}
    \E[A^{1/2} S^T (S A S^T)^\dagger S A^{1/2}] = V D' V^T,
    \end{align*}
    where $D'$ is a diagonal matrix with $j^{th}$ diagonal entry is lower bounded by $\frac{\lambda_j}{\lambda_j + \frac{1}{b} \sum_{i>b}{\lambda_i}}$.
\end{lemma}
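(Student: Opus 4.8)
The plan is to reduce the claim to a purely combinatorial inequality about elementary symmetric polynomials of the eigenvalues of $A$. First I would record that $\Pi_S := A^{1/2}S^T(SAS^T)^\dagger S A^{1/2}$ is the orthogonal projection onto the column space of $A^{1/2}S^T$, so $\E[\Pi_S]$ is symmetric with spectrum in $[0,1]$. The structural heart of the statement---that $\E[\Pi_S]$ is diagonalized by the eigenbasis $V$ of $A$, with $j$-th eigenvalue given exactly by the fixed-size DPP marginal $\bar\lambda_j = \lambda_j\, e_{2b-1}(\lambda_{-j})/e_{2b}(\lambda_1,\dots,\lambda_n)$---is precisely the content inherited from Lemma 4.1 of \citep{derezinski2024solving}; here $e_k$ denotes the $k$-th elementary symmetric polynomial and $\lambda_{-j}$ the multiset of eigenvalues with $\lambda_j$ deleted. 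I would take this identity as the starting point, so that it remains only to lower bound $\bar\lambda_j$ by $\lambda_j/(\lambda_j + \tfrac1b\sum_{i>b}\lambda_i)$.

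Next I would rewrite $\bar\lambda_j$ in a form amenable to bounding. Splitting the size-$2b$ subsets according to whether they contain index $j$ gives the identity $e_{2b}(\lambda_1,\dots,\lambda_n) = e_{2b}(\lambda_{-j}) + \lambda_j\, e_{2b-1}(\lambda_{-j})$, and substituting this into the marginal formula yields $\bar\lambda_j = \lambda_j/\big(\lambda_j + e_{2b}(\lambda_{-j})/e_{2b-1}(\lambda_{-j})\big)$. Since $t \mapsto \lambda_j/(\lambda_j + t)$ is decreasing, the desired bound is therefore equivalent to the single inequality $e_{2b}(\lambda_{-j})/e_{2b-1}(\lambda_{-j}) \le \tfrac1b\sum_{i>b}\lambda_i$.

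To prove this inequality I would use a head/tail pigeonhole argument, which is where the hypothesis $b < n/2$ (equivalently, block size $2b \le n-1$) enters. Writing $x = \lambda_{-j}$ for the $n-1$ eigenvalues, designate the $b$ largest of them as the ``head'' and the rest as the ``tail.'' Every size-$2b$ subset contributing to $e_{2b}(x)$ must place at least $b$ of its elements in the tail, since the head has only $b$ elements. Expanding $(\sum_{i \in \mathrm{tail}} x_i)\, e_{2b-1}(x)$ and grouping terms, each monomial $\prod_{i\in\B}x_i$ with $|\B| = 2b$ is produced once for every tail element of $\B$, i.e. at least $b$ times; discarding the remaining nonnegative terms gives $(\sum_{i\in\mathrm{tail}}x_i)\, e_{2b-1}(x) \ge b\, e_{2b}(x)$. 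Finally, because deleting one eigenvalue can only decrease each order statistic (interlacing), the tail sum of $\lambda_{-j}$ is at most $\sum_{i>b}\lambda_i$, which yields the claimed bound on the ratio and hence $\bar\lambda_j \ge \lambda_j/(\lambda_j + \tfrac1b\sum_{i>b}\lambda_i)$. I expect the main obstacle to be the first, structural step: the exact diagonalization of $\E[\Pi_S]$ in the eigenbasis of $A$ with DPP-marginal eigenvalues is the genuinely deep input, relying on the determinantal machinery of \citep{derezinski2024solving}; once that is in hand, the remaining steps are elementary symmetric-polynomial manipulations.
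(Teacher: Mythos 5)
Your proof is correct, but the comparison here is asymmetric: the paper never proves this lemma at all. It is imported as a black box (``adapted from Lemma 4.1 of \citep{derezinski2024solving}''), and the appendix proceeds directly to use it. What you have done is reconstruct the argument that lives inside that citation, and your division of labor is the right one: the genuinely deep input is the exact characterization $\E[\Pi_S]=VD'V^T$ with $D'_{jj}=\lambda_j\,e_{2b-1}(\lambda_{-j})/e_{2b}(\lambda_1,\dots,\lambda_n)$, which is the known fixed-size DPP expectation formula and cannot be obtained by elementary means; everything else is symmetric-function algebra. Those remaining steps all check out: the recurrence $e_{2b}(\lambda)=e_{2b}(\lambda_{-j})+\lambda_j e_{2b-1}(\lambda_{-j})$ converts the eigenvalue into $\lambda_j/\bigl(\lambda_j+e_{2b}(\lambda_{-j})/e_{2b-1}(\lambda_{-j})\bigr)$; the head/tail pigeonhole (every $2b$-subset of the $n-1$ values $\lambda_{-j}$ meets the tail in at least $b$ elements, which is exactly where $2b\le n-1$, i.e.\ $b<n/2$, is needed) gives $\bigl(\sum_{i\in\mathrm{tail}}x_i\bigr)e_{2b-1}(x)\ge b\,e_{2b}(x)$ after discarding the nonnegative non-squarefree terms; and since the order statistics of $\lambda_{-j}$ are dominated term-by-term by those of $\lambda$, the tail sum is at most $\sum_{i>b}\lambda_i$. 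One caveat you should make explicit if you write this up: your starting identity (the \emph{exact} eigenvalues of $\E[\Pi_S]$) is strictly stronger than the lemma as stated here, which asserts only a lower bound on $D'_{jj}$; so you must cite the exact-formula result from the DPP literature rather than the adapted statement itself, otherwise the argument is circular. With that citation in place, your proposal is a correct, essentially self-contained derivation of the quantitative bound that the paper simply assumes.
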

As exact sampling from DPPs is often expensive and requires performing operations as costly as performing the eigendecomposition of $A$ \citep{kulesza2012determinantal}, numerous works have looked at approximate sampling from these distributions \citep{calandriello2020sampling,anari2024optimal}. 
Since we exploit the exact characterization of the eigenvectors of the expected projection matrix, our theory requires a very accurate DPP sampling algorithm.
Fortunately, the Markov Chain Monte Carlo tools developed by \cite{anari2020isotropy,anari2024optimal} allow near-exact sampling from fixed-size DPPs (in the sense that the samples are  with high probability indistinguishable from the exact distribution), while offering significant computational gains over exact sampling.

\begin{lemma}[Sampling from $b$-DPP($A$), adapted from \cite{anari2024optimal}]\label{lemma:dpp_sampling}
Given a positive-definite matrix $A$, there exists an algorithm that draws $t \geq 1$ approximate samples from $2b$-DPP($A$) in time $O(n b^2 \log^4 n + t b^3 \log^3 n)$. 
Furthermore, each drawn sample is indistinguishable from an exact sample from $2b$-DPP($A$) with probability at least $1-n^{-O(1)}$.
\end{lemma}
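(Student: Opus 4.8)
The plan is to derive \cref{lemma:dpp_sampling} from the Markov-chain machinery for fixed-size strongly Rayleigh distributions developed in \cite{anari2024optimal}, rather than building a sampler from scratch. The target measure $2b$-$\mathrm{DPP}(A)$ is a $k$-homogeneous strongly Rayleigh distribution on $\binom{[n]}{2b}$ with $k=2b$, so the natural sampler is the down-up (Glauber) walk: from a size-$2b$ set $S$, remove a uniformly random element to reach size $2b-1$, then re-add an index $i\notin S'$ with probability proportional to $\det(A_{S'\cup\{i\}})$. This walk is reversible with stationary distribution exactly $2b$-$\mathrm{DPP}(A)$, so everything reduces to three ingredients: (i) a mixing-time bound, (ii) efficient preprocessing and per-step costs, and (iii) a translation of the resulting total-variation error into the ``indistinguishable with high probability'' guarantee.

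For (i), I would invoke the spectral/entropic independence theory for $k$-DPPs: these measures are spectrally independent with an $O(1)$ constant after the standard external-field rescaling, which yields a modified log-Sobolev inequality and hence a mixing time of $\widetilde{O}(b)$ steps to reach total-variation distance $n^{-c}$ for any constant $c$ of our choosing. The $\log$ factors absorb the dependence on $1/\pi_{\min}$, which is at most $\mathrm{poly}(n)$ once the chain is warm-started. Supplying this warm start is precisely what the $O(nb^2\log^4 n)$ preprocessing pays for: I would first compute a constant-factor approximation to the ($2b$-effective-dimension) ridge leverage scores of $A$ and use them to draw an initial size-$2b$ set from a product/coreset proxy whose density ratio against the true DPP is $\mathrm{poly}(n)$, bounding $\log(1/\pi_{\min})$ along the trajectory. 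Estimating these leverage scores to constant accuracy via sketching is the dominant preprocessing cost and is exactly where the $nb^2$ (times polylog) factor enters.

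For (ii), each down-up step must sample the re-added index from the conditional inclusion law $\Pr[i]\propto\det(A_{S'\cup\{i\}})$, which is proportional to the Schur-complement variance of row $i$ against $S'$. A naive scan over all $n$ candidates costs $O(n)$ per step and would reintroduce an $n$ factor into the per-sample cost, so to obtain the $n$-free $\widetilde{O}(b^3)$ term I would use the sublinear-time strategy of \cite{anari2024optimal}: build a fixed proposal distribution from the approximate leverage scores during preprocessing, then implement each addition step by rejection sampling against this proposal. Maintaining a Cholesky-type factorization of the current $2b\times 2b$ principal submatrix lets each removal/addition be handled by rank-one updates and downdates, and the acceptance test reads off a Schur complement; no step rescans all $n$ rows, so each step costs only $\widetilde{O}(b^2)$. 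Over the $\widetilde{O}(b)$ mixing steps this yields $\widetilde{O}(b^3)$ per sample, i.e.\ the $t b^3 \log^3 n$ term, with the chain advanced independently for each of the $t$ requested draws after the shared preprocessing.

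Finally, for (iii), once the walk is run for its mixing time its output $\hat\mu$ satisfies $\|\hat\mu - 2b\text{-}\mathrm{DPP}(A)\|_{\mathrm{TV}}\le n^{-O(1)}$; the coupling characterization of total variation then furnishes a joint distribution under which the approximate draw equals an exact $2b$-$\mathrm{DPP}(A)$ draw with probability at least $1-n^{-O(1)}$, which is exactly the claimed indistinguishability. I expect the genuine difficulty to lie entirely in ingredients (i) and (ii): obtaining the $\widetilde{O}(b)$ mixing time together with only $O(nb^2)$ polylog preprocessing requires the sharp high-dimensional-expander / entropic-independence analysis and a warm start of polynomially bounded potential, arranged so that the per-step local computations never rescan the full matrix. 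By contrast, the indistinguishability conversion in (iii) is routine.
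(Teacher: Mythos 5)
There is nothing in the paper for your argument to be compared against: the paper does not prove \cref{lemma:dpp_sampling} at all. The lemma is explicitly labeled ``adapted from \cite{anari2024optimal}'' and is imported as a black box; the surrounding text merely remarks that the MCMC tools of \cite{anari2020isotropy,anari2024optimal} permit near-exact sampling from fixed-size DPPs with the stated preprocessing and per-sample costs. So what you have written is a reconstruction of the \emph{cited} work's internals, not an alternative to any argument the authors give.

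Judged on its own terms, your sketch assembles the right ingredients --- the down-up walk on $\binom{[n]}{2b}$ with stationary law $2b$-$\mathrm{DPP}(A)$, entropic-independence-based mixing in $\widetilde{O}(b)$ steps, leverage-score preprocessing at cost $\widetilde{O}(nb^2)$, and the standard coupling argument converting a total-variation bound of $n^{-O(1)}$ into the ``indistinguishable with probability $1-n^{-O(1)}$'' statement --- and the overall accounting reproduces the claimed $O(nb^2\log^4 n + tb^3\log^3 n)$ bound. One substantive divergence from the cited reference is your ingredient (ii): the sublinear-in-$n$ per-sample cost in \cite{anari2024optimal} is obtained by \emph{domain sparsification} (an intermediate sampling step, built from the preprocessed approximate leverage scores, that reduces the ground set to $\mathrm{poly}(b)\cdot\mathrm{polylog}(n)$ elements once per sample, after which the chain runs entirely on the reduced domain), rather than by per-step rejection sampling against a fixed proposal. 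Your rejection-sampling variant is plausible but leaves its crux unaddressed: you never bound the acceptance probability of the proposal, and without a guarantee that the leverage-score proposal dominates the true conditional inclusion law up to a constant (uniformly over the states the chain visits), the expected number of rejections per step could reintroduce a dependence on $n$. That control is exactly what the domain-sparsification mechanism is engineered to provide, so if you want a self-contained proof you should either carry out that rejection-rate analysis or route the argument through intermediate sampling as the cited paper does.
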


We now provide the main result of this section, incorporating the above two results in the analysis of sketch-and-project (SAP) along the \topl{} subspace of $A$.  Since throughout this section we focus on a linear system with a single right-hand side, we will use $\nw_t$ instead of $W_t$ to denote the \sap{} iterates, as these are always vectors. 
The update rule for SAP is now:
\begin{align}
&\nw_{t + 1} = \nw_t - \nS^T ( \nS \nA \nS^T)^{-1} \nS (\nA \nw_t - y) \nonumber \\
  \text{or equivalently}  \ \nA^{1/2} (& \nw_{t+1} - \nw_{\star}) = (\nI - \Pi) \nA^{1/2}(\nw_t - \nw_{\star}), \label{eqn:rcd_update}
\end{align}
where $\Pi = \nA^{1/2} \nS^T (\nS \nA \nS^T)^{\dagger} \nS \nA^{1/2}$ and $\nw_{\star} = \nA^{-1}y$. Let $\eproj$ denote $\E\,\Pi$. 

In the rest of the section we prove the following result, which is a slight generalization of Theorem~\ref{theorem:main}.
\begin{theorem}[Fast convergence along \topl{} subspace] \label{theorem:fast_convg_tark_2}
    Let $b < n/2$ be fixed and $\nS \in \R^{2b \times n}$ be a random subsampling matrix sampled from $2b$-DPP($\nA$). Furthermore, let $1 \leq \ell <b$ be also fixed and $\lambda_\ell(\eproj) \geq 2\lambda_n(\eproj)$. For any $t > 1$ define $\bar \nw_t = \frac{2}{t} \sum_{i=t/2}^{t-1}{\nw_i}$ where $\nw_i$ are generated using \eqref{eqn:rcd_update}. Then, 
    \begin{align*}
        \E\| \nA^{1/2} (\bar \nw_t - \nw_\star) \|_{Q_\ell}^2 
        & \leq \left( 1-\frac{1}{1+\phi(b,\ell)} \right)^{t/2} \| \nA^{1/2} (\nw_0 - \nw_\star) \|_{Q_\ell}^2 \\
        & \quad + \frac{8}{t} \phi(b,\ell) \left( 1 - \frac{1}{2 \phi(b,n)} \right)^{t/2} \| \nw_0-\nw_\star \|^2_{\nA},
    \end{align*}
    where $\phi(b,p) = \frac{1}{b} \sum_{i>b} \frac{\lambda_i}{\lambda_p}$. 
\end{theorem}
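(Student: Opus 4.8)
The plan is to track the scaled residual $\Delta_t := A^{1/2}(w_t - w_\star)$, which by \eqref{eqn:rcd_update} obeys $\Delta_{t+1} = (I-\Pi)\Delta_t$ for the random projection $\Pi = A^{1/2}S^T(SAS^T)^{\dagger}SA^{1/2}$. Writing $\bar\Delta := \frac{2}{t}\sum_{i=t/2}^{t-1}\Delta_i = A^{1/2}(\bar w_t - w_\star)$, the target quantity is $\E[\bar\Delta^T Q_\ell\bar\Delta]$. The first step is to invoke \cref{lemma:fix_size_dpp}: since $S\sim 2b$-DPP$(A)$, the expected projection $\bar\Pi := \E\,\Pi$ shares the eigenbasis $V$ of $A$ and has eigenvalues $\bar\lambda_j \geq \frac{1}{1+\phi(b,j)}$. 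Hence $\bar\Pi$, $A$, and $Q_\ell = V_\ell V_\ell^T$ commute and are simultaneously diagonalized by $V$, which is what makes the rest of the bookkeeping tractable.

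The technical core is a second-moment operator inequality: for any orthogonal projection $Q$ commuting with $\bar\Pi$,
\begin{align*}
\E\bigl[(I-\Pi)Q(I-\Pi)\bigr] \preceq Q(I-\bar\Pi) + (I-Q)\bar\Pi(I-\bar\Pi).
\end{align*}
I would prove this by writing $(I-\Pi)Q(I-\Pi) = (I-\Pi) - (I-\Pi)(I-Q)(I-\Pi)$ (using $(I-\Pi)^2 = I-\Pi$), taking expectations, and lower-bounding $\E[(I-\Pi)(I-Q)(I-\Pi)] \succeq (I-\bar\Pi)(I-Q)(I-\bar\Pi)$ by operator Jensen, which applies because $X\mapsto X(I-Q)X$ is operator convex on symmetric $X$ when $I-Q\succeq 0$; expanding $(I-\bar\Pi)-(I-\bar\Pi)(I-Q)(I-\bar\Pi)$ in the common eigenbasis gives the stated right-hand side. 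Two specializations follow. Taking $Q=I$ yields the exact identity $\E[(I-\Pi)^2]=I-\bar\Pi$, hence $\E\|\Delta_{i+1}\|^2\leq(1-\bar\lambda_n)\E\|\Delta_i\|^2$; since $b<n/2$ forces $\phi(b,n)\geq 1$ and thus $\bar\lambda_n\geq\frac{1}{1+\phi(b,n)}\geq\frac{1}{2\phi(b,n)}$, I obtain the full-norm linear contraction $\E\|\Delta_i\|^2\leq\bigl(1-\frac{1}{2\phi(b,n)}\bigr)^{i}\|\Delta_0\|^2$.

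With these ingredients I would split $\Delta_i = b_i + f_i$ into its mean trajectory $b_i = \E\Delta_i = (I-\bar\Pi)^i\Delta_0$ and mean-zero fluctuation $f_i$, so that $\E[\bar\Delta^T Q_\ell\bar\Delta] = \bar b^T Q_\ell\bar b + \E\|Q_\ell\bar f\|^2$ (the cross term vanishes since $\bar b$ is deterministic and $\E\bar f=0$). For the \emph{bias}, diagonalizing in $V$ turns $\bar b^TQ_\ell\bar b$ into geometric sums over the top-$\ell$ coordinates; bounding each sum by its largest term and using $\bar\lambda_\ell\geq\frac{1}{1+\phi(b,\ell)}$ gives $\bar b^TQ_\ell\bar b\leq\bigl(1-\frac{1}{1+\phi(b,\ell)}\bigr)^{t/2}\|Q_\ell\Delta_0\|^2$, the first term of the theorem. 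For the \emph{variance}, I expand $\E\|Q_\ell\bar f\|^2 = \frac{4}{t^2}\sum_{i,j}\mathrm{tr}\bigl(Q_\ell\,\E[f_i f_j^T]\bigr)$ and use the martingale identity $\E[f_j f_i^T]=(I-\bar\Pi)^{j-i}C_i$ for $j\geq i$ (where $C_i=\E[f_if_i^T]$), which follows from $\E[f_j\mid\mathcal F_i]=(I-\bar\Pi)^{j-i}f_i$. The resolvent summation $I+2\sum_{m\geq1}(I-\bar\Pi)^m = 2\bar\Pi^{-1}-I\preceq 2\bar\Pi^{-1}$, together with $Q_\ell\bar\Pi^{-1}\preceq(1+\phi(b,\ell))Q_\ell$, collapses the double sum to $\E\|Q_\ell\bar f\|^2\leq\frac{8(1+\phi(b,\ell))}{t^2}\sum_{i=t/2}^{t-1}\mathrm{tr}(Q_\ell C_i)$. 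Finally, bounding $\mathrm{tr}(Q_\ell C_i)\leq\E\|f_i\|^2\leq\E\|\Delta_i\|^2\leq\bigl(1-\frac{1}{2\phi(b,n)}\bigr)^{t/2}\|\Delta_0\|^2$ uniformly over the tail $i\geq t/2$, the sum is at most $\frac{t}{2}\bigl(1-\frac{1}{2\phi(b,n)}\bigr)^{t/2}\|\Delta_0\|^2$, producing the $\tfrac1t$ variance term (the absolute constant, and the reduction of $1+\phi(b,\ell)$ to $\phi(b,\ell)$, are handled by the constant bookkeeping, where the hypothesis $\lambda_\ell(\bar\Pi)\geq 2\lambda_n(\bar\Pi)$ enters).

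I expect the variance term to be the main obstacle. The difficulty is precisely that $\Pi$ does \emph{not} commute with $Q_\ell$, so the true residual cannot inherit the fast linear rate enjoyed by its expectation; this is what forces the bias–variance split, the martingale cross-covariance identity, and the resolvent summation, and it is why tail averaging---which supplies the $\tfrac1t$ decay of the accumulated noise---is indispensable rather than cosmetic. The two subtler points I would verify carefully are the operator-convexity step underlying the key inequality and the vanishing of the bias–fluctuation cross terms in the covariance recursion, both of which hinge on the independence of the fresh projection at step $i$ from the past $\mathcal F_i$.
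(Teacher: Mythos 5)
Most of your architecture is sound and closely parallels the paper's: your key operator inequality $\E[(I-\Pi)Q_\ell(I-\Pi)] \preceq Q_\ell(I-\eproj) + (I-Q_\ell)\eproj(I-\eproj)$ is exactly the paper's inequality \eqref{eqn:convg_l2} (your derivation via $(I-\Pi)Q(I-\Pi)=(I-\Pi)-(I-\Pi)(I-Q)(I-\Pi)$ plus matrix Jensen is equivalent to theirs), your bias bound and full-norm contraction (including $\phi(b,n)\geq 1$ from $b<n/2$) are correct, and your bias/fluctuation split with the covariance identity $\E[f_jf_i^T]=(I-\eproj)^{j-i}C_i$ and the resolvent bound $I+2\sum_{m\geq 1}(I-\eproj)^m\preceq 2\eproj^{-1}$ is a clean alternative organization to the paper's double-sum/tower-rule argument in \cref{theorem:convg_sub_tark}.

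The genuine gap is in the variance term, at the step $\mathrm{tr}(Q_\ell C_i)\leq\E\|f_i\|^2\leq\E\|\Delta_i\|^2$. Running your chain to the end gives
\begin{align*}
\E\|Q_\ell\bar f\|^2 \;\leq\; \frac{4\left(1+\phi(b,\ell)\right)}{t}\left(1-\frac{1}{2\phi(b,n)}\right)^{t/2}\|\Delta_0\|^2,
\end{align*}
i.e., prefactor $1+\phi(b,\ell)$ where the theorem demands $8\,\phi(b,\ell)$. This is not constant bookkeeping: $1+\phi(b,\ell)\leq 8\phi(b,\ell)$ only when $\phi(b,\ell)\geq 1/7$, whereas the regime that makes the theorem (and its downstream consequences, \cref{remark:main} and \cref{corollary:fast_convg_gp}) interesting is $\phi(b,\ell)\ll 1$; under fast spectral decay with $b\gg\ell$, the claimed variance prefactor becomes arbitrarily small while your bound stalls at roughly $4/t$. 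Nor can the hypothesis $\lambda_\ell(\eproj)\geq 2\lambda_n(\eproj)$ close this gap where you claim it enters: that hypothesis controls the spectral gap governing the geometric sums (the quantity $1-\alpha$ in the paper's notation) and implies no relation between $1+\phi(b,\ell)$ and $\phi(b,\ell)$. What the crude step discards is precisely the top-$\ell$ structure of the fluctuation: $\mathrm{tr}(Q_\ell C_i)=\E\|Q_\ell f_i\|^2$ can be far smaller than $\E\|\Delta_i\|^2$, because by your own key inequality noise enters the top-$\ell$ block only through the injection term $(I-Q_\ell)\eproj(I-\eproj)$, and it is the coefficient of that term---combined with the geometric sums and the gap hypothesis---that produces the factor $\phi(b,\ell)$. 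To repair the argument you must control $\E\|Q_\ell\Delta_i\|^2$ itself by recursing the key inequality and unrolling, which is exactly what the paper does in \cref{lemma:convg_l2} and \cref{corollary:convg_l2_v2} before tail averaging; the hypothesis $\lambda_\ell(\eproj)\geq 2\lambda_n(\eproj)$ is genuinely spent there, in bounding $\lambda_{\ell+1}(\eproj(I-\eproj))/(1-\alpha)^2$ by $4\phi(b,\ell)\gamma$.
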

The assumption $\lambda_\ell(\bar\Pi) \geq 2\lambda_n(\bar\Pi)$ is not restrictive buts lets us provide a cleaner analysis by avoiding the edge case of $\lambda_\ell(\bar\Pi) \approx \lambda_n(\bar\Pi)$. In fact, in this corner case, we can simply rely on the existing SAP analysis from previous works and recover our main result, Theorem \ref{theorem:main}. For completeness, we derive Theorem \ref{t:fast_convg_gp} for posterior GP mean inference in Section \ref{s:proof-gp} without the assumption $\lambda_\ell(\bar\Pi) \ge 2\lambda_n(\bar\Pi)$.
The proof of \cref{theorem:fast_convg_tark_2} appears after the proof of \cref{theorem:convg_sub_tark}. We build towards the proof starting with the following result for SAP \citep{derezinski2024solving}: 
\begin{lemma}[Linear convergence with SAP] \label{lemma:cord_desc}
   Let $\nS \in \R^{2b \times n}$ be a random subsampling matrix sampled from $2b$-DPP($\nA$). Then,
    \begin{align*}
        \E\| \nw_{t+1} - \nw_\star \|_{\nA}^2 \leq (1 - \lambda_n(\eproj)) \E \| \nw_t - \nw_\star \|_{\nA}^2.
    \end{align*}
\end{lemma}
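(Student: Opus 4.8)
The plan is to exploit the fact that the sketch-and-project update \eqref{eqn:rcd_update} acts on the scaled residual by an orthogonal projection, and then to pass from a single random projection to its expectation $\eproj$. First I would observe that, because $\nA$ is positive-definite and $\nS$ is a subsampling matrix, the compressed matrix $\nS\nA\nS^T$ is a principal submatrix of a positive-definite matrix, hence itself positive-definite and invertible; consequently the pseudoinverse in the definition of $\Pi = \nA^{1/2}\nS^T(\nS\nA\nS^T)^\dagger \nS\nA^{1/2}$ is a genuine inverse. A direct computation, using that $\nS\nA^{1/2}\nA^{1/2}\nS^T = \nS\nA\nS^T$ and the trivial identity $M^{-1}MM^{-1}=M^{-1}$, then shows that $\Pi$ is symmetric and idempotent, i.e., an orthogonal projection. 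In particular $\nI-\Pi$ is also an orthogonal projection, so $(\nI-\Pi)^T(\nI-\Pi)=\nI-\Pi$.

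Next I would introduce the scaled residual $\Delta_t := \nA^{1/2}(\nw_t-\nw_\star)$, so that $\|\nw_t-\nw_\star\|_{\nA}^2=\|\Delta_t\|^2$ and, by \eqref{eqn:rcd_update}, $\Delta_{t+1}=(\nI-\Pi)\Delta_t$. Using the projection identity above,
\begin{align*}
\|\Delta_{t+1}\|^2 = \Delta_t^T(\nI-\Pi)^T(\nI-\Pi)\Delta_t = \Delta_t^T(\nI-\Pi)\Delta_t.
\end{align*}
Conditioning on the iterate $\nw_t$ and taking expectation only over the fresh sketch $\nS$ drawn at step $t$ (which is independent of the past), the right-hand side becomes $\Delta_t^T(\nI-\eproj)\Delta_t$, where $\eproj=\E\,\Pi$.

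To conclude, I would control this quadratic form through the spectrum of $\eproj$. Since $\Pi$ is an orthogonal projection, its eigenvalues lie in $\{0,1\}$, so $\eproj=\E\,\Pi$ is symmetric positive-semidefinite with every eigenvalue in $[0,1]$; in particular $\eproj \succeq \lambda_n(\eproj)\nI$, whence $\nI-\eproj \preceq (1-\lambda_n(\eproj))\nI$. This gives
\begin{align*}
\E\big[\|\Delta_{t+1}\|^2 \,\big|\, \nw_t\big] = \Delta_t^T(\nI-\eproj)\Delta_t \leq (1-\lambda_n(\eproj))\|\Delta_t\|^2,
\end{align*}
and taking the total expectation via the tower property yields exactly $\E\|\nw_{t+1}-\nw_\star\|_{\nA}^2 \leq (1-\lambda_n(\eproj))\E\|\nw_t-\nw_\star\|_{\nA}^2$.

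The computations are routine; the only point requiring care is the first one, namely confirming that $\Pi$ is a bona fide orthogonal projection, so that the cross terms collapse and the error contracts by the \emph{first moment} operator $\nI-\eproj$ rather than by something involving second moments of $\Pi$. After that the contraction is a one-line eigenvalue bound, so I do not anticipate a substantive obstacle. The real content lies not in this lemma but downstream, where the eigenvalue characterization of $\eproj$ supplied by \cref{lemma:fix_size_dpp} converts $\lambda_n(\eproj)$ into the explicit smoothed-condition-number rate.
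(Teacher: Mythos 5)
Your proof is correct. Note that the paper does not actually prove this lemma itself --- it imports it directly from prior work (\cite{derezinski2024solving}) --- and your argument is precisely the standard sketch-and-project contraction proof underlying that citation: the update acts on the scaled residual $\Delta_t = \nA^{1/2}(\nw_t - \nw_\star)$ by the orthogonal projection $\nI - \Pi$, idempotence collapses the quadratic form to $\Delta_t^T(\nI - \Pi)\Delta_t$ so that only the \emph{first} moment $\eproj$ appears after conditioning, and the eigenvalue bound $\nI - \eproj \preceq (1 - \lambda_n(\eproj))\nI$ plus the tower rule finishes the argument. All steps check out, including the observation that $\nS\nA\nS^T = \nA_{\B,\B}$ is an invertible principal submatrix of a positive-definite matrix.
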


However, in the following result, we show that along the \topl{} eigenspace of $\nA$, the expected iterates can converge at a much faster rate than $1 - \lambda_n(\eproj)$.
\begin{lemma}[Convergence of expected iterates along \topl{} subspace] \label{lemma:convg_subsp_1}
    Let $\nS \in \R^{2b\times n}$ be a random subsampling matrix sampled from $2b$-DPP($\nA$). 
    Then
        \begin{align*}
         \|\nA^{1/2}\E_t[\nw_{t+1}-\nw_\star]\|_{Q_\ell}^2 \leq \left(1-\lambda_\ell(\eproj)\right)^2\| \nA^{1/2} (\nw_t - \nw_\star) \|_{Q_\ell}^2,
    \end{align*}
    where $\E_t$ denotes conditional expectation given $\nw_t$.
\end{lemma}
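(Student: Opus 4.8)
The plan is to directly analyze the conditional expectation of the SAP update projected onto the top-$\ell$ subspace, exploiting the fact that the expected projection matrix $\bar\Pi$ shares an eigenbasis with $\nA$. Starting from the update recursion \eqref{eqn:rcd_update}, taking conditional expectation given $\nw_t$ yields
\begin{align*}
\nA^{1/2}\E_t[\nw_{t+1}-\nw_\star] = (\nI - \bar\Pi)\,\nA^{1/2}(\nw_t - \nw_\star),
\end{align*}
since $\bar\Pi = \E\,\Pi$ and $\nw_t$ is fixed under $\E_t$. The key structural fact, supplied by \cref{lemma:fix_size_dpp}, is that $\bar\Pi = V D' V^T$ with the \emph{same} eigenbasis $V$ as $\nA = VDV^T$. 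Consequently $\bar\Pi$, $\nA^{1/2}$, and the projector $Q_\ell = V_\ell V_\ell^T$ all commute, and their eigenvalues are simultaneously ordered along the shared eigenbasis.

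**First I would** rewrite the left-hand side using this commuting structure. Writing $\delta_t := \nA^{1/2}(\nw_t - \nw_\star)$, we have $\|\nA^{1/2}\E_t[\nw_{t+1}-\nw_\star]\|_{Q_\ell}^2 = \|Q_\ell (\nI - \bar\Pi)\delta_t\|^2$. Because $Q_\ell$ commutes with $\bar\Pi$, we can pull the projector through: $Q_\ell(\nI - \bar\Pi) = (\nI - \bar\Pi)Q_\ell$, and on the range of $Q_\ell$ the operator $\nI - \bar\Pi$ acts as $\mathrm{diag}(1-\bar\lambda_1,\ldots,1-\bar\lambda_\ell)$ in the $V$-basis, where $\bar\lambda_i = \lambda_i(\bar\Pi)$. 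Since the eigenvalues of $\bar\Pi$ inherit the ordering of $\nA$ (larger eigenvalues of $\nA$ give larger diagonal entries $D'_j$, by the monotone lower bound $\frac{\lambda_j}{\lambda_j + \frac1b\sum_{i>b}\lambda_i}$ in \cref{lemma:fix_size_dpp}), the smallest relevant factor $1 - \bar\lambda_i$ over $i \le \ell$ is $1 - \bar\lambda_\ell = 1 - \lambda_\ell(\bar\Pi)$. This gives the operator-norm bound $\|Q_\ell(\nI-\bar\Pi)\|_{\mathrm{op}} \le 1 - \lambda_\ell(\bar\Pi)$ restricted to the top-$\ell$ subspace.

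**Then I would** combine these to conclude
\begin{align*}
\|Q_\ell(\nI-\bar\Pi)\delta_t\|^2 \le (1-\lambda_\ell(\bar\Pi))^2 \|Q_\ell \delta_t\|^2 = (1-\lambda_\ell(\bar\Pi))^2 \|\nA^{1/2}(\nw_t-\nw_\star)\|_{Q_\ell}^2,
\end{align*}
which is exactly the claimed inequality.

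**The main obstacle** I anticipate is carefully justifying that the relevant contraction factor is governed by $\lambda_\ell(\bar\Pi)$ rather than some other eigenvalue, and handling the interaction between $Q_\ell$ and $\bar\Pi$ rigorously. The subtle point is the direction of the monotonicity: the map $x \mapsto \frac{x}{x + c}$ (with $c = \frac1b\sum_{i>b}\lambda_i$) is increasing, so the larger eigenvalues of $\nA$ correspond to the larger eigenvalues of $\bar\Pi$, hence the smaller values of $1 - \bar\lambda_i$. Thus restricting to the top-$\ell$ subspace isolates the \emph{largest} $\bar\lambda_i$ values among $i \le \ell$, and the worst (largest) contraction factor on that subspace is $1 - \bar\lambda_\ell$, using $i = \ell$. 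The fact that $\bar\Pi$ and $Q_\ell$ genuinely commute—which is what allows the clean projection argument and is precisely what \emph{fails} for the random matrix $\Pi$ (as emphasized in the proof overview)—is the crux that makes this lemma about expected iterates tractable, and I would state this commutation explicitly before invoking it.
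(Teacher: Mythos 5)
Your proposal is correct and takes essentially the same route as the paper's proof: both project the update recursion onto $Q_\ell$, take conditional expectation to replace $\Pi$ with $\eproj$, invoke \cref{lemma:fix_size_dpp} to conclude that $\eproj$ shares the eigenbasis of $\nA$ and hence commutes with $Q_\ell$, and then bound the restriction of $\nI - \eproj$ to the \topl{} subspace in operator norm by $1 - \lambda_\ell(\eproj)$. Your explicit remark on the monotonicity of $x \mapsto x/(x+c)$ (so that the \topl{} eigenvectors of $\nA$ indeed carry the largest eigenvalues of $\eproj$) is a point the paper leaves implicit, and it is a welcome addition.
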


\begin{proof}
    We have
    \begin{align*}
        Q_\ell \nA^{1/2} (\nw_{t+1} - \nw_{\star}) = Q_\ell (\nI - \Pi) \nA^{1/2}(\nw_t - \nw_{\star}).
    \end{align*}
    Taking expectation on both sides and squaring we get,
    \begin{align*}
         \| \nA^{1/2} \E_t[\nw_{t+1}-\nw_{\star}] \|_{Q_\ell}^2 = (\nw_t - \nw_{\star})^T \nA^{1/2} (\nI -\eproj) Q_\ell (\nI - \eproj) \nA^{1/2} (\nw_t - \nw_{\star}).
    \end{align*}
    As $\nA = V \nD \nV^T$ we have $\Pi = V \nD^{1/2} V^T \nS^T (\nS \nV \nD \nV^T \nS^T)^{\dagger} \nS \nV \nD^{1/2} \nV^T$. 
    Due to \cref{lemma:fix_size_dpp} we know that $\eproj$ and $\nA$ share the same eigenvectors, therefore, $\eproj$ and $Q_\ell$ commute. 
    Consequently,
    \begin{align*}
         \|\nA^{1/2} \E_t[ \nw_{t+1} - \nw_{\star}] \|_{Q_\ell}^2 &= (\nw_t - \nw_{\star})^T \nA^{1/2} \nQ_\ell (\nI - \eproj)^2 \nQ_\ell \nA^{1/2} (\nw_t - \nw_{\star}) \\
        & \leq (1- \lambda_\ell (\eproj))^2 \|\nA^{1/2} (\nw_t - \nw_{\star}) \|^2_{Q_\ell}.
    \end{align*}
\end{proof}

We now analyze convergence along the \topl{} dimensional subspace in L2-norm by considering $\E \| \nA^{1/2} (\nw_{t+1} - \nw_{\star}) \|^2_{Q_\ell}$. 
We have,
\begin{align*}
   \E \| \nA^{1/2} (\nw_{t+1} - \nw_{\star}) \|^2_{Q_\ell} = (\nw_t-\nw_{\star})^T \nA^{1/2}  \E \left[ (\nI - \Pi) Q_\ell (\nI - \Pi) \right] \nA^{1/2} (\nw_t - \nw_{\star}).
\end{align*}
In particular, we need to upper bound $\E \left[ (\nI-\Pi) Q_\ell (\nI-\Pi) \right]$. 
We prove the following lemma:
\begin{lemma}[L2-norm error along \topl{} subspace] \label{lemma:convg_l2}
    Let $\nS \in \R^{2b\times n}$ be a random subsampling matrix sampled from $2b$-DPP($\nA$). 
    Then
    \begin{align*}
        \E\| \nA^{1/2} (\nw_{t+1}-\nw_{\star}) \|^2_{Q_\ell} 
        & \leq (1 - \lambda_{\ell}(\eproj)) \E \| \nA^{1/2}(\nw_t -\nw_{\star}) \|^2_{Q_\ell} + \lambda_{\ell + 1}(\eproj(\nI-\eproj)) \E \|\nA^{1/2} (\nw_t - \nw_{\star}) \|^2_{\nI-Q_\ell}.
    \end{align*}
\end{lemma}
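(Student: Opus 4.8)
The plan is to reduce the claimed bound to a single operator inequality and then read off the two terms by diagonalizing in the eigenbasis of $\nA$. Conditioning on $\nw_t$ and using the residual recursion \eqref{eqn:rcd_update}, we have
\begin{align*}
\E_t\| \nA^{1/2}(\nw_{t+1}-\nw_\star) \|_{Q_\ell}^2 = (\nw_t-\nw_\star)^T \nA^{1/2}\,\E\!\left[(\nI-\Pi)Q_\ell(\nI-\Pi)\right]\nA^{1/2}(\nw_t-\nw_\star),
\end{align*}
so it suffices to establish the operator inequality
\begin{align*}
\E\!\left[(\nI-\Pi)Q_\ell(\nI-\Pi)\right] \preceq Q_\ell(\nI-\eproj) + (\nI-Q_\ell)\eproj(\nI-\eproj),
\end{align*}
take the outer expectation, and bound each term. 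I would first record that by \cref{lemma:fix_size_dpp} the matrix $\eproj=\E\,\Pi$ shares the eigenbasis $\nV$ of $\nA$, so $\eproj$, $Q_\ell$, and $\nI-Q_\ell$ all commute.

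To prove the operator inequality, I would expand and use $\eproj Q_\ell = Q_\ell\eproj$ to obtain $\E[(\nI-\Pi)Q_\ell(\nI-\Pi)] = Q_\ell - 2Q_\ell\eproj + \E[\Pi Q_\ell\Pi]$. Writing $Q_\ell = \nI-(\nI-Q_\ell)$ and using that $\Pi$ is a projection ($\Pi^2=\Pi$) gives $\E[\Pi Q_\ell\Pi] = \eproj - \E[\Pi(\nI-Q_\ell)\Pi]$. The crux is therefore the lower bound
\begin{align*}
\E\!\left[\Pi(\nI-Q_\ell)\Pi\right] \succeq (\nI-Q_\ell)\eproj^2 .
\end{align*}
Substituting this in and collecting terms reproduces the target right-hand side exactly, so establishing this one inequality closes the matrix bound.

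This lower bound is the main obstacle, precisely because the random projection $\Pi$ does \emph{not} commute with $Q_\ell$, so we cannot simultaneously diagonalize. The key idea is a matrix Jensen (variance) argument: since $\Pi$ and $\nI-Q_\ell$ are symmetric projections, setting $M=\Pi(\nI-Q_\ell)$ gives $\Pi(\nI-Q_\ell)\Pi = MM^T$, and
\begin{align*}
\E[MM^T] - (\E M)(\E M)^T = \E\!\left[(M-\E M)(M-\E M)^T\right] \succeq 0.
\end{align*}
Since $\E M = \eproj(\nI-Q_\ell)$ and the factors commute, $(\E M)(\E M)^T = \eproj(\nI-Q_\ell)\eproj = (\nI-Q_\ell)\eproj^2$, which is exactly the claimed lower bound.

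Finally I would take the outer expectation and bound the two resulting terms spectrally in the basis $\nV$. On $\mathrm{range}(Q_\ell)$ the operator $\nI-\eproj$ has eigenvalues $1-\bar\lambda_i$ for $i\le\ell$, all at most $1-\lambda_\ell(\eproj)$, yielding the factor $(1-\lambda_\ell(\eproj))$ multiplying $\|\nA^{1/2}(\nw_t-\nw_\star)\|_{Q_\ell}^2$; on $\mathrm{range}(\nI-Q_\ell)$ the operator $\eproj(\nI-\eproj)$ has eigenvalues $\bar\lambda_i(1-\bar\lambda_i)$ for $i>\ell$, whose maximum is $\lambda_{\ell+1}(\eproj(\nI-\eproj))$, yielding that factor multiplying $\|\nA^{1/2}(\nw_t-\nw_\star)\|_{\nI-Q_\ell}^2$. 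The one point needing care here is identifying the top eigenvalue of $\eproj(\nI-\eproj)$ restricted to the complementary subspace with $\lambda_{\ell+1}(\eproj(\nI-\eproj))$, which relies on the monotone ordering of the $\bar\lambda_i$ guaranteed by \cref{lemma:fix_size_dpp}.
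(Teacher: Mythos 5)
Your proposal matches the paper's own proof essentially step for step: the same expansion of $\E[(\nI-\Pi)Q_\ell(\nI-\Pi)]$ using the commutativity of $\eproj$ and $Q_\ell$ from \cref{lemma:fix_size_dpp}, the same Jensen-type lower bound $\E[\Pi(\nI-Q_\ell)\Pi] \succeq \eproj(\nI-Q_\ell)\eproj$ (which you justify more explicitly via the matrix-variance identity $\E[MM^T]\succeq (\E M)(\E M)^T$), and the same final spectral bounds in the shared eigenbasis. The only differences are expository: you spell out the variance argument and explicitly flag the reliance on the ordering of the eigenvalues $\bar\lambda_i$ when identifying the coefficient $\lambda_{\ell+1}(\eproj(\nI-\eproj))$, two points the paper's proof leaves implicit.
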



\begin{proof}
First, we rewrite $\E \left[ (\nI-\Pi) Q_\ell (\nI-\Pi) \right]$ as follows:
    \begin{align}
         \E \left[(\nI-\Pi) Q_\ell (\nI-\Pi) \right] &= Q_\ell-\eproj \nQ_\ell - Q_\ell \eproj + \E[\Pi \nQ_\ell \Pi] \nonumber \\
         &=(\nI - \eproj) Q_\ell - Q_\ell \eproj + \E[\Pi \nQ_\ell \Pi] \label{eqn:convg_l2_l2}
    \end{align}
    Crucially, the first two terms terms in \eqref{eqn:convg_l2_l2} live in the top-$\ell$ subspace, but the third term does not. Nevertheless, we are still able to bound it as follows:
    \begin{align*}
        \E[\Pi \nQ_\ell \Pi] = \E[\Pi (\nI - (\nI -Q_\ell)) \Pi]=\E[\Pi] - \E[\Pi (\nI-Q_\ell) \Pi] \preceq \eproj - \eproj (\nI-Q_\ell) \eproj,
    \end{align*}
    where we used Jensen's inequality in the last relation. Substituting in \eqref{eqn:convg_l2_l2} we get,
    \begin{align}
         \E \left[ (\nI-\Pi) Q_\ell (\nI-\Pi) \right] \preceq (\nI-\eproj) Q_\ell + (\nI-Q_\ell) \eproj (\nI-\eproj). \label{eqn:convg_l2}
    \end{align}
    Using \eqref{eqn:convg_l2} we now upper bound $\E \| \nA^{1/2} (\nw_{t+1} - \nw_{\star}) \|^2_{Q_\ell}$ as
    \begin{align*}
        \E \| \nA^{1/2} (\nw_{t+1} - \nw_{\star}) \|^2_{Q_\ell} 
        & \leq (1 - \lambda_\ell(\eproj)) \E \|\nA^{1/2} (\nw_t-\nw_{\star}) \|^2_{Q_\ell} + \lambda_{\ell+1}(\eproj (\nI-\eproj)) \E\| \nA^{1/2} (\nw_t-\nw_{\star}) \|^2_{\nI-Q_\ell},
    \end{align*}
    which concludes the proof.
\end{proof}
Unrolling this recursive bound, and combining it with the convergence in the full norm, we obtain the following convergence guarantee for \sap{} iterates without tail averaging.
\begin{corollary}\label{corollary:convg_l2_v2}
Let $\alpha = \frac{1 - \lambda_\ell (\eproj)}{1 - \lambda_n(\eproj)}$. Then,
    \begin{align*}
         \E \| \nA^{1/2} (\nw_{t+1} - \nw_{\star}) \|^2_{Q_\ell} & \leq  (1 - \lambda_\ell (\eproj))^{t + 1} \| \nA^{1/2}(\nw_0 - \nw_{\star}) \|^2_{Q_\ell} + \frac{\lambda_{\ell + 1}(\eproj(\nI - \eproj))}{1 - \alpha}(1 -\lambda_n(\eproj))^t \|\nw_0 - \nw_{\star}\|^2_{\nA}.
    \end{align*}
\end{corollary}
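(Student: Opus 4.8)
The plan is to track three scalar error quantities along the iteration and reduce the corollary to unrolling a single scalar recursion. Define the top-$\ell$ subspace error $a_t = \E\|\nA^{1/2}(\nw_t - \nw_\star)\|^2_{Q_\ell}$, the orthogonal-complement error $b_t = \E\|\nA^{1/2}(\nw_t - \nw_\star)\|^2_{\nI - Q_\ell}$, and the full error $c_t = \E\|\nw_t - \nw_\star\|^2_{\nA}$. Since $Q_\ell + (\nI - Q_\ell) = \nI$, these satisfy $a_t + b_t = c_t$, and in particular $b_t \leq c_t$.

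First I would combine the two available recursions. Writing $\mu = 1 - \lambda_\ell(\eproj)$, $\nu = 1 - \lambda_n(\eproj)$, and $\kappa = \lambda_{\ell+1}(\eproj(\nI-\eproj))$, \cref{lemma:convg_l2} reads $a_{t+1} \leq \mu\,a_t + \kappa\,b_t$. The key decoupling step is to bound the coupling term $b_t$ by the larger full error $c_t$ and then apply the linear contraction of \cref{lemma:cord_desc}, which gives $c_t \leq \nu^t c_0$. This collapses the coupled system into the single closed recursion $a_{t+1} \leq \mu\,a_t + \kappa\,\nu^t c_0$, in which the subspace error no longer references $b_t$.

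Next I would unroll this recursion from the initialization to obtain $a_{t+1} \leq \mu^{t+1} a_0 + \kappa c_0 \sum_{s=0}^{t} \mu^{t-s}\nu^s$. Factoring $\nu^t$ out of the sum and re-indexing by $k = t-s$ converts the remainder into a truncated geometric series in $\alpha = \mu/\nu = (1-\lambda_\ell(\eproj))/(1-\lambda_n(\eproj))$, namely $\sum_{s=0}^t \mu^{t-s}\nu^s = \nu^t \sum_{k=0}^t \alpha^k \leq \nu^t/(1-\alpha)$. Because \cref{lemma:fix_size_dpp} shows $\eproj$ shares the eigenbasis of $\nA$ with eigenvalues inheriting its decreasing order, we have $\lambda_\ell(\eproj) \geq \lambda_n(\eproj)$ and hence $\alpha \leq 1$; the standing separation $\lambda_\ell(\eproj) \geq 2\lambda_n(\eproj)$ makes this strict, so the geometric series converges. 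Substituting $a_0 = \|\nA^{1/2}(\nw_0-\nw_\star)\|^2_{Q_\ell}$ and $c_0 = \|\nw_0-\nw_\star\|^2_{\nA}$ then reproduces the claimed bound $a_{t+1} \leq \mu^{t+1}a_0 + \frac{\kappa}{1-\alpha}\nu^t c_0$ verbatim.

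The argument carries no real conceptual difficulty; the only genuine idea is the decoupling move $b_t \leq c_t \leq \nu^t c_0$, which trades the fine structure of the complement error for a clean standalone recursion in $a_t$. I expect the main obstacle to be purely bookkeeping: keeping the two geometric rates $\mu$ and $\nu$ straight through the unrolling, and confirming $\alpha < 1$ strictly so that the $1/(1-\alpha)$ factor is finite (the degenerate case $\alpha = 1$ would instead produce a linear-in-$t$ prefactor, which is exactly the corner case excluded by the separation assumption behind \cref{theorem:fast_convg_tark_2}).
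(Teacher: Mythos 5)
Your proposal is correct and follows essentially the same route as the paper: both apply \cref{lemma:convg_l2}, decouple the recursion by bounding the complement-norm term by the full $\nA$-norm error and contracting it via \cref{lemma:cord_desc}, then unroll and bound the resulting geometric sum by $1/(1-\alpha)$ with $\alpha = (1-\lambda_\ell(\eproj))/(1-\lambda_n(\eproj))$. The only cosmetic difference is your explicit bookkeeping $a_t + b_t = c_t$, which the paper leaves implicit.
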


\begin{proof}
Define $\gamma = 1 - \lambda_n(\eproj)$.
Invoking \cref{lemma:convg_l2} we deduce,
    \begin{align*}
        \E \| \nA^{1/2}(\nw_{t+1}-\nw_{\star}) \|^2_{Q_\ell} 
        &\leq (1 - \lambda_{\ell}(\eproj)) \E \| \nA^{1/2}(\nw_t-\nw_{\star})\|^2_{Q_\ell} + \lambda_{\ell+1}(\eproj(I-\eproj))\E\| \nw_t-\nw_{\star} \|^2_{\nA}.        
    \end{align*}
    Now, recursively applying the tower rule and \cref{lemma:cord_desc} yields
    \begin{align*}
    \E \| \nA^{1/2}(\nw_{t+1}-\nw_{\star}) \|^2_{Q_\ell} & \leq (1 - \lambda_{\ell}(\eproj)) \E\| \nA^{1/2} (\nw_t-\nw_{\star})\|^2_{Q_\ell} + \lambda_{\ell+1} (\eproj (\nI - \eproj)) \gamma^t \| \nw_0 -\nw_{\star} \|^2_{\nA}.
    \end{align*}

Unfolding the preceding display yields,
    \begin{align*}
       \E \| \nA^{1/2} (\nw_{t+1}-\nw_{\star}) \|^2_{Q_\ell} 
       & \leq (1 -\lambda_\ell(\eproj) )^{t+1} \| \nA^{1/2} (\nw_0-\nw_{\star}) \|^2_{Q_\ell} \\
       & \quad + \lambda_{\ell+1}(\eproj (\nI - \eproj)) \left( \sum_{i=0}^{t}{\gamma^{i} (1 - \lambda_\ell(\eproj))^{t-i}} \right) \| \nw_0-\nw_{\star} \|^2_{\nA}.
    \end{align*}

Observing that $\sum_{i=0}^{t}{\gamma^{i}(1-\lambda_\ell(\eproj))^{t-i}} = \gamma^{t}\sum_{i=0}^{t}{\alpha^{t-i}}$ where $\alpha \coloneqq \frac{1-\lambda_\ell(\eproj)}{\gamma}<1$, we have
\begin{align*}
     \E\|\nA^{1/2} (\nw_{t+1}-\nw_{\star}) \|^2_{Q_\ell} & \leq  (1 - \lambda_\ell(\eproj))^{t+1} \| \nA^{1/2} (\nw_0-\nw_{\star}) \|^2_{Q_\ell} + \frac{\lambda_{\ell+1}(\eproj (\nI-\eproj))}{1 - \alpha} \gamma^t \| \nw_0-\nw_{\star} \|^2_{\nA},
\end{align*}
which concludes the proof.
\end{proof}

We now use the tail averaging idea similar to \cite{epperly2024randomized}, obtaining fast convergence along the \topl{} subspace. 
The proof of \cref{theorem:fast_convg_tark_2} is then derived from the following result, after combining it with fixed-size DPP sampling guarantees of \cref{lemma:fix_size_dpp}.

\begin{theorem}[Fast convergence along subspace for tail-averaged iterate] \label{theorem:convg_sub_tark}
    Let $\nS \in \R^{2b \times n}$ be a random subsampling matrix sampled from $2b$-DPP($\nA$). For any $t > 2$ define $\hat \nw_t = \frac{2}{t} \sum_{i \geq t/2}^{t - 1}{\nw_i}$ where $\nw_i$ are generated using update rule \eqref{eqn:rcd_update}. 
    Then 
    \begin{align*}
        \E \| \nA^{1/2} (\hat \nw_t-\nw_{\star}) \|_{Q_\ell}^2 
        & \leq (1-\lambda_\ell(\eproj))^{t/2} \|\nA^{1/2} (\nw_0-\nw_{\star}) \|_{Q_\ell}^2 \\
        & \quad + 2 \frac{\lambda_{\ell+1}(\eproj (\nI - \eproj))}{t (1 - \alpha)^2}(1 - \lambda_n(\eproj))^{t/2 - 1} \|\nw_0 -\nw_{\star}\|^2_{\nA},
    \end{align*}
    where $\alpha = \frac{1 - \lambda_\ell(\eproj)}{1 - \lambda_n(\eproj)}$.
\end{theorem}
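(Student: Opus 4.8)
\textbf{Proof plan for Theorem~\ref{theorem:convg_sub_tark}.}
The goal is to convert the single-step recursion of \cref{lemma:convg_l2} into a bound on the tail-averaged iterate $\hat\nw_t = \frac{2}{t}\sum_{i\geq t/2}^{t-1}\nw_i$. The plan is to exploit the linearity of the residual recursion \eqref{eqn:rcd_update}: since $\nA^{1/2}(\nw_{i+1}-\nw_\star) = (\nI-\Pi_i)\nA^{1/2}(\nw_i-\nw_\star)$ with independent projections $\Pi_i$, the averaged residual $\nA^{1/2}(\hat\nw_t-\nw_\star)$ decomposes cleanly. First I would write $\E\|\nA^{1/2}(\hat\nw_t-\nw_\star)\|_{Q_\ell}^2$ by expanding the square into a double sum over indices $i,j$ in the tail window, separating the diagonal terms (second-moment) from the off-diagonal cross terms.

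The key structural insight is that the cross terms can be handled using the \emph{expected-iterate} contraction from \cref{lemma:convg_subsp_1}. For $i<j$, conditioning on $\nw_i$ and iterating the conditional expectation down to step $j$, the inner products reduce to a product involving $(\nI-\eproj)$ factors, each contributing a factor $(1-\lambda_\ell(\eproj))$ in the $Q_\ell$ norm. This is where tail averaging pays off: the bias term (governed by $(1-\lambda_\ell(\eproj))$, a fast rate) survives at full strength, whereas the variance term (the diagonal contribution, governed by $\lambda_{\ell+1}(\eproj(\nI-\eproj))$ and the slow full-space rate $1-\lambda_n(\eproj)$) gets divided by the number of averaged terms, roughly $t/2$, producing the crucial $1/t$ prefactor. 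I would combine the per-index second-moment bound from \cref{corollary:convg_l2_v2} — which already isolates a bias piece decaying at rate $(1-\lambda_\ell(\eproj))^{i}$ and a variance piece of size $\frac{\lambda_{\ell+1}(\eproj(\nI-\eproj))}{1-\alpha}\gamma^i\|\nw_0-\nw_\star\|_{\nA}^2$ — summed over the tail window.

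The main obstacle will be bookkeeping the two competing contributions so that the bias collapses to a single clean $(1-\lambda_\ell(\eproj))^{t/2}$ rate while the variance picks up exactly the stated $\frac{2}{t(1-\alpha)^2}$ constant. For the bias, since the smallest index in the window is $i=t/2$, the contraction $(1-\lambda_\ell(\eproj))^{i}$ is dominated by its value at $i=t/2$, and averaging a geometrically decaying sequence only improves the rate, so the $\frac{2}{t}$ averaging factor is absorbed into the geometric tail. For the variance, the delicate point is that the cross terms, rather than adding a factor $t$ (as a naive $\|\sum\|^2 \leq t\sum\|\cdot\|^2$ bound would), telescope through the geometric factor $\alpha<1$: summing $\gamma^i$ over the window and then dividing by $t^2/4$ from the $(2/t)^2$ prefactor yields the $\frac{1}{t}$ decay with the extra $(1-\alpha)$ in the denominator coming from resumming the geometric series a second time. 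The careful step is verifying that the cross-term contributions to the variance are controlled by the same geometric sum rather than blowing up, which follows from applying \cref{lemma:convg_subsp_1} to the bias-direction component and \cref{lemma:cord_desc} to bound the full-norm factor in the variance direction; keeping these two directional decompositions consistent across the double sum is the crux of the argument.
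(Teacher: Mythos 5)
Your plan is correct and follows essentially the same route as the paper's proof: the same double-sum expansion of $\E\|\nA^{1/2}(\hat\nw_t-\nw_\star)\|_{Q_\ell}^2$ via the tower rule, the same use of the commutation-based expected-iterate contraction (\cref{lemma:convg_subsp_1}) to reduce each cross term to a factor $(1-\lambda_\ell(\eproj))^{s-r}$ times a per-index second moment, the same application of \cref{corollary:convg_l2_v2} to split that second moment into bias and variance pieces, and the same geometric resummation of $\alpha^{s-r}$ that yields the $\frac{1}{t(1-\alpha)^2}$ variance prefactor. No gaps to flag.
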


\begin{proof}
Let $\hat \nw_t = \frac{2}{t} \sum_{i\geq t/2}^{t - 1}{\nw_i}$. 
Consider $\E \| \nA^{1/2} (\hat\nw_t -\nw_{\star}) \|_{Q_\ell}^2$, set $t_a = t / 2$, and let $\E_{\nw_r}$ denote the expectation conditioned on $\nw_r$.
Applying the tower rule yields
\begin{align*}
    \E \| \nA^{1/2} (\hat \nw_t -\nw_{\star}) \|_{Q_\ell}^2 &= \frac{1}{(t - t_a)^2} 
    \sum_{s = t_a}^{t - 1} \sum_{r = t_a}^s \E[(\nw_r - \nw_{\star})^T \nA^{1/2} Q_\ell \nA^{1/2} (\nw_s - \nw_{\star})]\\
    &= \frac{1}{(t - t_a)^2} \sum_{s = t_a}^{t - 1} \sum_{r = t_a}^s \E \left[ (\nw_r-\nw_{\star})^T \nA^{1/2} Q_\ell \E_{\nw_r}[ Q_\ell \nA^{1/2} (\nw_s - \nw_{\star})] \right].
\end{align*}

For $r < s$, we have
\begin{align*}
    \E_{\nw_r} [ Q_\ell \nA^{1/2} (\nw_s-\nw_{\star}) ] &= \nQ_\ell (\nI - \eproj) \E_{\nw_r}[ \nA^{1/2} (\nw_{s-1} - \nw_{\star}) ]\\
    &= \nQ_\ell \nQ_\ell (\nI - \eproj) \E_{\nw_r}[ \nA^{1/2} (\nw_{s-1} - \nw_{\star}) ]\\
    &= \nQ_\ell (\nI - \eproj) \nQ_\ell \E_{\nw_r}[\nA^{1/2} (\nw_{s-1} - \nw_{\star})].\\
\end{align*}

Recursing on the above relation yields
\begin{align*}
   \E_{\nw_r}[Q_\ell \nA^{1/2} (\nw_m - \nw_{\star})] &= \nQ_\ell (\nI - \eproj)^{m - r} \nQ_\ell [ \nQ_\ell \nA^{1/2} (\nw_r - \nw_{\star})].
\end{align*}

Therefore,
\begin{align*}
    \E \| \nA^{1/2} (\hat \nw_t - \nw_{\star}) \|_{\nQ_\ell}^2 &= \frac{1}{ (t - t_a)^2} \sum_{s = t_a}^{t - 1} \sum_{r = t_a}^s \E \left[ (\nw_r - \nw_{\star})^T \nA^{1/2} \nQ_\ell (\nI - \eproj)^{s-r} \nQ_\ell \nA^{1/2} ( \nw_r -\nw_{\star}) \right]\\
    &\leq \frac{1}{(t - t_a)^2} \sum_{s = t_a}^{t - 1} \sum_{r = t_a}^s (1 - \lambda_\ell(\eproj))^{s -r} \E \| \nA^{1/2} (\nw_r - \nw_{\star}) \|^2_{Q_\ell}.
\end{align*} 

Applying \cref{lemma:convg_l2} to upper bound $\E \| \nA^{1/2} (\nw_r - \nw_{\star}) \|^2_{Q_\ell}$ obtains
\begin{align*}
    \E \| \nA^{1/2} (\hat \nw_t - \nw_{\star}) \|_{Q_\ell}^2 
    & \leq \frac{1}{(t - t_a)^2} \| \nA^{1/2}(\nw_0 - \nw_{\star}) \|_{Q_\ell}^2 \sum_{s = t_a}^{t - 1} \sum_{r = t_a}^s (1 - \lambda_\ell(\eproj))^s \\
    & \quad + \frac{1}{(t - t_a)^2} \frac{\lambda_{\ell+1} (\eproj (\nI - \eproj))}{1 - \alpha} \| \nA^{1/2} (\nw_0 - \nw_{\star}) \|^2 \sum_{s = t_a}^{t - 1} \sum_{r = t_a}^s (1 - \lambda_\ell(\eproj))^{s - r} \gamma^{r - 1} \\
    & \leq (1 - \lambda_\ell(\eproj))^{t_a} \| \nA^{1/2} (\nw_0 - \nw_{\star}) \|_{Q_\ell}^2 \\
    & \quad + \gamma^{t_a - 1} \frac{\lambda_{\ell + 1}(\eproj (\nI - \eproj))}{1 - \alpha} \| \nA^{1/2} (\nw_0 - \nw_{\star}) \|^2 \frac{1}{(t - t_a)^2} \sum_{s = t_a}^{t - 1} \sum_{r = t_a}^s \alpha^{s - r},
\end{align*}
where $\gamma = 1 - \lambda_n(\eproj)$.

Using $\sum_{s = t_a}^{t - 1} \sum_{r = t_a}^s \alpha^{s - r} < (t - t_a)\sum_{i=0}^\infty \alpha^i = \frac{t -t_a}{1 - \alpha}$ we get,
\begin{align*}
     \E \| \nA^{1/2} (\hat \nw_t - \nw_{\star}) \|_{Q_\ell}^2 &\leq (1 - \lambda_\ell(\eproj))^{t_a} \| \nA^{1/2}(\nw_0 - \nw_{\star}) \|_{Q_\ell}^2 
     + \frac{\lambda_{\ell + 1}(\eproj (\nI - \eproj))}{(t - t_a)(1 - \alpha)^2} \gamma^{t_a - 1} \| \nA^{1/2} (\nw_0 - \nw_{\star}) \|^2.
\end{align*}

Substituting $t_a = t / 2$, the result immediately follows.
\end{proof}

\paragraph{Completing the proof of \cref{theorem:fast_convg_tark_2}}
It remains to use the guarantees for the eigenvalues of the matrix $\bar\Pi$ from Lemma \ref{lemma:fix_size_dpp}.
\begin{proof}[Proof of Theorem \ref{theorem:fast_convg_tark_2}]
Recalling the definition of $\alpha$ from \cref{theorem:convg_sub_tark} and our assumption that $\lambda_\ell(\eproj) \geq 2 \lambda_n(\eproj)$, we obtain for  
    $\gamma = 1- \lambda_n(\eproj)$ that $(1 - \alpha)^{-1} < 2 \gamma/\lambda_\ell(\eproj)$. 
    Consequently,
    \begin{align*}
        \frac{\lambda_{\ell + 1}(\eproj (\nI - \eproj))}{(1 - \alpha)^2} 
        &\leq  \frac{\lambda_{\ell}(\eproj (\nI - \eproj))}{(1 - \alpha)^2} \\
        &< \frac{4(1 - \lambda_\ell(\eproj))}{\lambda_\ell(\eproj)} \gamma^2 \\
        &< 4 \left( \frac{1}{\lambda_\ell(\eproj)} - 1 \right) \gamma\\
        &\overset{(1)}< 4 \left( \frac{1}{b} \sum_{i>b} \frac{\lambda_i}{\lambda_\ell} \right) \gamma\\
        &\overset{(2)}= 4 \phi(b, \ell) \gamma,
    \end{align*}
    where $(1)$ applies \cref{lemma:fix_size_dpp} and $(2)$ defines $\phi(b, \ell) = \frac{1}{b} \sum_{i > b} \frac{\lambda_i}{\lambda_\ell}$.
    Observing the elementary inequalities:
    \begin{equation*}
        \gamma < 1 - \frac{1}{2 \phi(b, n)}, \quad 1 - \lambda_\ell(\eproj) \leq \left(1 - \frac{1}{1 + \phi(b, \ell)} \right), 
    \end{equation*}
    we immediately deduce from \cref{theorem:convg_sub_tark} that
    \begin{align*}
        \E \| \nA^{1/2} ( \hat \nw_t - \nw_{\star} ) \|_{Q_\ell}^2 & \leq \left( 1 - \frac{1}{1 + \phi(b, \ell)} \right)^{t/2} \| \nA^{1/2}(\nw_0 - \nw_{\star}) \|^2_{\nQ_\ell} \\
        & \quad + \frac{8}{t} \phi(b, \ell) \left(1 - \frac{1}{2 \phi(b, n)} \right)^{t/2} \| \nw_0 - \nw_{\star} \|_{\nA}^2.
    \end{align*}
\end{proof}

\subsection{Posterior mean inference along \topl{} subspace for GPs}\label{s:proof-gp}
We begin by providing some background on GP inference in the Hilbert space setting. 
This allows for graceful transition from Hilbert space norm over the posterior mean to vector norms over $\R^{n}$. 
We recall that $f$ is a Gaussian process and $\{(x_i,y_i)\}_{i=1}^{n}$ represents the training data. 
The posterior Gaussian process is characterized by $\Nc(m_n(\cdot),  k_n(\cdot, \cdot))$, where
\begin{align*}
&m_n(\cdot) = m(\cdot) + k(\cdot, X)(K + \lambda I)^{-1} y, \\
    & k_n(\cdot, \cdot) = k(\cdot,\cdot) - k(\cdot, X)(K + \lambda I)^{-1}k(X, \cdot).
\end{align*}
Let $\Hc$ be the reproducing kernel Hilbert space (RKHS) associated with the kernel $k(\cdot,\cdot)$. 
Assuming $m(\cdot)=0$, the mean function $m_n(\cdot)$ can be identified as an element of the subspace $\Hc_n$ defined as
\begin{align*}
    \Hc_n:=\left\{\sum_{i=1}^{n}{w_i k(\cdot,x_i)} \ | \ w\in \R^n\right\}.
\end{align*}
In particular, $m_n = \sum_{i=1}^{n}{(w_\star)_{i} k(\cdot,x_i)}$, where $w_{\star}= (K+\lambda I)^{-1}\ny$. 
Furthermore, note that for any element $m' \in \Hc_n$, we have $\|m'\|_{\Hc}^2 = w^T Kw =\|w\|_{K}^2$. 
The operator $\Cc_n:=\frac{1}{n}\sum_{i=1}^{n}{k(\cdot,x_i)\otimes k(\cdot,x_i)}$ is known as the empirical covariance operator. 
Let $v_j$ denote the $j^{th}$ unit eigenvector of $K$ with eigenvalue $\lambda_j$. 
It is straightforward to show that $u_j:=\frac{1}{\sqrt{\lambda_j}}\sum_{i=1}^{n}{v_{ji} k(\cdot,x_i)}$ is a unit eigenvector of the unnormalized empirical covariance operator $\sum_{i=1}^{n}{k(\cdot,x_i) \otimes k(\cdot,x_i)}$ with eigenvalue $\lambda_j$. 
Let $V_\ell \in \R^{n\times \ell}$ consists of \topl{} orthogonal eigenvectors of $K$ as columns and $Q_\ell = V_\ell V_\ell^T$ be a projection matrix onto the subspace spanned by $v_1, \ldots, v_\ell$. 
Consider the $\ell$-dimensional subspace $\Hc_\ell$ defined as
\begin{align*}
\Hc_\ell:=\left\{\sum_{i=1}^{n}{(Q_\ell w)_{i} k(\cdot,x_i)} \ | \ w\in \R^n\right\}.
\end{align*}
We claim that $\Hc_\ell$ is the subspace formed by \topl{} eigenvectors of the empirical covariance operator. 
This can be seen clearly by choosing $w=v_j$ for $1\leq j\leq \ell$, we get $u_j \in \Hc_\ell$. 
We have the following conclusions.
\begin{itemize}
    \item For any $m' =\sum_{i=1}^{n}{w_i k(\cdot,x_i)}$, we have $\|m'-m_{n}\|_{\Hc}^2 = \|w-w_{\star}\|^2_{K}$.
    \item The element $m'_{Q_\ell} := \sum_{i=1}^{n}{(Q_\ell w)_{i} k(\cdot,x_i)}$ is an orthogonal projection of $m'$ onto $\Hc_\ell$. This can be seen as
    \begin{align*}
        m' =\sum_{i=1}^{n}{w_i k(\cdot,x_i)} = \underbrace{\sum_{i=1}^{n}{\left(Q_\ell w\right)_i k(\cdot,x_i)}}_{m'_{Q_\ell}} 
        + \underbrace{\sum_{i=1}^{n}{\left((I-Q_\ell )w\right)_i k(\cdot,x_i)}}_{m'_{Q^c_\ell}}.
    \end{align*}
    and finally noting that $\langle m'_{Q_\ell}, m'_{Q_\ell^c} \rangle_{\Hc} = w^T(I-Q_\ell)KQ_\ell w=0$.
    \item For any $m' =\sum_{i=1}^{n}{w_i k(\cdot,x_i)}$, we have $\| \rmproj_\ell(m') - \rmproj_\ell(m_{n})\|_{\Hc}^2 = \|Q_\ell(w-w_{\star})\|^2_{K}$, where $ \rmproj_\ell(m')$ denotes orthogonal projection of $m'$ onto $\Hc_\ell$.
\end{itemize}
We now derive the main result of this section by using Theorem \ref{theorem:fast_convg_tark_2}. 
We replace $\nA$ by $\nK_\lambda$ in the statement of Theorem \ref{theorem:fast_convg_tark_2} and use $\lambda_i'$ to denote the $i^{th}$ eigenvalue of $\nK_\lambda$, 
i.e., $\lambda_i+\lambda$ where $\lambda_i$ is the $i$th eigenvalue of $\nK$. 
Furthermore, Theorem \ref{theorem:main} can be derived from the following result by noticing that  sampling from $2b$-DPP($\nK_\lambda$) costs time $O(nb^2\log^4n)$ for preprocessing and an additional $O(b^3\log^3n)$ for actual sampling at every iteration (see Lemma \ref{lemma:dpp_sampling}). 
Here is the main result of the section:

\begin{theorem}[Subspace convergence for GP inference]\label{t:fast_convg_gp}
Let $b<n/2$ and $\nS \in \R^{2b\times n}$ be a random subsampling matrix sampled from $2b$-DPP($\nK_\lambda$). 
For any $t>2$ define $\hat\nw_t = \frac{2}{t}\sum_{i=t/2}^{t-1}{\nw_i}$ where $\nw_i$ are generated using the update rule \eqref{eqn:rcd_update}. Then, sketch-and-project initialized at $0$ satisfies
\begin{align*}
    \E\| \rmproj_\ell(\hat m_t) - \rmproj_\ell(m_n)\|_\Hc^2 
    &\leq\min\left\{\frac{8\phi(b,\ell)}{t}, \left(1-\frac{1}{2\phi(b,n)}\right)^{t/2}\right\}\|\ny\|_{\nK_\lambda^{-1}}^2.
\end{align*}
 where $\hat m_t = \sum_{i=1}^{n}{\hat\nw_{ti}k(\cdot,\nw_i)}$ and
 $\phi(b,p)= \frac{1}{b}\sum_{i>b}\frac{\lambda_i+\lambda}{\lambda_p+\lambda}$.
 \end{theorem}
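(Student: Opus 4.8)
The plan is to transfer the problem from the RKHS $\Hc$ into $\R^n$, apply the machinery of \cref{s:proof-general} with $A = K_\lambda$, and then assemble the two rates of the $\min$ separately. By the RKHS identities established at the start of this section, the tail-averaged iterate $\hat w_t$ (with $w_\star = K_\lambda^{-1}y$) satisfies $\|\rmproj_\ell(\hat m_t) - \rmproj_\ell(m_n)\|_\Hc^2 = \|Q_\ell(\hat w_t - w_\star)\|_K^2$, where $Q_\ell$ projects onto the top-$\ell$ eigenvectors of $K$. Since $K$ and $K_\lambda = K + \lambda I$ share this eigenbasis, $Q_\ell$ commutes with both, and $K \preceq K_\lambda$ gives $Q_\ell K Q_\ell \preceq Q_\ell K_\lambda Q_\ell$; hence $\|Q_\ell(\hat w_t - w_\star)\|_K^2 \leq \|K_\lambda^{1/2}(\hat w_t - w_\star)\|_{Q_\ell}^2$, which is exactly the quantity bounded in \cref{s:proof-general}. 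The eigenvalues of $A = K_\lambda$ are $\lambda_i + \lambda$, so the $\phi$ there coincides with the $\phi$ in the statement. Using $w_0 = 0$, both initializations simplify: $\|K_\lambda^{1/2}w_\star\|_{Q_\ell}^2 = y^T K_\lambda^{-1}Q_\ell y \leq \|y\|_{K_\lambda^{-1}}^2$ and $\|w_\star\|_{K_\lambda}^2 = \|y\|_{K_\lambda^{-1}}^2$.

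For the linear branch of the $\min$, I would discard the projection via $\|K_\lambda^{1/2}(\hat w_t - w_\star)\|_{Q_\ell}^2 \leq \|\hat w_t - w_\star\|_{K_\lambda}^2$, expand the tail average, and apply Jensen together with the per-iterate contraction of \cref{lemma:cord_desc}: $\E\|w_i - w_\star\|_{K_\lambda}^2 \leq (1-\lambda_n(\bar\Pi))^i\|w_\star\|_{K_\lambda}^2$. Summing the geometric tail (the $t/2$ terms are dominated by their first) yields the factor $(1-\lambda_n(\bar\Pi))^{t/2}$. \cref{lemma:fix_size_dpp} gives $\lambda_n(\bar\Pi) \geq 1/(1+\phi(b,n))$, and since $b < n/2$ forces $\phi(b,n) > 1$, the elementary inequality $\phi(b,n)/(1+\phi(b,n)) \leq 1 - 1/(2\phi(b,n))$ converts this into the claimed rate $(1-1/(2\phi(b,n)))^{t/2}\|y\|_{K_\lambda^{-1}}^2$.

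For the sublinear branch I would invoke \cref{theorem:fast_convg_tark_2} with $A = K_\lambda$. Its right-hand side carries a bias term with the fast top-$\ell$ rate $(1-1/(1+\phi(b,\ell)))^{t/2}$ and a variance term $\tfrac 8t\phi(b,\ell)(1-1/(2\phi(b,n)))^{t/2}$; dropping the contraction factor in the latter bounds it by $\tfrac{8\phi(b,\ell)}{t}\|y\|_{K_\lambda^{-1}}^2$, while the bias is controlled by $(1-x)^{t/2}\leq e^{-xt/2}\leq 2/(xt)$ with $x = 1/(1+\phi(b,\ell))$ and the initialization bound. Since $\phi(b,n)\geq\phi(b,\ell)$ and $\phi(b,n)>1$, we also have $(1-1/(1+\phi(b,\ell)))^{t/2}\leq(1-1/(2\phi(b,n)))^{t/2}$, which lets the bias be folded back under the other branch of the $\min$. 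The preprocessing and per-sample costs for drawing the required $2b$-DPP($K_\lambda$) samples are supplied by \cref{lemma:dpp_sampling}, matching the runtime reported in \cref{theorem:main}.

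The main obstacle is the degenerate regime $\lambda_\ell(\bar\Pi) < 2\lambda_n(\bar\Pi)$ excluded by the hypothesis of \cref{theorem:fast_convg_tark_2}: there the ratio $\alpha = (1-\lambda_\ell(\bar\Pi))/(1-\lambda_n(\bar\Pi))$ approaches $1$, the tail-averaging variance constant $(1-\alpha)^{-2}$ blows up, and the sublinear estimate cannot be read off directly. Handling this case cleanly—falling back on the full-norm analysis of \cref{lemma:cord_desc} and exploiting that the regime itself forces $\phi(b,n)$ and $\phi(b,\ell)$ to lie within a constant factor, so that the linear bound converts into the sublinear one—together with the bookkeeping needed to collapse the separate bias and variance contributions into the single factor $8\phi(b,\ell)/t$, is the delicate part of the argument.
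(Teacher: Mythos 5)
Your proposal matches the paper's proof in all essentials: the same RKHS-to-Euclidean reduction via the spectral basis identities, the same invocation of \cref{theorem:fast_convg_tark_2} with $A = K_\lambda$ for the sublinear branch, and the same fallback to the full-norm analysis based on \cref{lemma:cord_desc} for both the linear branch and the degenerate regime $\lambda_\ell(\bar\Pi) < 2\lambda_n(\bar\Pi)$, where---exactly as you anticipate---the paper argues that this regime forces $\phi(b,n) = O(\phi(b,\ell))$, so that the linear rate converts into the sublinear one. The only cosmetic differences are that you control the tail average in the linear branch by Jensen's inequality rather than the paper's cross-term expansion, and that your bias-plus-variance bookkeeping for the factor $8\phi(b,\ell)/t$ is loose by a constant---a looseness the paper's own proof shares.
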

\begin{proof}
If $\lambda_\ell(\bar\Pi) \geq 2\lambda_n(\bar\Pi)$, then we rely on Theorem \ref{theorem:fast_convg_tark_2} and replace $\nA$ by $\nK_\lambda$. We have the following observation: 
For any $t>2$, we have    
    \begin{align*}
        \left(1-\frac{1}{1+\phi(b,\ell)}\right)^{t/2} = \left(\frac{\phi(b,\ell)}{1+\phi(b,\ell)}\right)^{t/2} \leq \frac{2\phi(b,\ell)}{t}.
    \end{align*}
    Therefore for all $t>2$, 
    we have,
    \begin{align*}
        \E\|\nK_{\lambda}^{1/2}(\hat\nw_t-\nw_{\star})\|^2_{Q_\ell} \leq \min\left\{\frac{8\phi(b,\ell)}{t}, \left(1-\frac{1}{2\phi(b,n)}\right)^{t/2}\right\}\|\nK_{\lambda}^{1/2}\nw_{\star}\|^2.
    \end{align*}
Let $\hat m_t = \sum_{i=1}^{n}{\hat \nw_{ti}k(\cdot,x_i)}$ where $\hat \nw_{ti}$ denote $i^{th}$ coordinate of $\hat\nw_t$ and $ m_n = \sum_{i=1}^{n}{\nw_{*i}k(\cdot,x_i)}$ where $\nw_{\star} = \nK_\lambda^{-1}\ny$. Then we have,

\begin{align*}
    \E\| \rmproj_\ell(\hat m_t) - \rmproj_\ell( m_n)\|_\Hc^2 &= \E\|Q_\ell(\hat\nw_t-\nw_{\star})\|^2_{\nK}\\
    &\leq \E\|Q_\ell(\hat\nw_t-\nw_{\star})\|^2_{\nK_\lambda}\\
    &\leq \min\left\{\frac{8\phi(b,\ell)}{t}, \left(1-\frac{1}{2\phi(b,n)}\right)^{t/2}\right\}\|\nK_\lambda^{1/2}\nw_{\star}\|^2\\
    &=\min\left\{\frac{8\phi(b,\ell)}{t}, \left(1-\frac{1}{2\phi(b,n)}\right)^{t/2}\right\}\|\ny\|_{\nK_\lambda^{-1}}^2.
\end{align*}
On the other hand if $\lambda_\ell(\bar\Pi) < 2\lambda_n(\bar\Pi)$, then we simply use the SAP analysis and get
\begin{align*}
    \E\|\nQ_\ell(\hat\nw_t-\nw_*)\|^2_{\nK_\lambda} \leq  \E\|\hat\nw_t-\nw_*\|^2_{\nK_\lambda}&= \frac{4}{t^2}\cdot\E \left \|\sum_{i=t/2}^{t-1}{(\nw_i-\nw_*)}\right \|_{\nK_\lambda}^{1/2}\\
    &=\frac{4}{t^2} \left( \sum_{i=t/2}^{t-1}{\E\|\nw_i-\nw_*\|_{\nK_\lambda}^2} + \sum_{s,r=t/2,s>r}^{t-1}{\E(\nw_s-\nw_*)\nK_\lambda(\nw_r-\nw_*)} \right)\\
    &\leq \left( 1-\frac{1}{2\phi(b,n)} \right)^{t/2}\|\nK_\lambda^{1/2}\nw_*\|^2,
\end{align*}
where the last inequality can be obtained using a recursive argument similar to Theorem \ref{theorem:convg_sub_tark} by plugging in linear convergence guarantees for SAP (using Lemma \ref{lemma:cord_desc} and Lemma \ref{lemma:convg_subsp_1} with $\ell=n$). We get,
\begin{align*}
     \E\| \rmproj_\ell(\hat m_t) - \rmproj_\ell( m_n)\|_\Hc^2 \leq  \E\|Q_\ell(\hat\nw_t-\nw_{\star})\|^2_{\nK_\lambda} &\leq \left( 1-\frac{1}{2\phi(b,n)} \right)^{t/2}\|\nK_\lambda^{1/2}\nw_*\|^2\\
     &=\min \left\{ \frac{4\phi(b,n)}{t}, \left( 1-\frac{1}{2\phi(b,n)} \right)^{t/2} \right\}\|\ny\|^2_{\nK_\lambda^{-1}}\\
     &< \min \left\{ \frac{8\phi(b,\ell)}{t},\left( 1-\frac{1}{2\phi(b,n)} \right)^{t/2} \right\}\|\ny\|^2_{\nK_\lambda^{-1}}.
\end{align*}
Combining the results for both scenarios: $\lambda_\ell(\bar\Pi) \geq 2\lambda_n(\bar\Pi)$ or $\lambda_\ell(\bar\Pi) < 2\lambda_n(\bar\Pi)$, we conclude the proof.
\end{proof}

\paragraph{Time complexity analysis}
We now use the above guarantee to provide the time complexity analysis for estimating the GP posterior mean. The following result immediately implies  Corollary \ref{corollary:main}.
\begin{corollary}\label{c1:fast_convg_gp}
    Suppose that the matrix $K$ exhibits polynomial spectral decay, i.e., $\lambda_i(K) = \Theta(i^{-\beta})$ for some $\beta > 1$. 
    Then for any $\ell \in \{1,\ldots,n\}$, $\lambda=O(1)$ and $\epsilon\in(0,1)$, choosing $b=2\ell$ we can find $\hat m$ that with probability at least $0.99$ satisfies $\|\mathrm{proj}_\ell(\hat m) - \mathrm{proj}_\ell(m_n)\|^2_{\Hc} \leq \epsilon  \|\mathrm{proj}_\ell(m_n)\|^2_{\Hc}$ in
    \begin{align*}
    \bigO\left( n\ell^2\log^4n + (n^2 + n\ell^2\log^3n) \min \left\{ \frac{\log(n/\ell)}{\epsilon}, \left( 1 + \frac{\ell(\lambda_\ell(K)+\lambda)}{n(\lambda_n(K)+\lambda)} \right)\log(n/\ell\epsilon)  \right\} \right) \quad \text{time}.
    \end{align*}
 \end{corollary}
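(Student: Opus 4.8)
The plan is to convert the expected-error guarantee of \cref{t:fast_convg_gp} into the claimed high-probability runtime in three moves. First, since that theorem controls $\E\|\rmproj_\ell(\hat m_t)-\rmproj_\ell(m_n)\|_\Hc^2$, I would apply Markov's inequality: driving the expected squared error below $\tfrac{\epsilon}{100}\|\rmproj_\ell(m_n)\|_\Hc^2$ guarantees $\|\rmproj_\ell(\hat m_t)-\rmproj_\ell(m_n)\|_\Hc^2\le\epsilon\|\rmproj_\ell(m_n)\|_\Hc^2$ with probability at least $0.99$, at the cost of only a constant factor in the iteration count. Second, from the $\min$ in \cref{t:fast_convg_gp} I would read off the number of iterations $T$ needed under each of the two rates to reach this accuracy. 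Third, I would multiply $T$ by the per-iteration cost and add the one-time preprocessing, invoking \cref{lemma:dpp_sampling} for the sampling costs.

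With $b=2\ell$ the block size is $2b=4\ell$, so \cref{lemma:dpp_sampling} gives preprocessing $\bigO(n\ell^2\log^4 n)$ and per-sample cost $\bigO(\ell^3\log^3 n)$; each iteration additionally forms the product of the sampled rows of $K$ with $\hat w_t$ in $\bigO(\ell n)$ time and solves the $4\ell\times 4\ell$ subspace system in $\bigO(\ell^3)$ time. Hence the total runtime is $\bigO\big(n\ell^2\log^4 n + T\,\ell(n+\ell^2\log^3 n)\big)$, and it remains to bound $T$. I would evaluate the two smoothed condition numbers under polynomial decay using $\sum_{i>2\ell}\lambda_i(K)=\Theta(\ell^{\,1-\beta})=\Theta(\ell\lambda_\ell(K))$ and $\sum_{i>2\ell}\lambda=\Theta(n\lambda)$, which yield $\phi(2\ell,\ell)\,\ell=\bigO(n)$ and $\phi(2\ell,n)\,\ell=\bigO\big(n+\tfrac{\ell(\lambda_\ell(K)+\lambda)}{\lambda_n(K)+\lambda}\big)$. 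The sublinear branch then needs $T=\bigO(\phi(2\ell,\ell)\,C/\epsilon)$ iterations and the linear branch $T=\bigO(\phi(2\ell,n)\,C)$, where $C$ is a logarithmic overhead incurred in converting the absolute guarantee into a relative one (bounded in the next paragraph); substituting the products $\phi(\cdot)\,\ell$ above reproduces the prefactor $n^2+n\ell^2\log^3 n$ together with the two branches of the $\min$, once $C=\bigO(\log(n/\ell))$.

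The step I expect to be hardest is bounding this overhead $C$, i.e.\ passing from the \emph{absolute} bound of \cref{t:fast_convg_gp} (whose right-hand side is $\|y\|_{K_\lambda^{-1}}^2$) to the \emph{relative} target $\epsilon\|\rmproj_\ell(m_n)\|_\Hc^2$. Writing $w_\star=K_\lambda^{-1}y$ in the eigenbasis of $K$, the target equals $\|Q_\ell w_\star\|_K^2$, whereas the bound of \cref{theorem:fast_convg_tark_2} splits into a bias term with coefficient $\|Q_\ell w_\star\|_{K_\lambda}^2$ and a variance term with coefficient $\|y\|_{K_\lambda^{-1}}^2=\|w_\star\|_{K_\lambda}^2$. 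The bias ratio $\|Q_\ell w_\star\|_{K_\lambda}^2/\|Q_\ell w_\star\|_K^2$ is at most $(\lambda_\ell(K)+\lambda)/\lambda_\ell(K)$, which is polynomially bounded under polynomial decay; the variance ratio, however, is not controlled in general, as it diverges when $y$ is orthogonal to the top-$\ell$ subspace. The key is therefore to keep the bias and variance terms separate rather than using the combined $\min$, so that the exponentially decaying factor $(1-\tfrac{1}{2\phi(2\ell,n)})^{t/2}$ multiplying the variance can absorb its large coefficient while the bias decays at its own fast rate. Away from the degenerate case, polynomial decay makes the relevant ratios $\mathrm{poly}(n/\ell)$, so their logarithms — and hence $C$ — are $\bigO(\log(n/\ell))$; this is what turns the overhead into $\log(n/\ell\epsilon)$ in the linear branch and $\log(n/\ell)$ in the sublinear branch. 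Taking the smaller of the two iteration counts and substituting into the runtime expression above then yields the stated bound.
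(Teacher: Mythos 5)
Your overall scaffolding (Markov's inequality, reading $T$ off the two branches of \cref{t:fast_convg_gp}, and the cost accounting $\bigO(n\ell^2\log^4 n + T(n\ell + \ell^3\log^3 n))$ via \cref{lemma:dpp_sampling}) matches the paper's proof. The genuine gap is in the step you yourself flag as hardest: converting the absolute coefficient $\|y\|_{K_\lambda^{-1}}^2$ into a multiple of the relative target $\|\rmproj_\ell(m_n)\|_\Hc^2$. You correctly observe that this ratio ``is not controlled in general,'' but your fix --- keep bias and variance separate and claim that away from the degenerate case ``polynomial decay makes the relevant ratios $\mathrm{poly}(n/\ell)$'' --- is not a proof and is false for worst-case $y$: polynomial decay constrains the spectrum of $K$, not the alignment of $y$ with it. Taking $y$ close to the span of the bottom eigenvectors makes $\|y\|^2_{K_\lambda^{-1}}/\|\rmproj_\ell(m_n)\|^2_\Hc$ arbitrarily large (infinite in the limit) for any $\beta$, and then no finite $t$ achieves the stated relative guarantee. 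The missing idea, which the paper's proof uses as its very first step, is that $y$ is itself a draw from the prior: $y = f(X)+g \sim \Nc(0, K_\lambda)$, so $K_\lambda^{-1/2}y$ is standard Gaussian, $\E\|y\|^2_{K_\lambda^{-1}} = n$ and $\E\|Q_\ell K_\lambda^{-1/2}y\|^2 = \ell$; Gaussian concentration then gives, with probability $0.999$, $\|y\|^2_{K_\lambda^{-1}} \leq \bigO(n/\ell)\,\|K_\lambda^{1/2}w_\star\|^2_{Q_\ell} \leq \bigO\left(\tfrac{n}{\ell}\right)\left(1+\tfrac{\lambda}{\lambda_\ell}\right)\|\rmproj_\ell(m_n)\|^2_\Hc$. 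This probabilistic bound on the ratio (which also rules out your ``degenerate case'' almost surely) is exactly where the $n/\ell$ prefactor and the $\log(n/\ell)$ factors in the runtime come from, and it is absent from your argument.

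A second, related error is in your bookkeeping: in the sublinear (variance) branch the ratio enters \emph{multiplicatively}, not logarithmically, since one needs $\tfrac{8\phi(b,\ell)}{t}\|y\|^2_{K_\lambda^{-1}} \leq \epsilon\|\rmproj_\ell(m_n)\|^2_\Hc$; so an overhead $C = \bigO(\log(n/\ell))$ cannot power that branch, and $T = \bigO(\phi(2\ell,\ell)\log(n/\ell)/\epsilon)$ does not follow even if one grants a $\mathrm{poly}(n/\ell)$ ratio. The paper reconciles this with a case split on $\lambda$: if $n\lambda \leq b(\lambda_\ell+\lambda)$, then $\phi(b,\ell) = \bigO(1)$ \emph{and} $1+\lambda/\lambda_\ell = \bigO(1)$, so the $\bigO(n/\ell)$ concentration ratio yields $t = \bigO(n/(\ell\epsilon))$; otherwise the sublinear branch is abandoned entirely and only the linear branch is used, where $\phi(b,n) < 2n/b$ and the ratio enters inside the logarithm, giving $t = \bigO\bigl(\tfrac{n}{b}\log\tfrac{n(1+\lambda/\lambda_\ell)}{\ell\epsilon}\bigr)$. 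Without both the prior-based concentration step and this case analysis, your proposal does not reach the stated runtime.
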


\begin{proof}
As $\ny = f(X) + g$, where $ g \sim \Nc(0, \lambda \nI)$ and $f( X)\sim \Nc(0,\nK)$, we have $\ny \sim \Nc(0,\nK_\lambda)$. 
This implies $\E[\ny\ny^T]=\nK_\lambda$. 
Therefore, $\E\|\nK_{\lambda}^{1/2}\nw_{\star}\|^2=n$ and $\E\|Q_\ell\nK_{\lambda}^{1/2}\nw_{\star}\|^2=\ell$. 
So, using standard Gaussian concentration of measure, it follows that with probability $0.999$, $\|\nK_{\lambda}^{1/2}\nw_{\star}\|^2 \leq \bigO(1)\frac{n}{\ell}\|\nK_{\lambda}^{1/2}\nw_{\star}\|^2_{Q_\ell}$. 
Furthermore, 
\[
\|\nK_\lambda^{1/2}\nw_{\star}\|^2_{Q_\ell} \leq \left(1+\frac{\lambda}{\lambda_\ell}\right)\|\nK^{1/2}\nw_{\star}\|_{Q_\ell}^2 
= \left(1+\frac{\lambda}{\lambda_\ell}\right)\| \rmproj_\ell( m_n)\|^2_\Hc.
\] 
We get the following relation:

\begin{align*}
    \E\| \rmproj_\ell(\hat m_t) - \rmproj_\ell( m_n)\|_\Hc^2 \leq \bigO(1)\left[\frac{n(1+\lambda/\lambda_\ell)}{\ell}\min\left\{\frac{\phi(b,\ell)}{t}, \left(1-\frac{1}{4\phi(b,n)}\right)^{t/2}\right\}\right] \| \rmproj_\ell( m_n)\|_\Hc^2
\end{align*}

Now let $b>\ell +\sum_{i>b}{\lambda_i/{(\lambda_\ell+\lambda)}}$. 
We have $\phi(b,\ell) \leq \frac{1}{b}\sum_{i>b}\frac{\lambda_i}{\lambda_\ell+\lambda} + \frac{n\lambda}{b(\lambda_\ell+\lambda)} \leq 1+ \frac{n\lambda}{b(\lambda_\ell+\lambda)}$. We consider following two cases: 
\vspace{1mm}

\noindent \emph{Case 1: $\frac{n\lambda}{b(\lambda_\ell+\lambda)} \leq 1$.} In this case we get $\phi(b,\ell) <2$. After $t=O(\frac{n}{\ell\epsilon})$ iterations we get
$\E\| \rmproj_\ell(\hat m_t) - \rmproj_\ell(m_n)\|^2_\Hc \leq \epsilon\| \rmproj_\ell( m_n)\|^2_\Hc.$

\vspace{1mm}

\noindent \emph{Case 2: $\frac{n\lambda}{b(\lambda_\ell+\lambda)} > 1$. }
We get $\lambda > \frac{b\lambda_\ell}{n}$. In this case we have $\phi(b,n) = \frac{1}{b}\sum_{i>b}\frac{\lambda_i+\lambda}{\lambda_n+\lambda} < \frac{n}{b}+ \frac{1}{b}\sum_{i>b}\frac{\lambda_i}{\lambda} < \frac{2n}{b}$. 
Therefore, after $t = \bigO \left( \frac{n \log \left( \frac{n (1 + \lambda / \lambda_\ell)}{\ell \epsilon} \right)}{b} \right)$ iterations, we get $\E\| \rmproj_\ell(\hat m_t) - \rmproj_\ell(m_n)\|^2_\Hc \leq \epsilon\| \rmproj_\ell(m_n)\|^2_\Hc.$
\vspace{1mm}

\noindent On the other hand, in either case 
\begin{align*}
    \phi(b,n) & = \frac{1}{b}\sum_{i>b}\frac{\lambda_i+\lambda}{\lambda_n+\lambda} 
    \leq \frac{n}{b}\left(1+\frac{(\lambda_\ell+\lambda)}{(\lambda_n+\lambda)}\frac{1}{n}\sum_{i>b}\frac{\lambda_i}{\lambda_\ell+\lambda}\right) 
     < \frac{n}{b}\left(1+\frac{b}{n}\frac{\lambda_\ell+\lambda}{\lambda_n+\lambda}\right), 
\end{align*}
as we assumed $b>\sum_{i>b}\frac{\lambda_i}{\lambda_\ell+\lambda}$.
Furthermore, for given spectral decay for any $\beta>1$, we have $\ell + \sum_{i>b}\frac{\lambda_i}{\lambda_\ell} < 2\ell$. 
This implies for $b = 2 \ell$ we obtain $\epsilon$ accuracy result in 
$t=\bigO\left(\frac{n}{\ell}\min\left\{\frac{\log(n/\ell)}{\epsilon}, \left(1+\frac{\ell}{n}\frac{\lambda_\ell+\lambda}{\lambda_n+\lambda}\right)\log(n/\ell\epsilon)\right\}\right)$, where we used $\lambda = \bigO(1)$ to get $\log \left( \frac{n}{\ell}(1+\lambda/\lambda_\ell) \right) = \bigO \left( \log(\frac{n}{\ell}) \right)$. 
Then, after applying Markov's inequality, $\hat m= \hat m_t$ obtains the desired $\epsilon$ accuracy with probability 0.999. 
The total time complexity follows by combining the number of iterations with cost of approximate sampling from $2b$-DPP($\nA$) (\Cref{lemma:dpp_sampling}) and additional per iteration cost of $\bigO(n \ell + \ell^3 \log^3n)$. 
Note that while the sampling algorithm is not exact, 
taking the union bound with respect to the high probability guarantees in \Cref{lemma:dpp_sampling} ensures that all of the samples are indistinguishable from true DPP with probability 0.999. Finally, union bounding over the three 0.999 probability events we have invoked concludes the proof. 
\end{proof}

\paragraph{Improved guarantee for very ill-conditioned problems} Here, we show that when $\lambda$ is very small (i.e., $K_\lambda$ is highly ill-conditioned), then we can obtain an even better time complexity in the first phase of the convergence, addressing the claim in Remark \ref{remark:main}.
 \begin{corollary}\label{corollary:fast_convg_gp} 
 Suppose that the matrix $K$ exhibits polynomial spectral decay, i.e., $\lambda_i(K) = \Theta(i^{-\beta})$ for some $\beta > 1$ and $\lambda<\frac{1}{nb^{\beta-1}}$, 
 then we can find $\hat m$ that with probability at least $0.99$ satisfies $\|\mathrm{proj}_\ell(\hat m) - \mathrm{proj}_\ell(m_n)\|_{\Hc}^2 \leq \epsilon  \|\mathrm{proj}_\ell(m_n)\|_{\Hc}^2$ in
 \begin{align*}
       \bigO\left(nb^2\log^4n + (n^2+nb^2\log^3n)(\ell/b)^{\beta-1}/\epsilon\right).
   \end{align*}
\end{corollary}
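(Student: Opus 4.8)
The plan is to reuse the scaffolding of the proof of \cref{c1:fast_convg_gp}, restricting attention to the sublinear (first-phase) branch of \cref{t:fast_convg_gp} and exploiting the smallness of $\lambda$ to sharpen the iteration count. First I would invoke \cref{t:fast_convg_gp} and discard the linear branch, keeping
\[
\E\|\rmproj_\ell(\hat m_t)-\rmproj_\ell(m_n)\|_\Hc^2 \le \frac{8\phi(b,\ell)}{t}\,\|y\|_{K_\lambda^{-1}}^2 .
\]
Exactly as in \cref{c1:fast_convg_gp}, I would convert the right-hand side into a bound relative to $\|\rmproj_\ell(m_n)\|_\Hc^2$ using two ingredients established there: Gaussian concentration for $y\sim\Nc(0,K_\lambda)$ gives $\|y\|_{K_\lambda^{-1}}^2=\|K_\lambda^{1/2}w_\star\|^2\le \bigO(1)\tfrac n\ell\,\|K_\lambda^{1/2}w_\star\|_{Q_\ell}^2$ with probability $0.999$, and $\|K_\lambda^{1/2}w_\star\|_{Q_\ell}^2\le (1+\lambda/\lambda_\ell)\,\|\rmproj_\ell(m_n)\|_\Hc^2$. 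These combine to give
\[
\E\|\rmproj_\ell(\hat m_t)-\rmproj_\ell(m_n)\|_\Hc^2 \le \bigO(1)\,\frac{n(1+\lambda/\lambda_\ell)}{\ell}\,\frac{\phi(b,\ell)}{t}\,\|\rmproj_\ell(m_n)\|_\Hc^2,
\]
so that $t=\bigO\!\big(\tfrac{n(1+\lambda/\lambda_\ell)}{\ell}\,\phi(b,\ell)/\epsilon\big)$ iterations suffice.

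The crux is to estimate $\phi(b,\ell)$ under the decay $\lambda_i(K)=\Theta(i^{-\beta})$ and the hypothesis $\lambda<\tfrac1{nb^{\beta-1}}$. I would split the numerator as $\sum_{i>b}(\lambda_i+\lambda)=\sum_{i>b}\lambda_i+(n-b)\lambda$; since $\beta>1$ the tail satisfies $\sum_{i>b}\lambda_i=\Theta(b^{1-\beta})$, while the hypothesis forces $n\lambda<b^{1-\beta}$, so the regularization term is absorbed and $\sum_{i>b}(\lambda_i+\lambda)=\Theta(b^{1-\beta})$. For the denominator, the same hypothesis yields $\lambda<\lambda_\ell$ whenever $\ell<b<n$ (because $\ell^\beta<b^\beta<nb^{\beta-1}$), so $\lambda_\ell+\lambda=\Theta(\ell^{-\beta})$ and $1+\lambda/\lambda_\ell=\bigO(1)$. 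Hence
\[
\phi(b,\ell)=\frac1b\cdot\frac{\Theta(b^{1-\beta})}{\Theta(\ell^{-\beta})}=\Theta\!\big((\ell/b)^{\beta}\big),
\]
and the iteration count collapses to $t=\bigO\!\big(\tfrac n\ell(\ell/b)^\beta/\epsilon\big)=\bigO\!\big(\tfrac nb(\ell/b)^{\beta-1}/\epsilon\big)$.

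Finally I would assemble the runtime. Each iteration costs $\bigO(nb)$ for the product $\Kbn w_t$ plus $\bigO(b^3\log^3 n)$ to draw one near-exact sample from $2b$-DPP($K_\lambda$), and the one-time preprocessing is $\bigO(nb^2\log^4 n)$ (\cref{lemma:dpp_sampling}). Multiplying the per-iteration cost by $t$ and adding preprocessing gives
\[
t\cdot\bigO(nb+b^3\log^3 n)+\bigO(nb^2\log^4 n)=\bigO\!\Big(nb^2\log^4 n+(n^2+nb^2\log^3 n)(\ell/b)^{\beta-1}/\epsilon\Big),
\]
which is the claimed bound. To upgrade the expectation bound to the stated $0.99$-probability accuracy guarantee, I would apply Markov's inequality and then union-bound over the Gaussian-concentration event, the Markov event, and the high-probability indistinguishability of all $t$ DPP samples from true fixed-size DPP samples (\cref{lemma:dpp_sampling}).

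The step I expect to be the main obstacle is the estimate of $\phi(b,\ell)$: the entire improvement rests on showing that the additive regularization contributes $n\lambda$ to the tail sum, which the hypothesis $\lambda<1/(nb^{\beta-1})$ makes negligible next to $\sum_{i>b}\lambda_i=\Theta(b^{1-\beta})$, while simultaneously $\lambda$ stays below $\lambda_\ell$ so that the denominator is unperturbed. Verifying both comparisons cleanly across the admissible range $\ell<b<n/2$ is the delicate part; once $\phi(b,\ell)=\Theta((\ell/b)^\beta)$ is secured, the remaining accounting of per-iteration and preprocessing costs is routine.
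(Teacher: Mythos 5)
Your proposal is correct and follows essentially the same route as the paper's proof: it reuses Case~1 of the proof of \cref{c1:fast_convg_gp} (sublinear branch of \cref{t:fast_convg_gp} plus Gaussian concentration and the norm comparison), sharpens it by showing that $\lambda < \frac{1}{nb^{\beta-1}}$ forces $\phi(b,\ell) = \bigO\left((\ell/b)^{\beta}\right)$ and $1+\lambda/\lambda_\ell = \bigO(1)$, and then multiplies the resulting iteration count $t = \bigO\left(\tfrac{n}{b}(\ell/b)^{\beta-1}/\epsilon\right)$ by the per-iteration cost $\bigO(nb + b^3\log^3 n)$ and adds the DPP preprocessing cost. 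The paper states these steps only in sketch form, so your writeup is a faithful, more detailed rendering of the same argument.
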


\begin{proof}
   We reconsider case 1 from the proof of \cref{c1:fast_convg_gp}. 
   Using the given spectral decay profile for $\nK$ and that $\lambda < \frac{1}{nb^{\beta-1}}$, 
   we get $ \phi(b,\ell) = \bigO\left(\frac{b^{-\beta}}{\ell^{-\beta}}\right)$. 
   Therefore after $t=\frac{n}{\ell\epsilon}\frac{b^{-\beta}}{\ell^{-\beta}}$ we obtain the $\epsilon$ approximation guarantee. 
   The total runtime complexity becomes:
   \begin{align*}
       \bigO\left(nb^2\log^4n + (n^2+nb^2\log^3n)(\ell/b)^{\beta-1}/\epsilon\right).
   \end{align*}

\end{proof}

\section{Additional Algorithmic Details for \adasap{}}
\label{sec:add_alg_deets}
We provide additional implementation details for \adasap{}.
\cref{subsec:dist_mat_mat} describes how we distribute matrix-matrix products across rows and columns in the algorithms \algcoldist{} and \algrowdist{}.
\cref{subsec:nystrom} describes the practical implementation of the randomized \nys{} approximation and provides pseudocode for \algnys{}.
\cref{subsec:get_l} describes how we compute preconditioned smoothness constants and provides pseuodocode for \algstpsz{}.
\cref{subsec:nest_acc_appdx} provides pseudocode for Nesterov acceleration in \algnestacc{}.
\cref{subsec:adasap_parallel_scaling} provides a scaling plot illustrating the speedups achieved by using multiple GPUs in \adasap{}.
\cref{subsec:tail_avg_appdx} investigates the impact of tail averaging on the performance of \adasap{}.

All operations involving kernel matrices are performed using pykeops \citep{charlier2021kernel}, which allows us to avoid instantiating kernel matrices explicitly in memory.
To see the full details of our implementation, we recommend the reader to view our 
\ifarxivpreprint
    \href{\codeurlpublic}{codebase}.
\else
    \href{\codeurlanon}{codebase}.
\fi

\subsection{Distributed matrix-matrix products}
\label{subsec:dist_mat_mat}
Here, we provide details for how we implement the distributed matrix-matrix products in \adasap{}.
\algcoldist{} (\cref{alg:col_dist_mat_mat}) shows how we distribute the matrix-matrix product $\Kbn W_t$ in \adasap{} and \algrowdist{} (\cref{alg:row_dist_mat_mat}) shows how we distribute the calculation of the \nys{} sketch in \adasap{}.
Our implementation of these algorithms uses torch.multiprocessing to spawn a CUDA context on each device (i.e., a worker) and uses pykeops to generate the column and row block oracles.

\begin{algorithm}
    \caption{\algcoldist}
    \label{alg:col_dist_mat_mat}
    \begin{algorithmic}
    \Require Right-hand side matrix $W \in \R^{n \times \nrhs}$, row indices $\B$, workers $\{\W_1, \ldots, \W_{\nworks}\}$
    \State Partition rows of $W$ as $\{W_1, \ldots, W_\nworks\}$
    \State Send $W_i$ to $\W_i$
    \State Generate column block oracle $\Kcol_{\W_{i}}$ using $\B$
    \State $(\Kbn W)_i \gets \Kcol_{\W_{i}}[W_i]$ \Comment{Compute column block products in parallel}
    \State Aggregate $\Kbn W \gets \sum_{i = 1}^{\nworks} (\Kbn W)_i$
    \State \Return $\Kbn W$
    \end{algorithmic}
\end{algorithm}

\begin{algorithm}
    \caption{\algrowdist}
    \label{alg:row_dist_mat_mat}
    \begin{algorithmic}
    \Require Right-hand side matrix $\Omega \in \R^{n \times r}$, row indices $\B$, workers $\{\W_1, \ldots, \W_{\nworks}\}$
    \State Send $\Omega$ to each $\W_i$
    \State Generate row block oracle $\Krow_{\W_{i}}$ using $\B$
    \State $(\Kbb \Omega)_i \gets \Krow_{\W_{i}}[\Omega]$ \Comment{Compute row block products in parallel}
    \State Aggregate $\Kbb \Omega \gets \left[(\Kbb \Omega)_1^T~ \ldots ~ (\Kbb \Omega)_\nworks^T \right]^T$
    \State \Return $\Kbb \Omega$
    \end{algorithmic}
\end{algorithm}

\subsection{Randomized \nys{} approximation}
\label{subsec:nystrom}
Here, we present a practical implementation of the \nys{} approximation used in \adasap{} (\cref{alg:adasap}) in \algnys{} (\cref{alg:nystrom}).
The Randomized Nystr{\"o}m approximation of $\Kbb$ with test matrix $\Omega \in \R^{b\times r}$ \citep{tropp2017fixed} is given by:
\[
\Knys = (\Kbb \Omega)(\Omega^{T}\Kbb \Omega)^{\dagger}(\Kbb\Omega)^{T}.
\]
The preceding formula is numerically unstable, so \adasap{} uses \algnys{}, which is based on \citet[Algorithm 3]{tropp2017fixed}. 
$\text{eps}(x)$ is defined as the positive distance between $x$ and the next largest floating point number of the same precision as $x$. 
The resulting Nystr\"{o}m approximation $\hat{M}$ is given by $USU^T$, where $U \in \R^{p \times r}$ is an orthogonal matrix that contains the approximate top-$r$ eigenvectors of $M$, and $S \in \R^r$ contains the top-$r$ eigenvalues of $M$.
The \nys{} approximation is positive-semidefinite but may have eigenvalues that are equal to $0$.
In our algorithms, this approximation is always used in conjunction with a regularizer to ensure positive definiteness.

\begin{algorithm}
    \caption{\algnys{}}
    \label{alg:nystrom}
    \begin{algorithmic}
        \Require sketch $M\Omega \in \R^{p\times r}$, sketching matrix $\Omega \in \R^{p\times r}$, approximation rank $r \leq p$
        \State $\Delta \gets \mathrm{eps}(\Omega^{T}M\Omega.\mathrm{dtype}) \cdot \Tr(\Omega^{T}M\Omega)$ 
        \hfill \Comment{Compute shift for stability}
        \State $C \gets \text{chol}(\Omega^TM\Omega + \Delta \Omega^T \Omega)$ 
        \hfill \Comment{Cholesky decomposition: $C^{T}C = \Omega^{T}M\Omega + \Delta \Omega^T \Omega$}
        \State $B \gets YC^{-1}$ 
        \hfill \Comment{Triangular solve}
        \State $[U, \Sigma, \sim] \gets \text{svd}(B, 0)$ 
        \hfill \Comment{Thin SVD}
        \State $S \gets \text{max}\{0, \diag(S^2 - \Delta I)\}$ 
        \hfill \Comment{Compute eigs, and remove shift with element-wise max}
        \State \Return $U, S$
    \end{algorithmic}
\end{algorithm}

The dominant cost in \cref{alg:nystrom} is computing the SVD of $B$ at a cost of $\bigO(p r^2)$.
This is the source of the $\bigO(pr^2)$ cost in Phase III of \adasap{}. 

\subsubsection{Applying the \nys{} approximation to a vector}
In our algorithms, we 
often perform computations of the form $(\hat{M} + \rho I)^{-1} g = (U S U^T + \rho I)^{-1} g$.
This computation can be performed in $\bigO(rp)$ time using the Woodbury formula \citep{higham2002accuracy}:
\begin{align}
\label{eq:inv_nys_woodbury}
        (U S U^{T} + \rho I)^{-1} g
        &= U \left(S + \rho I \right)^{-1} U^{T} g + \frac{1}{\rho} (g - U U^{T} g).
\end{align}

We also use the randomized \nys{} approximation to compute preconditioned smoothness constants in \algstpsz{} (\cref{alg:get_stpsz}).
This computation requires the calculation $(P + \rho I)^{-1/2} v$ for some $v \in \R^p$, which can also be performed in $\bigO(pr)$ time using the Woodbury formula:
\begin{align}
\label{eq:inv_sqrt_nys_woodbury}
        (U S U^T + \rho I)^{-1/2} v = U \left(S + \rho I \right)^{-1/2} U^{T} v + \frac{1}{\sqrt{\rho}} (v - U U^{T} v).
\end{align}

In single precision, \eqref{eq:inv_nys_woodbury} is unreliable for computing $(P + \rho I)^{-1} g$. 
This instability arises due to roundoff error: 
the derivation of \eqref{eq:inv_nys_woodbury} assumes that $\hat U^T \hat U = I$, but we have empirically observed that orthogonality does not hold in single precision.
To improve stability, we compute a Cholesky decomposition $LL^T$ of $\rho S^{-1} + U^T U$, which takes $\bigO(pr^2)$ time since we form $U^T U$.
Using the Woodbury formula and Cholesky factors, 
\begin{align*}
   (U S U^T + \rho I)^{-1} g &= \frac{1}{\rho} g - \frac{1}{\rho} 
   U (\rho S^{-1} + U^T U)^{-1} U^T g  \\
   &= \frac{1}{\rho} g - \frac{1}{\rho} U L^{-T} L^{-1} U^T g.
\end{align*}
This computation can be performed in $\bigO(pr)$ time, since the $\bigO(r^2)$ cost of triangular solves with $L^T$ and $L$ is negligible compared to the $\bigO(pr)$ cost of multiplication with $U^T$ and $U$.

Unlike \cref{eq:inv_nys_woodbury}, we find using \eqref{eq:inv_sqrt_nys_woodbury} in \algstpsz{} yields excellent performance, i.e., we do not need to perform any additional stabilization.

\subsection{Computing the stepsize}
\label{subsec:get_l}
Here, we provide the details of the \algstpsz{} procedure in \adasap{} (\cref{alg:adasap}) for automatically computing the stepsize.
Our procedure is inspired by \cite{rathore2025have}, who show that approximate \sap{} with the Nystr{\"o}m approximation converges when
\[
\eta_\B = 1/\lambda_1\left((P_\B+\rho I)^{-1/2}(\Kbb + \lambda I)(P_\B+\rho I)^{-1/2}\right).
\]
That is, the correct stepsize to use is the reciprocal of the ``preconditioned subspace smoothness constant''.

To compute $\lambda_1\left((P_\B+\rho I)^{-1/2}(\Kbb + \lambda I)(P_\B+\rho I)^{-1/2}\right)$, \algstpsz{} uses randomized powering \citep{kuczynski1992estimating}.
This technique has been used in several previous works on preconditioned optimization to great effect \citep{frangella2024promise, frangella2024sketchysgd, rathore2025have}.
Given a symmetric matrix $H$, preconditioner $P$, and damping $\rho$,
randomized powering computes  
\[
\lambda_1((P + \rho I)^{-1/2} H (P + \rho I)^{-1/2}),
\]
using only matrix-vector products with the matrices $H$ and $(P + \rho I)^{-1/2}$. 
When $P$ is calculated using \algnys{}, \algstpsz{} can efficiently compute a matrix-vector product with $(P + \rho I)^{-1/2}$ using \eqref{eq:inv_sqrt_nys_woodbury}.
In practice, we find that 10 iterations of randomized powering are sufficient for estimating the preconditioned smoothness constant.
The pseudocode for randomized powering, based on the presentation in \cite{martinsson2020randomized}, is shown in \cref{alg:get_stpsz}.
Since \adasap{} only runs \algstpsz{} for $10$ iterations, the total cost of the procedure is $\bigO(b^2)$, which makes it the source of the $\bigO(b^2)$ cost of Phase III in \cref{alg:adasap}.

\begin{algorithm}
    \caption{\algstpsz{}}
    \label{alg:get_stpsz}
    \begin{algorithmic}
        \Require symmetric matrix $H$, preconditioner $P$, damping $\rho$, maximum iterations $N \gets 10$
        \State $v_0 \gets \mathrm{randn}(P.\mathrm{shape}[0])$
        \State $v_0 \gets v_0 / \|v_0\|_2$ 
        \hfill \Comment{Normalize}
        \For{$i = 0, 1, \ldots, N - 1$}
        \State $v_{i + 1} \gets (P + \rho I)^{-1/2} v_i$
        \State $v_{i + 1} \gets H v_{i + 1}$
        \State $v_{i + 1} \gets (P + \rho I)^{-1/2} v_{i + 1}$
        \State $v_{i + 1} \gets v_{i + 1} / \|v_{i + 1}\|_2$ 
        \hfill \Comment{Normalize}
        \EndFor
        \State $\lambda \gets (v_{N - 1})^T v_N$
        \State \Return $1/\lambda$
    \end{algorithmic}
\end{algorithm}

\subsection{Nesterov acceleration}
\label{subsec:nest_acc_appdx}
We present pseudocode for Nesterov acceleration in \algnestacc{} (\cref{alg:nest_acc}).

\begin{algorithm}
    \caption{\algnestacc}
    \label{alg:nest_acc}
    \begin{algorithmic}
        \Require iterates $\iterw{t}{}$, $\iterv{t}{}$, $\iterz{t}{}$, search direction $\iterd{t}{}$, stepsize $\eta_\B$, acceleration parameters $\beta$, $\gamma$, $\alpha$
        \State $\iterw{t + 1}{} \gets \iterz{t}{} - \eta_\B \iterd{t}{}$
        \State $\iterv{t + 1}{} \gets \beta \iterv{t}{} + (1 - \beta) \iterz{t}{} - \gamma \eta_\B \iterd{t}{}$
        \State $\iterz{t + 1}{} \gets \alpha \iterv{t}{} + (1 - \alpha) \iterw{t + 1}{}$
        \State \Return $\iterw{t + 1}{}$, $\iterv{t + 1}{}$, $\iterz{t + 1}{}$
    \end{algorithmic}
\end{algorithm}

\subsection{Parallel scaling of \adasap{}}
\label{subsec:adasap_parallel_scaling}
Here we present \cref{fig:parallel_scaling_taxi}, which shows the parallel scaling of \adasap{} on the taxi dataset.

\begin{figure}
    \centering
    \includegraphics[width=0.5\linewidth]{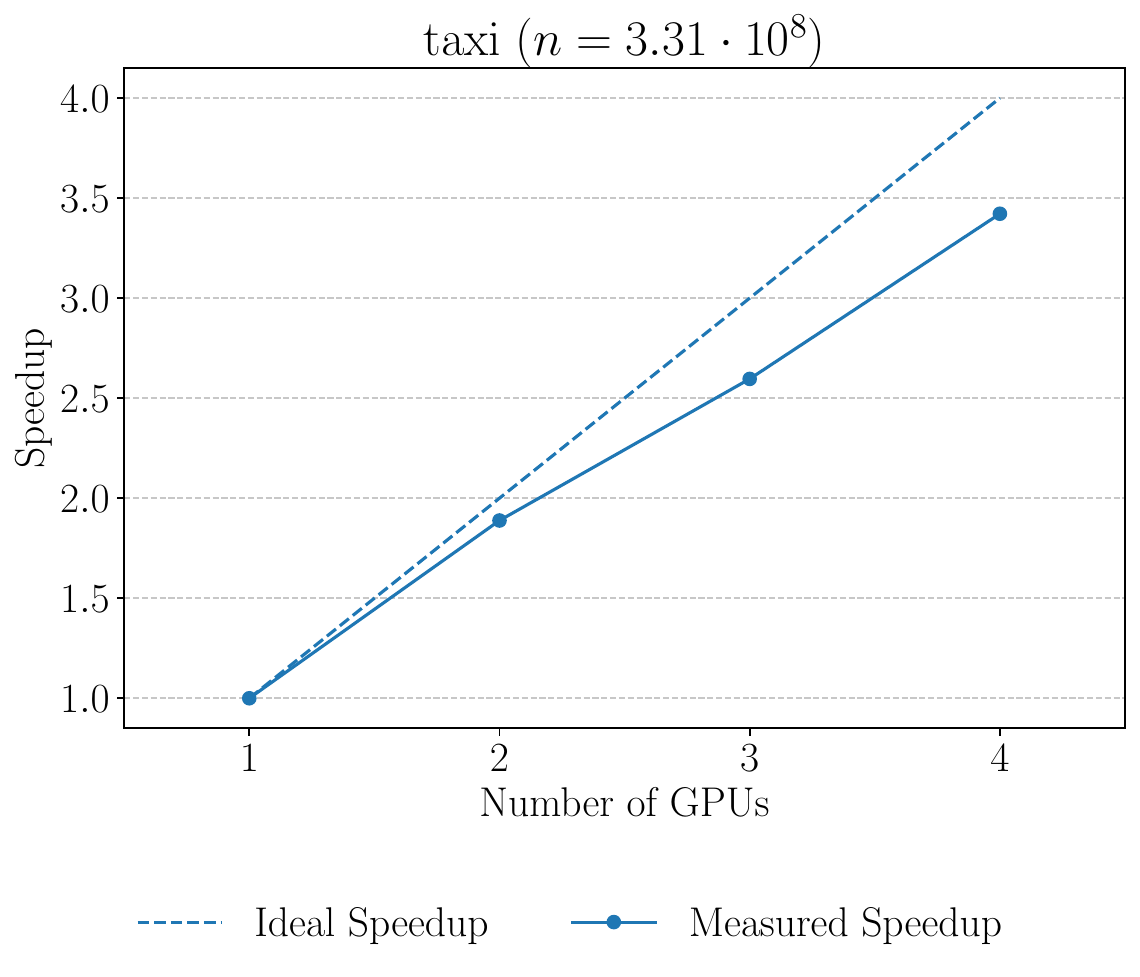}
    \caption{Multi-GPU scaling of \adasap{} on the taxi dataset. 
    \adasap{} obtains near-linear scaling with the number of GPUs.}
    \label{fig:parallel_scaling_taxi}
\end{figure}

\subsection{Impact of tail averaging on performance}
\label{subsec:tail_avg_appdx}
The theoretical results in \cref{sec:sap} require tail averaging.
Here we assess whether tail averaging leads to practical improvements in the convergence of sketch-and-project algorithms for kernel ridge regression.
To do so, we run a synthetic experiment using an RBF kernel with $n = 1000$ samples.

\cref{fig:subspace_convergence} displays the relative errors over the top-$\ell$ subspace, which are computed using the expression
\[
    \| \rmproj_\ell(\hat m) -  \rmproj_\ell(m_n)\|^2_{\Hc} / \| \rmproj_\ell(m_n)\|^2_{\Hc}.
\]
When $\ell = 10$ and the blocksize $b$ is small, tail averaging slightly improves the convergence rate.
However, when $\ell \in \{100, 1000\}$, tail averaging does not improve the convergence rate.
In fact, as the blocksize increases, tail averaging results in \textit{slower} convergence!

\begin{figure}
    \centering
    \includegraphics[width=\linewidth]{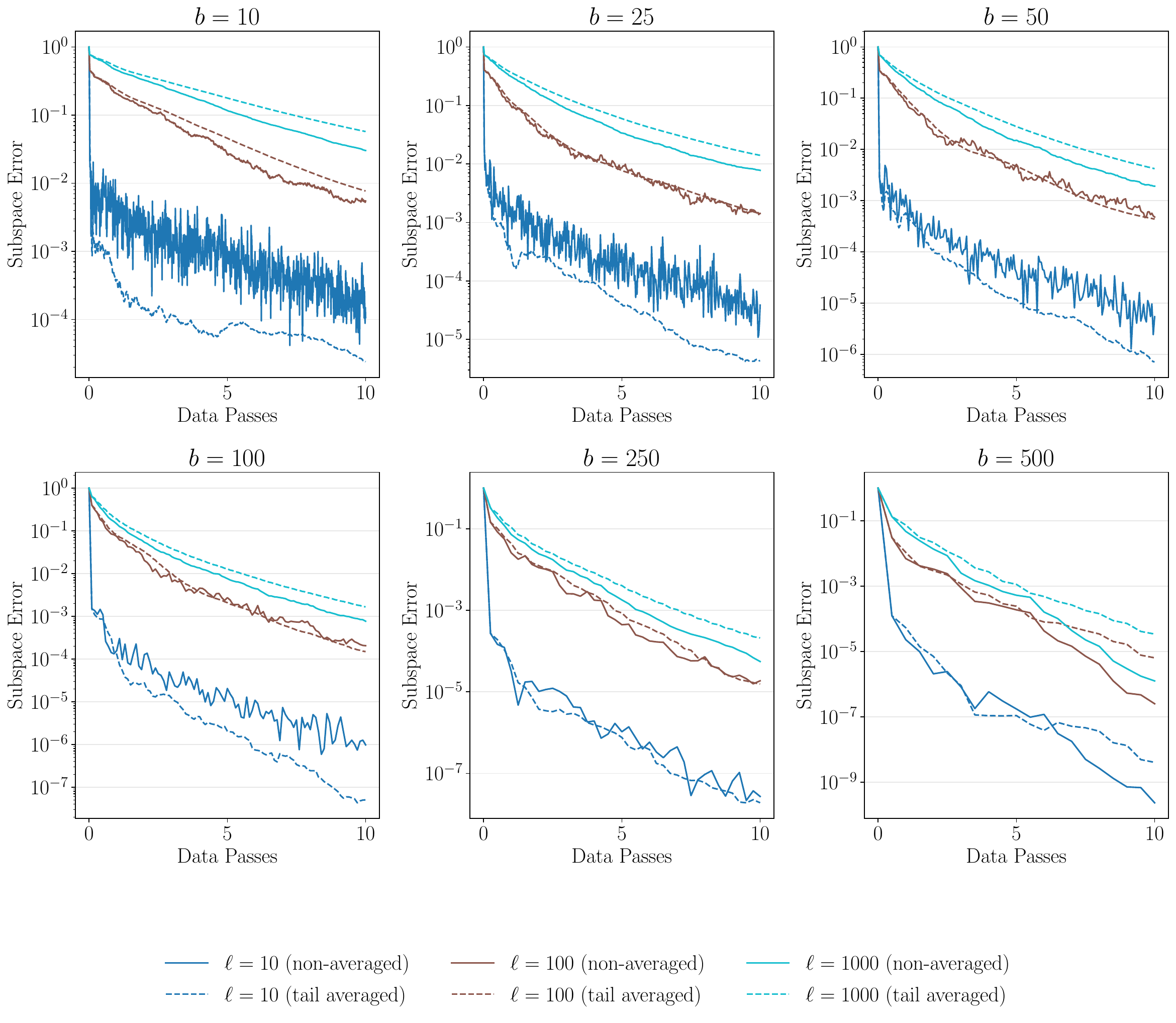}
    \caption{Performance of \adasap{} with and without tail averaging. 
    One ``data pass'' corresponds to one pass through the kernel matrix.
    Tail averaging does not improve convergence by a substantial margin.}
    \label{fig:subspace_convergence}
\end{figure}
\section{Additional Details for Experiments}
\label{sec:experiments_appdx}
Here we provide additional details for the experiments that are not provided in the main paper.

\subsection{Determining hyperparameters for regression}
We use a zero-mean prior for all datasets.
We train the kernel variance, likelihood variance, and lengthscale (we use a separate lengthscale for each dimension of $X$) using the procedure of \cite{lin2023sampling}, which we restate for completeness:
\begin{enumerate}
    \item Select a centroid point from the training data $X$ uniformly at random.
    \item Select the 10{,}000 points in the training data that are closest to the centroid in Euclidean norm.
    \item Find hyperparameters by maximizing the exact GP likelihood over this subset of training points.
    \item Repeat the previous three steps for 10 centroids and average the resulting hyperparameters.
\end{enumerate}

\subsection{Optimizer hyperparameters}
We present the hyperparameters for \adasap{}, \adasapi{}, \pcg{}, and \sdd{} that were not described in the main paper.

For GP inference on large-scale datasets, we use blocksize $b = n / 100$ in \adasap{}, \adasapi{}, and \sdd{}; blocksize $b = n / 2{,}000$ for transporation data analysis, and $b = n / 5$ for Bayesian optimization,

We set the rank $r = 100$ for both \adasap{} and \pcg{}.

Similar to \cite{lin2024stochastic}, we set the stepsize in \sdd{} to be one of $\{1/n, 10/n, 100/n\}$ (this grid corresponds to \sdd-1, \sdd-10, and \sdd-100), the momentum to $0.9$, and the averaging parameter to $100 / T_{\max}$.

\subsection{Additional details for GP inference experiments}
song and houseelec are from the UCI repository, yolanda and acsincome are from OpenML, and benzene and malonaledehyde are from sGDML \citep{chimela2017machine}.
We select the kernel function $k$ for each dataset based on previous work \citep{lin2023sampling,epperly2024embrace,rathore2025have}.

We standardize both the features and targets for each dataset.
For fairness, we run all methods for an equal amount of \textit{passes} through each dataset: we use 50 passes for yolanda, song, benzene, and malonaldehyde and 20 passes for acsincome and houseelec.

We use pathwise conditioning with 2{,}048 random features to (approximately) sample from the GP posterior.

\subsection{Additional details for transporation data analysis}
We standardize both the features and targets and use a RBF kernel.
Due to computational constraints, we use a single 99\%-1\% train-test split, and run each method for a single pass through the dataset.

\subsection{Additional details for Bayesian optimization}
Our implementation of Bayesian optimization largely mirrors that of \cite{lin2023sampling}.
We only present the high-level details here, and refer the reader to \cite{lin2023sampling} for the fine details of the implementation.

We draw the target functions $f: [0,1]^8 \rightarrow \R$ from a zero-mean GP prior with \mtrn-3/2 kernel using 5{,}000 random features.
At each iteration, we choose the acquisition points using parallel Thompson sampling \citep{hernandezlobato2017parallel}.
As part of this process, we use the multi-start gradient optimization maximization strategy given in \cite{lin2023sampling}.
At each step, we acquire 1{,}000 new points, which we use to evaluate the objective function.
Concretely, if $x_{\mathrm{new}}$ is an acquired point, we compute $y_{\mathrm{new}} = f(x_{\mathrm{new}}) + \epsilon$, where $\epsilon \sim \Nc(0, 10^{-6})$.
We then add $(x_{\mathrm{new}}, y_{\mathrm{new}})$ to the training data for the next step of optimization.
In our experiments, we initialize all methods with a dataset consisting of 250{,}000 observations sampled uniformly at random from $[0,1]^8$.


\subsection{Additional timing plots for \cref{subsec:gp_inf}}
Here we present timing plots for the datasets used in \cref{subsec:gp_inf} that were not shown in the main paper.
\cref{fig:all_appendix_time} shows that all the methods (except \pcg{}) perform similarly on both test RMSE and test mean NLL.
However, \cref{fig:all_appendix_time_train_rmse} shows that \adasap{} and \pcg{} attain a much lower train RMSE than the competitors.
This suggests that the similar performance of the methods on test RMSE and test mean NLL is not due to differences in optimization, but rather, it is because the datasets are not well-modeled by Gaussian processes.

\begin{figure}
    \centering
    \includegraphics[width=\linewidth]{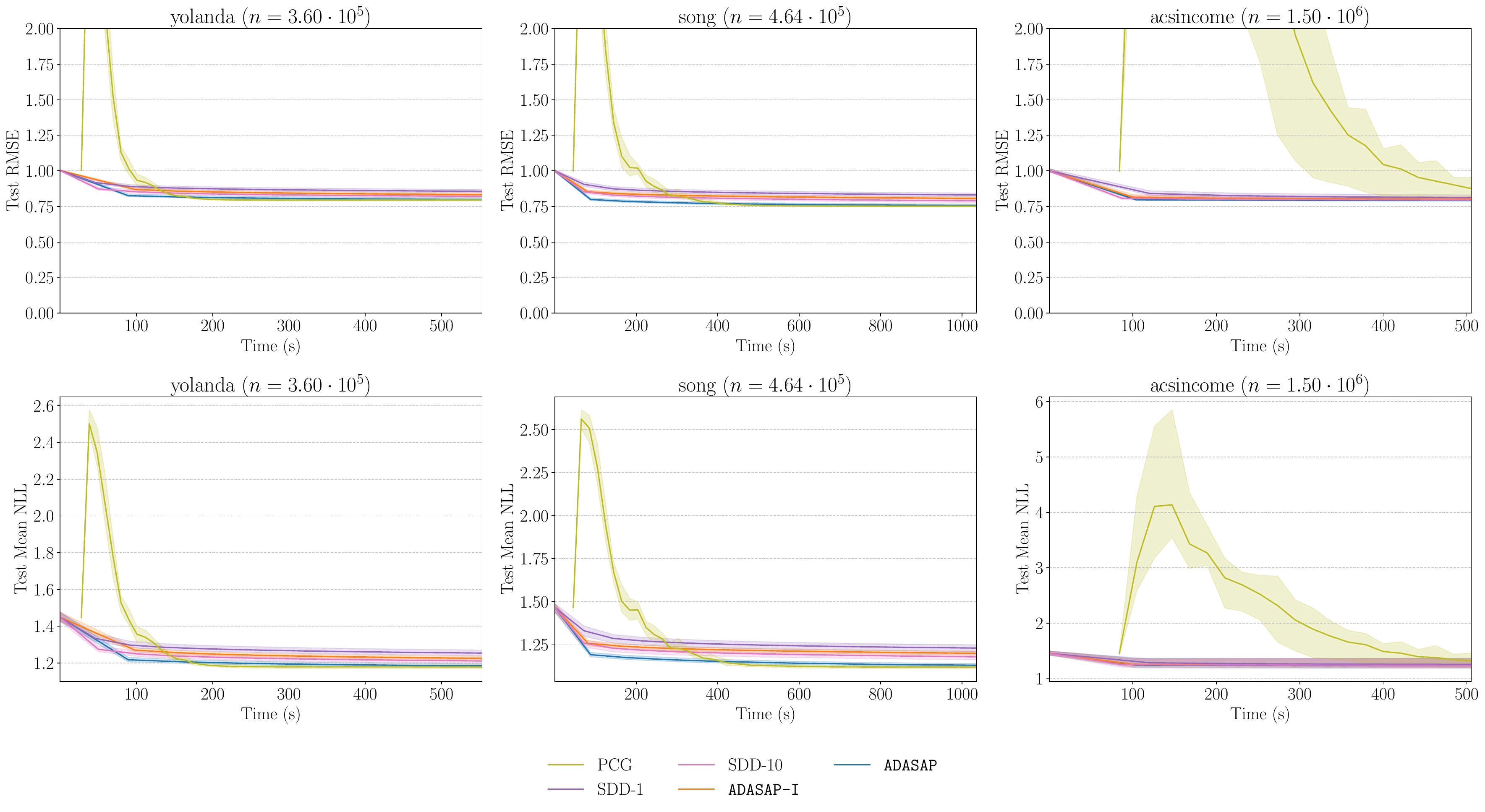}
    \caption{Performance of \adasap{} and competitors on RMSE and mean NLL, as a function of time, for benzene, malonaldehyde, and houseelec.
    The solid curve indicates mean performance over random splits of the data;
    the shaded regions indicate the range between the worst and best performance over random splits of the data.
    \adasap{} performs similar to the competition.}
    \label{fig:all_appendix_time}
\end{figure}

\begin{figure}
    \centering
    \includegraphics[width=\linewidth]{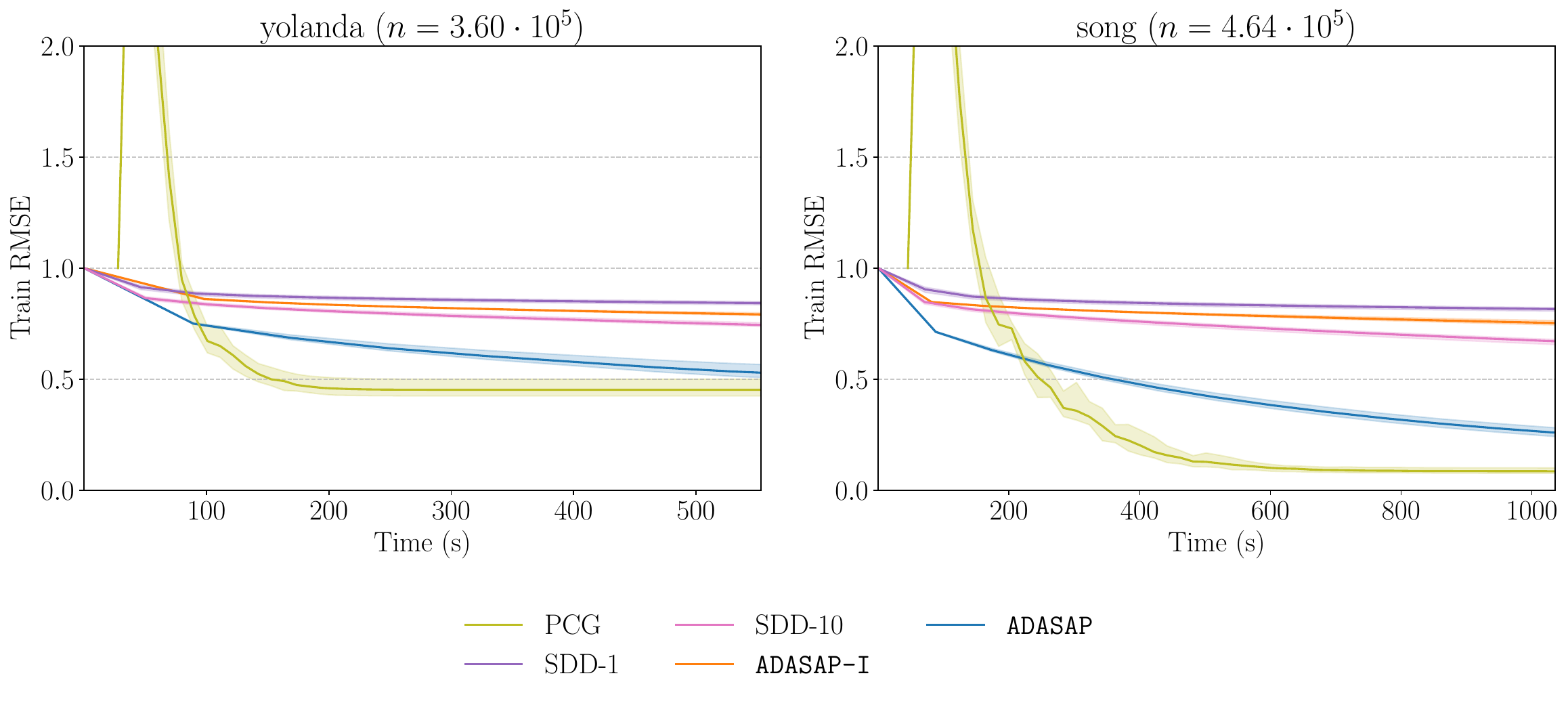}
    \caption{Performance of \adasap{} and competitors on train RMSE, as a function of time, for yolanda and song.
    The solid curve indicates mean performance over random splits of the data;
    the shaded regions indicate the range between the worst and best performance over random splits of the data.}
    \label{fig:all_appendix_time_train_rmse}
\end{figure}

\ifarxivpreprint
\else
\fi



\end{document}